\newtheorem{theorem}{Theorem}
\newtheorem{lemma}[theorem]{Lemma}
\newtheorem{claim}[theorem]{Claim}
\begin{document}

%

%

\twocolumn[

\aistatstitle{Best Arm Identification with Resource Constraints}

\aistatsauthor{ Zitian Li \And Wang Chi Cheung}

\aistatsaddress{ Department of ISEM, \\ National University of Singapore \And  Department of ISEM, \\ National University of Singapore} ]

\begin{abstract}
  Motivated by the cost heterogeneity in experimentation across different alternatives, we study the Best Arm Identification with Resource Constraints (BAIwRC) problem. The agent aims to identify the best arm under resource constraints, where resources are consumed for each arm pull. We make two novel contributions. We design and analyze the Successive Halving with Resource Rationing algorithm (SH-RR). The SH-RR achieves a near-optimal non-asymptotic rate of convergence in terms of the probability of successively identifying an optimal arm. Interestingly, we identify a difference in convergence rates between the cases of deterministic and stochastic resource consumption.
\end{abstract}

\section{Introduction}
\label{Introduction}
Best arm identification (BAI) is a fundamental multi-armed bandit formulation on pure exploration. The over-arching goal is to identify an optimal arm through a sequence of adaptive arm pulls. The efficiency of the underlying strategy is typically quantified by the number of arm pulls. On one hand, the number of arm pulls provides us \emph{statistical insights} into the strategy's performance. On the other hand, the number of arm pulls does not necessarily provide us with the \emph{economic insights} into the total cost of the arm pulls in the scenario of \emph{arm cost heterogeneity}, where the costs of arm pulls differ among different arms. 

Arm cost heterogeneity occurs in a variety of applications. For example, consider a retail firm experimenting two marketing campaigns: (a) advertising on an online platform for a day, (b) providing \$5 vouchers to a selected group of recurring customers. The firm wishes to identify the more profitable one out of campaigns (a, b). The executions of (a, b) lead to different costs. A cost-aware retail firm would desire to control the total cost in experimenting these campaign choices, rather than the number of try-outs. Arm cost heterogeneity also occurs in many other operations research applications. Process design decisions in business domains such as supply chain, service operations, pharmaceutical tests typically involve experimenting a collection of alternatives and identifying the best one in terms of profit, social welfare or any desired metric. Compared to the total number of try-outs, it is more natural to keep the total cost in experimentation in check. What is the relationship between the total arm pulling cost and the probability of identifying the best arm?

We make three contributions to shed light on the above question. Firstly,  we construct the Best Arm Identification with Resource Constraints (BAIwRC) problem model, which features arm cost heterogeneity in a fundamental pure exploration setting. In the BAIwRC, pulling an arm generates a random reward, while consuming resources. The agent, who is endowed with finite amounts of resources, aims to identify an arm of highest mean reward, subject to the resource constraints. In the marketing example, the resource constraint could be the agent's financial budget for experimenting different campaign, and the agent's goal is to identify the most profitable campaign while not exceeding the budget. 

Secondly, we design and analyze the Successive Halving with Resource Rationing (SH-RR) algorithm. SH-RR eliminates sub-optimal arms in phases, and rations an adequate amount of resources to each phase to ensure sufficient exploration in all phases. We derive upper bounds for SH-RR on its $\Pr(\text{fail BAI})$, the probability failing to identify a best arm. Our bounds decay exponentially to zero when the amounts of endowed resources increases. In addition, our bounds depends on the mean rewards and consumptions of the arms. Our bounds match the state-of-the-art when specialized to the fixed budget BAI setting. Crucially, we demonstrate the near-optimality of SH-RR by establishing lower bounds on $\Pr(\text{fail BAI})$ by any strategy. 

Thirdly, our results illustrate a fundamental difference between the deterministic and the stochastic consumption settings. Campaign (a) in the marketing example has a deterministic consumption, since the cost for advertisement is determined and fixed (by the ad platform) before (a) is executed. Campaign (b) has a random consumption, since the total cost is proportional to the number of recurring customers who redeem the vouchers, which is random. More precisely, for a BAIwRC instance $Q$, our results imply that $-\log(\Pr(\text{fail BAI}))$ under an optimal strategy is proportional to $\gamma^{\text{det}}(Q)$ or $\gamma^{\text{sto}}(Q)$, in the cases when $Q$ is a deterministic consumption instance or a stochastic consumption instance respectively. The complexity terms $\gamma^{\text{det}}(Q),\gamma^{\text{sto}}(Q)$ are defined in our forthcoming Section \ref{sec:main}, and $\gamma^{\text{det}}, \gamma^{\text{sto}}$ differ in how the resource consumption are encapsulated in their respective settings. 
Finally, our theoretical findings are corroborated with numerical simulations, which demonstrate the empirical competitiveness of SH-RR compared to existing baselines.

\textbf{Literature Review. } 
The BAI problem has been actively studied in the past decades, prominently under the two settings of fixed confidence and fixed budget. In the fixed confidence setting, the agent aims to minimize the number of arm pulls, while constraining $\Pr(\text{fail BAI})$ to be at most an input confidence parameter. In the fixed budget setting, the agent aims to minimize $\Pr(\text{fail BAI})$, subject to an upper bound on the number of arm pulls. The fixed confidence setting is studied in \citep{EvenMM02,MannorT04,audibert2010best,GabillonGL12,KarninKS13,JamiesonMNB14,KaufmanCG16,GarivierK16}, and surveyed in \citep{JamiesonN14}. The fixed budget setting is studied in \citep{GabillonGL12,KarninKS13,KaufmanCG16,CarpentierL16}. The BAI problem is also studied in the anytime setting \citep{audibert2010best,jun_anytime_2016}, where a BAI strategy is required to recommend an arm after each arm pull. A related objective to BAI is the minimization of simple regret, which is the expected optimality gap of the identified arm, is studied \cite{BubeckMS09,audibert2010best, ZhaoSSJ22}. Despite the volume of studies on pure exploration problems on multi-armed bandits, existing works focus on analyzing the total number of arm pulls. We provide a new perspective by considering the \emph{total cost of arm pulls}. 

BAI problems with constraints have been studied in various works. \cite{WangWJ22} consider a BAI objective where the identified arm must satisfy a safety constraint. \cite{HouTZ2022} consider a BAI objective where the identified arm must have a variance below a pre-specified threshold. Different from these works that impose constraints on the identified arm, we impose constraints on the exploration process. \cite{SuiGBK15,SuiZBY2018} study BAI problems in the Gaussian bandit setting, with the constraints that each sampled arm must lie in a latent safety set. Our BAI formulation is different in that we impose cumulative resource consumption constraints across all the arm pulls, rather than constraints on each individual pull. In addition, \cite{SuiGBK15,SuiZBY2018}  focus on the comparing against the best arm within a certain reachable arm subset, different from our objective of identifying the best arm out of all arms.

Our work is thematically related to the Bandits with Knapsack problem (BwK), where the agent aims to maximize the total reward instead of identifying the best arm under resource constraints. The BwK problem is proposed in \cite{BadanidiyuruKS18}, and an array of different BwK models have been studied \citep{AgrawalD14,AgrawalD16,SankararamanS18}. In the presence of resource constraints, achieving the optimum under the BwK objective does not lead to BAI. For example, in a BwK instance with single resource constraint, it is optimal to pull an arm with the highest mean reward per unit resource consumption, which is generally not an arm with the highest mean reward, when resource consumption amounts differ across arms. 
\cite{li2023optimal} propose a BAI problem with BwK setting and shares some settings with us, assuming multiple resources with random consumptions, a finite arm set. But their task is to identify the index set ${\cal X}^*$ of all optimal arms in an LP relaxation to a BwK problem. ${\cal X}^*$ depends on both the mean reward and mean consumption of the arms. The difference of the target marks a significant departure from our methodologies and results in the field.


Lastly, our work is related to the research on cost-aware Bayesian optimization (BO). In this area, an arm might correspond to a hyper-parameter or a combination of hyper-parameters. A widely adopted idea in the BO community is to set up different acquisition functions to guide the selection of sampling points. One of the most popular choices is Expected Improvement(EI) \citep{frazier2018tutorial}, without considering the heterogeneous resource consumptions like time or energy. To make EI cost-aware, which is correlated to our setting of a single resource, a common way is to divide it by an approximated cost function $c(x)$ \citep{snoek2012practical, poloczek2017multi, swersky2013multi}, calling it Expected Improvement per unit (EIpu). However, \cite{lee2020cost} shows this division may encourage the algorithm to explore domains with low consumption, leading to a worse performance when the optimal point consumes more resources. Then \cite{lee2020cost} designs the Cost Apportioned BO (CArBo) algorithm, whose acquisition function gradually evolves from EIpu to EI. For better performance, \cite{guinet2020pareto} develops Contextual EI to achieve Pareto Efficiency. \cite{abdolshah2019cost} discusses Pareto Front when there are multiple objective functions. \cite{luong2021adaptive} considers EI and EIpu as two arms in a multi-arm bandits problem, using Thompson Sampling to determine which acquisition function is suitable in each round. These works focus on minimizing the total cost, different from our resource constrained setting. In addition, we allow the resource consumption model to be unknown, random and heterogeneous among arms. In comparison, existing works either assume one unit of resource consumed per unit pulled \citep{jamieson2016non,li2020system,bohdal2022pasha,zappella2021resource}, or assume heterogeneity (and deterministic) resource consumption but with the resource consumption (or a good estimate of it) of each arm known \cite{snoek2012practical,ivkin2021cost,lee2020cost}. Some alternative cost-aware BO require the multi-fidelity or other grey-box assumption \citep{forrester2007multi, kandasamy2017multi, wu2020practical, foumani2023multi, belakaria2023bayesian}, which are not consistent with our settings.


\textbf{Notation.} For an integer $K>0$, denote $[K] = \{1, \ldots, K\}$. For $d\in [0, 1]$, we denote $\text{Bern}(d)$ as the Bernoulli distribution with mean $d$.

\section{Model}
\label{sec:model-problem_formulation}
An instance of Best Arm Identification with Resource Constraints (BAIwRC) is specified by the triple $Q = ([K], C ,\nu= \{\nu_k\}_{k\in [K]})$. The set $[K]$ represents the collection of $K$ arms. There are $L$ types of different resources. The quantity $C = (C_\ell)^L_{\ell=1}\in \mathbb{R}_{>0}^L$ is a vector, and $C_\ell$ is the amount of type $\ell$ resource units available to the agent. For each arm $k\in [K]$, $\nu_k$ is the probability distribution on the $(L+1)$-variate outcome $(R_k; D_{1,k}, \ldots, D_{L, k})$, which is received by the agent when s/he pulls arm $k$ once. By pulling arm $k$ once, the agent earns a random amount $R_k$ of reward, and consumes a random amount $D_{\ell, k}$ of the type-$\ell$ resource, for each $\ell\in\{1, \ldots, L\}$. We allow $R_k, D_{1, k}, \ldots, D_{L, k}$ to be arbitrarily correlated. We assume that $R_k$ is a 1-sub-Gaussian random variable, and $D_{\ell, k}\in [0, 1]$ almost surely for every $k\in [K], \ell\in [L]$. 

We denote the mean reward $\mathbb{E}[R_k] = r_k$ for each $k\in [K]$, and denote the mean consumption $\mathbb{E}[D_{\ell, k}] = d_{\ell, k}$ for each $\ell\in [L], k\in [K]$. Similar to existing works on BAI, we assume that there is a unique arm with the highest mean reward, and without loss of generality we assume that $r_1 > r_2 \geq \ldots, \geq r_K$. We call arm 1 the optimal arm. We emphasize that the mean consumption amounts $\{d_{\ell, k}\}^K_{k=1}$ on any resource $\ell$ need not be ordered in the same way as the mean rewards. We assume that $d_{\ell, k} > 0$ for all $k\in [K], \ell\in [L]$. Crucially, the quantities $r_k, d_{\ell, k}, \nu_k$ for any $k, \ell$ are not known to the agent. 

\textbf{Dynamics. }The agent pulls arms sequentially in time steps $t = 1, 2, \ldots$, according to a non-anticipatory policy $\pi$. We denote the arm pulled at time $t$ as $A(t)\in [K]$, and the corresponding outcome as $O(t) = (R(t); D_1(t), \ldots, D_L(t))\sim \nu_{A(t)}$. A non-anticipatory policy $\pi$ is represented by the sequence $\{\pi_t\}^{\infty}_{t=1}$, where $\pi_t$ is a function that outputs the arm $A(t)$ by inputting the information collected in time $1, \ldots, t-1$. More precisely, we have $A(t) = \pi_t(H(t-1))$, where $H(t-1) = \{O(s)\}^{t-1}_{s=1}$. The agent stops pulling arms at the end of time step $\tau$, where $\tau$ is a finite stopping time\footnote{For any $t$, the event $\{\tau = t\}$ is $\sigma(H(t))$-measurable, and $\Pr(\tau = \infty) = 0$} with respect to the filtration $\{\sigma(H(t))\}^\infty_{t=1}$. Upon stopping, the agent identifies arm $\psi\in [K] $ to be the best arm, using the information $H(\tau)$. Altogether, the agent's strategy is represented as $(\pi, \tau, \psi)$. 

\textbf{Objective.} The agent aims to choose a strategy $(\pi, \tau, \psi)$ to maximize $\Pr(\psi=1)$, the probability of BAI, subject to the resource constraint that $\sum^\tau_{t=1} D_\ell(t) \leq C_\ell$ holds for all $\ell\in [L]$ with certainty. We distinguish between two problem model settings, namely the \textbf{stochastic consumption setting} and the \textbf{deterministic consumption setting}. The former is precisely as described above, where we allow $\{D_{\ell, k}\}_{\ell, k}$ to be arbitrary random variables bounded between 0 and 1. The latter is a special case where $\Pr(D_{\ell, k} = d_{\ell, k})=1$ for all $\ell \in [L], k\in [K]$, meaning that all the resource consumption amounts are deterministic. In the special case when $L=1$ and $\Pr(D_{1, k}=1 \text{ for all $k\in [K]$}) = 1$, the deterministic consumption setting specializes to the fixed budget BAI problem. 

We focus on bounding the failure probability $\Pr(\text{fail BAI}) = \Pr(\psi\neq 1)$ in terms of the underlying parameters in $Q$. The forthcoming bounds are in the form of $\exp(-\gamma(Q))$, where $\gamma(Q) > 0$ can be understood as a complexity term that encodes the difficulty of the underlying BAIwRC instance $Q$. 
To illustrate, in the case of $L=1$, we aim to bound $\Pr(\psi\neq 1)$ in terms of $\exp(-C_1/ H)$, where $H > 0 $ depends on the latent mean rewards and resource consumption amounts. In the subsequent sections, we establish upper bounds on $\Pr(\psi\neq 1)$ for our proposed strategy SH-RR, as well as lower bounds on $\Pr(\psi\neq 1)$ for any feasible strategy. We demonstrate that the complexity term $\gamma(Q)$ crucially on if $Q$ has deterministic or stochastic consumption.

\section{The SH-RR Algorithm}
\label{sec:SH-RR_Algorithm}
Our proposed algorithm, dubbed Sequential Halving with Resource Rationing (SH-RR), is displayed in  Algorithm \ref{alg:Sequential-Halving}. SH-RR iterates in phases $q \in \{0, \ldots, \lceil \log_2 K\rceil\}$. Phase $q$ starts with a \emph{surviving arm set} $\tilde{S}^{(q)}\subseteq [K]$. After the arm pulling in phase $q$, a subset of arms in $\tilde{S}^{(q)}$ is eliminated, giving rise to  $\tilde{S}^{(q+1)}$. After the final phase, the surviving arm set  $\tilde{S}^{(\lceil \log_2 K\rceil)}$ is a singleton set, and its only constituent arm is recommended as the best arm. 
\begin{algorithm}[tb]
   \caption{Sequential Halving with Resource Rationing (SH-RR)}
   \label{alg:Sequential-Halving}
    \begin{algorithmic}[1]
        \State {\bfseries Input:} Total budget $C$, arm set $[K]$.
        \State {\bfseries Initialize} $\tilde{S}^{(0)} = [K]$, $t=1$.
        \State {\bfseries Initialize} $\textsf{Ration}^{(0)}_\ell = \frac{C_\ell}{\lceil\log_2 K\rceil}$ for each $\ell\in [L]$.
        \For{$q=0$ {\bfseries to} $\lceil\log_2 K\rceil-1$}
            \State {\bfseries Initialize} $I^{(q)}_\ell = 0$ $\forall\ell\in [L]$, $H^{(q)} =J^{(q)} = \emptyset$.
            \While{$I^{(q)}_\ell \leq \textsf{Ration}^{(q)}_\ell-1$ for all $\ell\in [L]$}\label{alg:SH-RR-while}
                \State Identify the arm index $a(t)\in \{1, \ldots, |\tilde{S}^{(q)}|\}$ such that $a(t) \equiv t ~\text{mod}~|\tilde{S}^{(q)}|.$\label{alg:rr}
                \State Pull arm $A(t) = k^{(q)}_{a(t)}\in \tilde{S}^{(q)}$. 
                \State Observe the outcome $O(t)\sim \nu_{A(t)}$. 
                \State Update $I^{(q)}_\ell\leftarrow I^{(q)}_\ell + D_\ell(t)$ for each $\ell\in [L]$.
                \State Update $H^{(q)}\leftarrow H^{(q)}\cup \{(A(t),O(t))\}$.
                \State Update $J^{(q)}\leftarrow J^{(q)}\cup\{t\}$.
                \State Update $t\leftarrow t+1$.
           \EndWhile
       \State Use $\cup^q_{m=0} H^{(m)}$ to compute empirical means $\{\hat{r}^{(q)}_k\}_{k\in \tilde{S}^{(q)}}$, see (\ref{eq:emp_mean}).\label{alg:emp_mean}
       \State Set $\tilde{S}^{(q+1)}$ be the set of top $\lceil |\tilde{S}^{(q)}| / 2\rceil$ arms with highest empirical mean.\label{alg:elim}
       \State Set $\textsf{Ration}^{(q+1)}_\ell = \frac{C_\ell}{\lceil\log_2 K\rceil} + ( \textsf{Ration}^{(q)}_\ell-I^{(q)}_\ell)$. \label{alg:ration}
   \EndFor
   \State Output the arm in $\tilde{S}^{(\lceil\log_2 K\rceil)}$.
    \end{algorithmic}
\end{algorithm}
We denote $\tilde{S}^{(q)} = \{k^{(q)}_1, \ldots, k^{(q)}_{|\tilde{S}^{(q)}|}\}$. In each phase $q$, the agent pulls arms in $\tilde{S}^{(q)}$ in a round-robin fashion. At a time step $t$, the agent first identifies (see Line \ref{alg:rr}) the arm index $a(t)\in \{1, \ldots, |\tilde{S}^{(q)}|\}$ and pulls the arm $k^{(q)}_{a(t)}\in \tilde{S}^{(q)}$. The round robin schedule ensures that the arms in $\tilde{S}^{(q)}$ are uniformly explored. SH-RR keeps track of the amount of type-$\ell$ resource consumption via $I^{(q)}_\ell$. The \textbf{while} condition (see Line \ref{alg:SH-RR-while}) ensures that at the end of phase $q$, the total amount $I^{(q)}_\ell$ of type-$\ell$ resource consumption during phase $q$ lies in $ (\textsf{Ration}^{(q)}_{\ell}-1, \textsf{Ration}^{(q)}_{\ell}]$ for each $\ell\in [L]$. The lower bound ensures sufficient exploration on $\tilde{S}^{(q)}$, while the upper bound ensures the feasibility of SH-RR to the resource constraints, as formalized in the following claim:
\begin{claim}\label{claim:feasibility}
With certainty, SH-RR consumes at most $C_\ell$ units of resource $\ell$, for each $\ell\in [L]$. 
\end{claim}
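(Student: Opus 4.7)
The plan is to track resource consumption phase by phase and show, via an inductive telescoping argument on the ration rollover rule in Line~\ref{alg:ration}, that the cumulative consumption through any phase $q$ is at most $(q+1)\cdot C_\ell/\lceil\log_2 K\rceil$, so that after the last phase $q=\lceil\log_2 K\rceil-1$ the total is at most $C_\ell$.

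First, I would establish a per-phase bound: within phase $q$, the \textbf{while} condition on Line~\ref{alg:SH-RR-while} guarantees that immediately before the last pull of the phase, $I^{(q)}_\ell \le \textsf{Ration}^{(q)}_\ell - 1$ for every $\ell$. Since the model assumes $D_\ell(t)\in[0,1]$ almost surely, that final pull can increase $I^{(q)}_\ell$ by at most $1$, yielding the deterministic bound $I^{(q)}_\ell \le \textsf{Ration}^{(q)}_\ell$ at the end of phase $q$, for each $\ell\in[L]$. If the while loop terminates without entering (no arms pulled), then $I^{(q)}_\ell=0$ trivially satisfies the same bound.

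Next, I would unfold the ration update from Line~\ref{alg:ration}. Writing $B_\ell := C_\ell/\lceil\log_2 K\rceil$, the recurrence $\textsf{Ration}^{(q+1)}_\ell = B_\ell + (\textsf{Ration}^{(q)}_\ell - I^{(q)}_\ell)$ with $\textsf{Ration}^{(0)}_\ell = B_\ell$ telescopes to
\[
\textsf{Ration}^{(q)}_\ell \;=\; (q+1)B_\ell \;-\; \sum_{m=0}^{q-1} I^{(m)}_\ell.
\]
Combining this with the per-phase bound $I^{(q)}_\ell \le \textsf{Ration}^{(q)}_\ell$ from the previous step gives, for every $q$,
\[
\sum_{m=0}^{q} I^{(m)}_\ell \;\le\; (q+1)\,B_\ell.
\]

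Finally, I would apply this inequality at $q = \lceil\log_2 K\rceil - 1$, which is the last phase executed by SH-RR. This yields
\[
\sum_{m=0}^{\lceil\log_2 K\rceil-1} I^{(m)}_\ell \;\le\; \lceil\log_2 K\rceil\cdot B_\ell \;=\; C_\ell,
\]
and since this sum equals the total type-$\ell$ resource consumed across the entire run of SH-RR, feasibility follows with certainty. The only delicate point is the per-phase overshoot argument, since the while loop may only terminate after the cap has been breached; but the almost-sure bound $D_\ell(t)\le 1$ caps the overshoot by exactly one unit, which is exactly what the ration rollover in Line~\ref{alg:ration} is designed to absorb. No concentration or probabilistic argument is needed — the statement is deterministic.
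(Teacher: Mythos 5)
Your proposal is correct and follows essentially the same route as the paper's proof: both hinge on the observation that the \textbf{while} condition plus $D_\ell(t)\in[0,1]$ forces $I^{(q)}_\ell \le \textsf{Ration}^{(q)}_\ell$ at the end of each phase, and both telescope the ration recurrence in Line~\ref{alg:ration} to conclude the total consumption is at most $C_\ell$. The paper phrases the telescoping as $\sum_q I^{(q)}_\ell = C_\ell + \textsf{Ration}^{(0)}_\ell - \textsf{Ration}^{(\lceil\log_2 K\rceil)}_\ell$ and then shows each ration stays at least $C_\ell/\lceil\log_2 K\rceil$, which is algebraically equivalent to your unrolled partial-sum bound.
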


Proof of Claim \ref{claim:feasibility} is in Appendix \ref{app:claim_feasibility}. 
Crucially, SH-RR maintains the observation history $H^{(q)}$ that is used to determined the arms to be eliminated from $\tilde{S}^{(q)}$. 
After exiting the \textbf{while} loop, the agent computes (in Line \ref{alg:emp_mean}) the empirical mean
\begin{equation}\label{eq:emp_mean}
\hat{r}^{(q)}_k = \frac{\sum^q_{m=0}\sum_{t\in J^{(m)}} R(t) \cdot \mathds{1}(A(t) = k)}{\max\{\sum^q_{m=0}\sum_{t\in J^{(m)}} \mathds{1}(A(t) = k), 1\}} 
\end{equation}
for each $k\in \tilde{S}^{(q)}$. The surviving arm set $\tilde{S}^{(q+1)}$ in the next phase of phase $q+1$ consists of the $\lceil |\tilde{S}^{(q)}| / 2\rceil$ arms in $\tilde{S}^{(q)}$ with the highest empirical means, see Line \ref{alg:elim}. The amounts of resources rationed for phase $q+1$ is in Line \ref{alg:ration}.

\section{Performance Guarantees of SH-RR}\label{sec:main}
We start with the \textbf{deterministic consumption setting}, and some necessary notation. For each $k\in \{2, \ldots, K\}$, we denote $\Delta_k = r_1 - r_k\in [0, 1]$. We also denote $\Delta_1 = r_1 - r_2 = \Delta_2$. Consequently, we have $\Delta_1 =  \Delta_2 \leq \Delta_3\leq \ldots \leq \Delta_K$. 
For each resource type $\ell\in [L]$, we denote $d_{\ell, (1)},d_{\ell, (2)}, \ldots, d_{\ell, (K)}$ as a permutation of $d_{\ell, 1}, d_{\ell, 2},\ldots, d_{\ell, K}$ such that $d_{\ell, (1)}\geq  d_{\ell, (2)} \geq \ldots \geq d_{\ell, (K)}$. We define 
\begin{align}
H_{2, \ell}^{\text{det}}(Q) &= \max_{k\in \{2, \ldots, K\}} \left\{ \frac{\sum^k_{j=1} d_{\ell, (j)} }{\Delta_k^2} \right\},\label{eq:det_H}
\end{align}
which encodes the difficulty of the instance. When we specialize to the fixed budget BAI problem by setting $L = 1$ and $\Pr(D_{1, k} = 1) = 1$ for all $k\in [K]$, the quantity $H_{2, 1}^{\text{det}}(Q)$ is equal to a quantity $H_2$, which a complexity term defined for the fixed budget BAI setting \citep{audibert2010best,KarninKS13}. Our first main result is an upper bound on $\Pr(\text{fail BAI}) = \Pr(\psi\neq 1)$ for our proposed SH-RR in the deterministic consumption setting.
\begin{theorem}
\label{theorem:upper-bound-of-failure-det}
Consider a BAIwRC instance $Q$ in the deterministic consumption setting. SH-RR (Algorithm \ref{alg:Sequential-Halving}) has BAI failure probability $\Pr(\psi \neq 1)$ at most
\begin{align}\label{eq:upper-bound-of-failure-det}
    \lceil\log_2 K\rceil K \exp\left(-\frac{1}{4 \lceil \log_2 K\rceil}\cdot \gamma^{\text{det}}(Q) \right)
\end{align}
where $\gamma^{\text{det}}(Q) = \min_{\ell\in [L]}\{C_\ell / H^{\text{det}}_{2, \ell}(Q)\}$, and $H^{\text{det}}_{2, \ell}(Q)$ is defined in (\ref{eq:det_H}).
\end{theorem}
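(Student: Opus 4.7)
The plan is to upper bound $\Pr(\psi\neq 1)$ by a sum over phases and to adapt the classical Karnin-Koren-Somekh counting argument to heterogeneous resource consumption. By a union bound on the per-phase elimination events,
\[
\Pr(\psi\neq 1) \le \sum_{q=0}^{\lceil \log_2 K\rceil - 1}\Pr\bigl(1\in\tilde S^{(q)},\ 1\notin\tilde S^{(q+1)}\bigr).
\]
Let $N_q = |\tilde S^{(q)}|$. Because of the round-robin scheduling in Line \ref{alg:rr} and the exit condition of the while loop in Line \ref{alg:SH-RR-while}, in the deterministic setting every arm in $\tilde S^{(q)}$ is pulled a common number $n_q$ of times (up to $\pm 1$) with
\[
n_q \ge \min_{\ell\in [L]} \frac{\textsf{Ration}^{(q)}_\ell - 1}{\sum_{k\in \tilde S^{(q)}} d_{\ell,k}} \ge \min_{\ell\in [L]} \frac{C_\ell/\lceil \log_2 K\rceil - 1}{\sum_{k\in \tilde S^{(q)}} d_{\ell,k}},
\]
using the floor $\textsf{Ration}^{(q)}_\ell \ge C_\ell/\lceil \log_2 K\rceil$ guaranteed by Line \ref{alg:ration}.

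Within phase $q$, I would apply a counting argument in the spirit of \cite{KarninKS13}: if arm $1$ is eliminated, then at least $\lceil N_q/4\rceil$ arms lying in the bottom three quarters of $\tilde S^{(q)}$ (ordered by true mean, with arm $1$ at the top) must have empirical mean exceeding $\hat r^{(q)}_1$. Since each such arm has $[K]$-rank at least $i_q := \lceil N_q/4\rceil + 1$, its gap satisfies $\Delta_k \ge \Delta_{i_q}$. Markov's inequality on this count, together with Hoeffding's inequality applied to each pairwise comparison of empirical means (each built from at least $n_q$ i.i.d.\ sub-Gaussian samples), yields
\[
\Pr\bigl(1\in\tilde S^{(q)},\ 1\notin\tilde S^{(q+1)}\bigr) \le O(N_q)\cdot\exp\!\bigl(-c\,n_q\Delta_{i_q}^2\bigr)
\]
for a universal constant $c>0$.

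The key algebraic step links the subset-sum $\sum_{k\in\tilde S^{(q)}} d_{\ell, k}$ to $H_{2,\ell}^{\text{det}}(Q)$ via the sorted sequence $d_{\ell,(1)}\ge d_{\ell,(2)}\ge\ldots$: since any $N_q$ of the $d_{\ell, k}$'s are dominated by the top $N_q$ and $N_q/i_q\le 4$,
\[
\sum_{k\in\tilde S^{(q)}} d_{\ell, k} \le \sum_{j=1}^{N_q} d_{\ell,(j)} \le \frac{N_q}{i_q}\sum_{j=1}^{i_q} d_{\ell,(j)} \le 4\,H_{2,\ell}^{\text{det}}(Q)\,\Delta_{i_q}^2,
\]
the last inequality being the definition of $H_{2,\ell}^{\text{det}}$ evaluated at $k=i_q$. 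Plugging into the pull-count bound and then minimizing over $\ell$ gives $n_q\Delta_{i_q}^2\ge\Omega\bigl(\gamma^{\text{det}}(Q)/\lceil\log_2 K\rceil\bigr)$. Summing the per-phase bounds over $\lceil\log_2 K\rceil$ phases and absorbing $N_q\le K$ into the prefactor produces the claimed upper bound.

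The main obstacle I anticipate is this algebraic step: the subset $\tilde S^{(q)}$ is random and depends on the algorithm's past behaviour, so the subset-sum $\sum_{k\in\tilde S^{(q)}} d_{\ell, k}$ must be uniformly controlled by a statistic of the sorted sequence that is compatible with the index $i_q$ surfaced by the Karnin-style counting. Tracking the monotonicity factor $N_q/i_q\le 4$ alongside the Hoeffding constant is what pins down the $1/(4\lceil\log_2 K\rceil)$ constant in the final exponent.
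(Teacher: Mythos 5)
Your proposal follows essentially the same route as the paper's proof: a phase-wise union bound over the events that arm $1$ is eliminated, a lower bound on the common per-arm pull count $n_q$ obtained by dividing the rationed budget $C_\ell/\lceil\log_2 K\rceil$ by the per-round consumption of the surviving set, domination of that consumption by the sorted prefix sum $\sum_{j\le N_q} d_{\ell,(j)}$ (which is exactly how the paper handles the randomness of $\tilde S^{(q)}$ as well), and the averaging inequality that relates this prefix sum to $H^{\text{det}}_{2,\ell}(Q)$ evaluated at the critical index. The one substantive deviation is the counting step: you use the Karnin--Koren--Somekh quarter-split (Markov's inequality on the number of bottom-three-quarter arms beating arm $1$, giving $i_q=\lceil N_q/4\rceil+1$ and the factor $N_q/i_q\le 4$), whereas the paper uses the half-split tied directly to the elimination rule (if arm $1$ is eliminated, at least $\lceil N_q/2\rceil$ arms beat it empirically, so by pigeonhole some arm of global rank at least $\lceil K/2^{q+1}\rceil$ beats it; then a union bound over those arms), which only costs a factor of $2$ in the prefix-sum comparison $\sum_{j=1}^{N_q} d_{\ell,(j)}\le 2\sum_{j=1}^{\lceil N_q/2\rceil} d_{\ell,(j)}$. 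Consequently your exponent works out to $c\, n_q\Delta_{i_q}^2 \ge \frac{c}{4}\cdot \gamma^{\text{det}}(Q)/\lceil\log_2 K\rceil$ with $c\le 1/2$ the concentration constant for the difference of two empirical means of $1$-sub-Gaussian rewards, i.e.\ at best $1/(8\lceil\log_2 K\rceil)$ rather than the stated $1/(4\lceil\log_2 K\rceil)$; the claim at the end of your write-up that the factor $N_q/i_q\le 4$ ``pins down'' the constant $1/4$ does not survive the arithmetic. This is a constant-factor shortfall rather than a conceptual gap --- switching to the half-split recovers the paper's constant, and everything else in your outline (including the uniform control of $\sum_{k\in\tilde S^{(q)}} d_{\ell,k}$ by a statistic of the sorted sequence, which you correctly identify as the key step) matches the paper's argument.
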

Theorem \ref{theorem:upper-bound-of-failure-det} is proved in Appendix \ref{pf:thm_upp_det}. The performance guarantee of SH-RR improves when the complexity term $\gamma^{\text{det}}(Q)$ increases. We provide intuitions in the special case of $L = 1$, so $\ell=1$ always. The upper bound (\ref{eq:upper-bound-of-failure-det}) decreases when $C_1$ increases, since more resource units allows more experimentation, hence a lower failure probability. The upper bound (\ref{eq:upper-bound-of-failure-det}) increases when $H^{\text{det}}_{2, 1}(Q)$ increases. Indeed, when $d_{\ell, (k)}$ increases, the agent consumes more resource units when pulling the arm with the $k$-th highest consumption on resource $\ell$, which leads to less arm pulls under a fixed budget. In addition, when $\Delta_k$ decreases, more arm pulls are needed to distinguish between arms $1,k$, leading to a higher $\Pr(\text{fail BAI})$. Observe that $H^{\text{det}}_{2, 1}(Q)$ involves the mean consumption $d_{1, (1)}, \ldots, d_{1, (K)}$ in a non-increasing order, providing a worst-case hardness measure over all permutations of the arms. One could wonder if the definition of $H_{2,\ell}^{\text{det}}$ can be refined in the non-ordered way, i.e. $\max_{k\in \{2, \ldots, K\}}\{ \sum^k_{j=1} d_{\ell, j} /\Delta_k^2 \}$. Our analysis in appendix \ref{sec:Improvement_of_H2_is_unachievable} shows such a refinement is unachievable.

The insights above carry over to the case of general $L$. The complexity term $\gamma^{\text{det}}(Q)$ involves a minimum over all resource types $[L]$, meaning that the failure probability depends on the bottleneck resource type(s). Finally, when we specialize to the fixed budget BAI setting, the upper bound (\ref{eq:upper-bound-of-failure-det}) matches (up to a multiplicative absolute constant) the BAI failure probability upper bound of the Successive Halving algorithm \cite{KarninKS13}. 

At first sight, it seems Theorem \ref{theorem:upper-bound-of-failure-det} should hold in the \textbf{stochastic consumption setting}. Indeed, if an arm's pull consumes $\text{Bern}(d)$ units of a resource (Let's assume $L=1$ for the discussion), then $N$ arm pulls consume at most $Nd + 2\sqrt{Nd\log(1/\delta)}$ units with probability $\geq 1-\delta$, for any $\delta\in (0, 1)$. With a large enough $N$, for example when $C_1/d$ is sufficiently large, we expect $Nd \geq 2\sqrt{Nd\log(1/\delta)}$. That is, with probability $\geq 1-\delta$ the realized consumption is at most twice of $Nd$, the consumption with $N$ pulls where each pull consumes $d$ units with certainty instead of $\text{Bern}(d)$. It then transpires that (\ref{eq:upper-bound-of-failure-det}) should hold, modulo a different constant (from $1/4$) in the exponent. 

Despite the intuition, a simulation on two instances $Q^{\text{det}}, Q^{\text{sto}}$ suggests the otherwise. Instances $Q^{\text{det}}, Q^{\text{sto}}$ both have with $K=2, L = 1, C=2$. Instances $Q^{\text{det}}, Q^{\text{sto}}$ share the same Bernoulli rewards with means $r_1 = 0.5, r_2 = 0.4$ and the same mean resource consumption $d_1 = d_2 = d$, where $d$ varies. In $Q^{\text{det}}$, an arm pull consumes $d$ units with certainty, while in $Q^{\text{sto}}$ it consumes Bern($d$) per pull. We plot $\log(\Pr(\psi \neq 1))$ under SH-RR against the varying $d$ in Figure \ref{fig:compare}, while other model parameters are fixed.
\begin{figure}
    \centering
    \begin{subfigure}
      \centering
      \includegraphics[width=.48\linewidth]{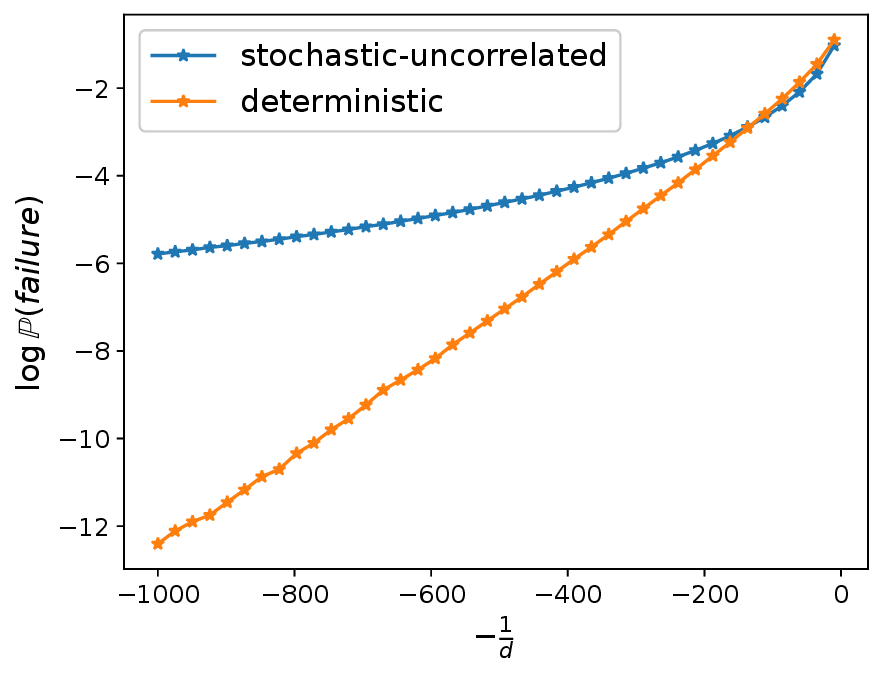}
    \end{subfigure}%
    \begin{subfigure}
      \centering
      \includegraphics[width=.48\linewidth]{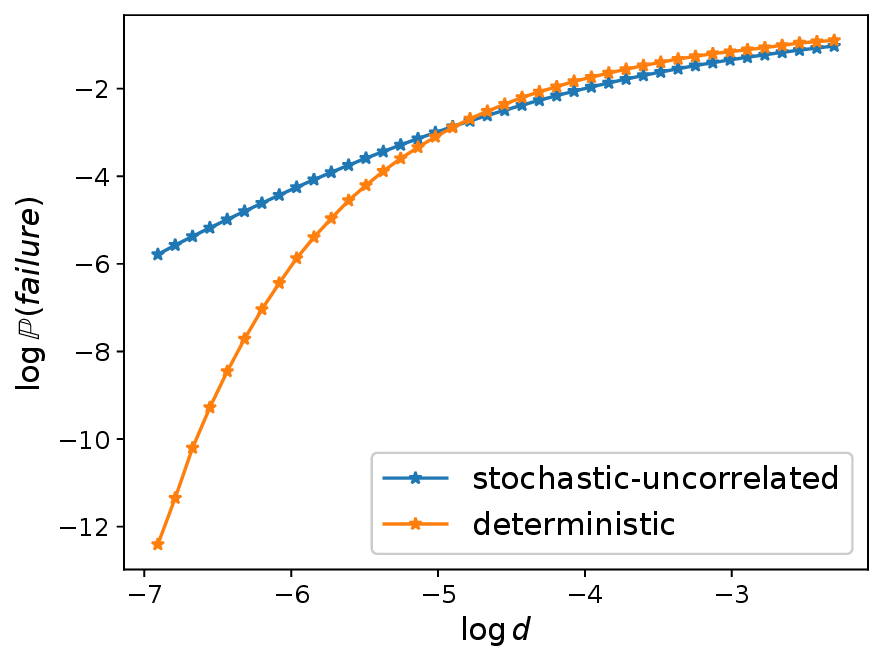}
    \end{subfigure}
    \caption{Convergence rates of $\log(\Pr(\psi \neq 1))$, with $10^7$ repeated trials}
    \label{fig:compare}
\end{figure}
Figure \ref{fig:compare} shows that the $\Pr(\psi \neq 1)$ for $Q^{\text{det}}$ is always less than that for $Q^{\text{sto}}$. In addition, $\Pr(\psi \neq 1)$'s for $Q^{\text{det}}, Q^{\text{sto}}$ diverge when $d$ shrinks, which is in contrary to the previous mentioned intuition. The left panel shows that the plotted $\log(\Pr(\psi \neq 1))$ does not decreases linearly as $1/d$ grows, which implies that the bound in (\ref{eq:upper-bound-of-failure-det}) does not hold for $Q^{\text{sto}}$ when $d$ is sufficiently small.

It turns out the \textbf{stochastic consumption setting} needs a different characterization. For an instance $Q$ with stochastic consumption, define
\begin{align}
H_{2, \ell}^{\text{sto}}(Q) &= \max_{k\in \{2, \ldots, K\}} \left\{ \frac{\sum^k_{j=1}f(d_{\ell, (j)})}{\Delta_k^2} \right\},  \label{eq:sto_H} 
\end{align}
where the function $f:(0, 1]\rightarrow (0, e^2]$ is defined as 
\begin{equation}\label{eq:f}
f(d)=
\begin{cases} 
   e^2 \cdot d  & \text{if } d\in [e^{-2},1], \\
   2(-\log d)^{-1}       & \text{if }  d\in (0, e^{-2}).
\end{cases}
\end{equation}
The function $f$ is continuous and increasing in $(0, 1]$. 
\begin{theorem}
\label{theorem:upper-bound-of-failure}
Consider a BAIwRC instance $Q$ in the stochastic consumption setting. The SH-RR algorithm has BAI failure probability $\Pr(\psi \neq 1)$ at most
\begin{align}\label{eq:upper-bound-of-failure}
7LK (\log_2 K) \exp\left(-\frac{1}{8\lceil \log_2 K\rceil}\cdot\gamma^{\text{sto}}(Q) \right),
\end{align}
where  $\gamma^{\text{sto}}(Q) = \min_{\ell\in [L]}\{C_\ell / H^{\text{sto}}_{2, \ell}(Q)\}$, and $H^{\text{sto}}_{2, \ell}(Q)$ is defined in (\ref{eq:sto_H}).
\end{theorem}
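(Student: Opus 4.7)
The plan is to adapt the proof of Theorem \ref{theorem:upper-bound-of-failure-det} by grafting on a new concentration argument that handles the randomness in the per-phase pull count. First I would decompose $\Pr(\psi \neq 1) \leq \sum_{q=0}^{\lceil \log_2 K\rceil - 1} \Pr(B_q)$, where $B_q$ is the event that arm 1 is eliminated during phase $q$. A standard Karnin--Koren--Somekh pigeonhole argument within a phase shows that on $B_q$, at least one arm $k \in \tilde{S}^{(q)}$ with true-mean gap $\Delta_k \geq \Delta_{\lceil K_q/2\rceil + 1}$ satisfies $\hat{r}_k^{(q)} \geq \hat{r}_1^{(q)}$, where $K_q = |\tilde{S}^{(q)}|$. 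Conditional on a high-probability lower bound $N$ on the cumulative pull count of each arm through phase $q$, the 1-sub-Gaussianity of rewards yields $\Pr(\hat{r}_k^{(q)} \geq \hat{r}_1^{(q)} \mid N) \leq \exp(-N\Delta_k^2/4)$, and a union bound over the at most $K_q$ candidate arms gives $\Pr(B_q \mid N) \leq K_q \exp(-N \Delta_{\lceil K_q/2 \rceil + 1}^2/4)$.

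The key new ingredient is the lower bound on $N$. Since SH-RR runs round-robin within phase $q$, $N$ is at least the number $n^{(q)}$ of complete rounds, which is a stopping time determined by the first resource whose cumulative consumption exceeds $\textsf{Ration}^{(q)}_\ell$. I would establish, for each resource $\ell$,
\[
\Pr\biggl(n^{(q)} < \frac{\textsf{Ration}^{(q)}_\ell}{\sum_{k \in \tilde{S}^{(q)}} f(d_{\ell, k})}\biggr) \leq \exp\bigl(-\textsf{Ration}^{(q)}_\ell\bigr),
\]
by setting $n_0$ equal to the right-hand threshold and bounding $\Pr(X > \textsf{Ration}^{(q)}_\ell)$, where $X$ is the total type-$\ell$ consumption over the first $K_q n_0$ pulls, via a Chernoff argument that uses the arm-wise MGF inequality $\mathbb{E}[e^{\lambda D}] \leq \exp(d (e^\lambda - 1))$ for $D \in [0, 1]$ with mean $d$. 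The definition of $f$ in (\ref{eq:f}) is calibrated so that the ratio $d/f(d)$ is uniformly bounded by $e^{-2}$ on $(0, 1]$: in the regime $d \geq e^{-2}$ this is immediate from $f(d) = e^2 d$, while in the regime $d < e^{-2}$ it follows from the elementary inequality $-d \log d \leq 2 e^{-2}$. The uniform bound, summed across arms, gives $\sum_k d_{\ell, k} \leq e^{-2} \sum_k f(d_{\ell, k})$, and choosing $\lambda = 2$ in the Chernoff optimization then yields the desired tail.

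Two monotonicity facts translate this per-phase bound into the instance-level complexity. First, because $f$ is increasing, $\sum_{k \in \tilde{S}^{(q)}} f(d_{\ell, k}) \leq \sum_{j=1}^{K_q} f(d_{\ell, (j)})$, removing the dependence on the random surviving set. Second, because $f(d_{\ell, (j)})$ is non-increasing in $j$, $\sum_{j=1}^{\lceil K_q/2\rceil + 1} f(d_{\ell, (j)}) \geq \frac{1}{2}\sum_{j=1}^{K_q} f(d_{\ell, (j)})$. Combining these with $\textsf{Ration}^{(q)}_\ell \geq C_\ell / \lceil \log_2 K\rceil$ (by the rationing rule on Line \ref{alg:ration}, which only accumulates unused slack) and with the definition of $H_{2, \ell^*}^{\text{sto}}(Q)$ evaluated at index $\lceil K_q/2\rceil + 1$ for a binding resource $\ell^*$, I obtain $N \Delta_{\lceil K_q/2\rceil + 1}^2 \geq \gamma^{\text{sto}}(Q)/(2 \lceil \log_2 K\rceil)$ with probability at least $1 - L \exp(-C_{\ell^*}/\lceil \log_2 K\rceil)$. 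Plugging this into the sub-Gaussian bound and summing $\Pr(B_q)$ over $q$ via $\sum_q K_q \leq 2K$ and $\lceil \log_2 K\rceil$ phases gives (\ref{eq:upper-bound-of-failure}), with the prefactor $7LK \log_2 K$ absorbing the union bounds over bottom-half arms, resources, and phases, as well as the pull-count-failure term into the dominant reward-concentration term.

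The main obstacle will be the Chernoff computation that validates the definition of $f$: in particular, arguing that a single choice of $\lambda$ simultaneously handles the linear regime (when $d \geq e^{-2}$) and the rare-event regime (when $d < e^{-2}$), both yielding the same exponential rate $\exp(-\textsf{Ration}^{(q)}_\ell)$. A secondary subtlety is justifying that the subset sum over the stochastic set $\tilde{S}^{(q)}$ can be dominated deterministically by the top-$K_q$ ordered values without introducing additional failure events.
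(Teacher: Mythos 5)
Your overall architecture --- phase-wise bad events, the split into a reward-concentration term conditional on a sufficient pull count plus a pull-count-deficiency term, and the role of $f$ as an effective consumption --- is exactly the paper's, and the reward-concentration half and your two monotonicity facts are fine. The gap is in the tail bound for the pull count. Your Chernoff computation with the fixed parameter $\lambda=2$ does go through (the calibration $d\leq e^{-2}f(d)$ on all of $(0,1]$ is correct), but it yields the rate $\exp(-\textsf{Ration}^{(q)}_\ell)$, and that rate cannot in general be ``absorbed into the dominant reward-concentration term.'' The target exponent is $\frac{C_\ell}{8\lceil\log_2 K\rceil H^{\text{sto}}_{2,\ell}(Q)}$, and $H^{\text{sto}}_{2,\ell}(Q)$ can be arbitrarily small: take $K=2$, $\Delta_2=1$, and $d_{\ell,1}=d_{\ell,2}=e^{-100}$, so that $H^{\text{sto}}_{2,\ell}(Q)=2f(e^{-100})=0.04$ and the theorem demands a bound of order $\exp(-C_\ell/0.32)$, whereas your pull-count failure term is only $\exp(-\textsf{Ration}^{(q)}_\ell)=\exp(-C_\ell)$, which dominates the claimed bound for large $C_\ell$. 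So precisely in the small-consumption regime that motivates introducing $f$ in the first place, your estimate is too weak.

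The correct rate for the pull-count deficiency must scale with the number of rounds $\beta^{(q)}_\ell=\textsf{Ration}^{(q)}_\ell/\sum_{k}f(d_{\ell,k})$, not with $\textsf{Ration}^{(q)}_\ell$ itself; when $\sum_k f(d_{\ell,k})\ll 1$ these differ by an unbounded factor, and fixing $\lambda=2$ throws that factor away (the optimized Chernoff exponent for $\Pr(X>R)$ with $\mathbb{E}[X]\leq e^{-2}R$ is of order $R\log(R/\mathbb{E}[X])$, which is what you need in the rare-event regime). The paper instead applies the concentration arm by arm via its Lemma \ref{lemma:bound_count}: for $N$ i.i.d.\ $[0,1]$-valued consumptions with mean $d$, $\Pr\bigl(\frac{1}{N}\sum_{n=1}^N X_n>f(d)\bigr)\leq\exp(-N/3)$, proved by a multiplicative Chernoff bound when $d\geq e^{-2}$ and by the KL-form Chernoff bound together with $\text{KL}(f(d),d)\geq 1/2$ when $d<e^{-2}$. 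Applied with $N=\lceil\beta^{(q)}_\ell\rceil$ and a union bound over arms and resources, this gives $L\,|\tilde{S}^{(q)}|\exp(-\beta^{(q)}_\ell/3)$, which does decay at the required rate because $H^{\text{sto}}_{2,\ell}(Q)\geq\sum_{j=1}^{K}f(d_{\ell,(j)})/\Delta_K^2\geq\sum_{k}f(d_{\ell,k})$ using $\Delta_K\leq 1$. To keep your aggregate-over-the-phase formulation you would need to optimize $\lambda$ (or use the KL/Bennett form of the bound) so that the exponent is proportional to the number of pulls rather than to the ration.
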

Theorem \ref{theorem:upper-bound-of-failure} is proved in Appendix \ref{pf:thm_upp_sto}. The upper bound in (\ref{eq:upper-bound-of-failure}) has a similar form to 
(\ref{eq:upper-bound-of-failure-det}), except that $\gamma^{\text{det}}(Q)$ is replaced with $\gamma^{\text{sto}}(Q)$. Crucially, the expected consumption $d_{\ell, (j)}$ in $H^{\text{det}}_{2, \ell}(Q)$ is replaced with \emph{effective consumption} $f(d_{\ell, (j)})$ in $H^{\text{sto}}_{2, \ell}(Q)$. For an arm $k\in [K]$, the effective consumption $f(d_{\ell, k})$ encapsulates the magnitude of the random consumption through the mean $d_{\ell, k}$. The non-linearity of $f$ encapsulates the impact of randomness in resource consumption. The function $f(d)$ is increasing in $d$, meaning that a higher mean consumption leads to a higher level of utilization on a resource. Note that $f(d) > d$, and $\lim_{d\rightarrow 0} f(d)/d = \infty$, which bears the following implications. Consider a stochastic consumption instance and a deterministic consumption instance that have the same $\{C_\ell\}_{\ell\in [L]}, \{r_k\}_{k\in [K]}, \{d_{\ell, k}\}_{k\in [K], \ell\in [L]}$. The upper bound (\ref{eq:upper-bound-of-failure}) on $\Pr(\psi\neq 1)$ for the stochastic instance converges at a strictly slower rate to zero than the upper bound  (\ref{eq:upper-bound-of-failure-det}) for the deterministic instance. In addition, when all of $\{d_{\ell, k}\}_{k\in [K], \ell\in [L]}$ tend to zero, the ratio between the two upper bounds grows arbitrarily. These implications are depicted by the diverging curves in Figure \ref{fig:compare}, and the upper bound (\ref{eq:upper-bound-of-failure}) is in fact consistent with the curve for $Q^{\text{sto}}$ in Figure \ref{fig:compare}.

\section{Lower Bounds to $\Pr(\psi\neq 1)$}
\label{sec:main-Lower-Bound}
While the deterioration in the upper bound to $\Pr(\text{fail BAI})$ could appear to be a limitation to SH-RR, we establish lower bound results for BAIwRC, which demonstrate the near optimality of SH-RR. In what follows, we consider instances where arm 1 needs not be optimal, different from our development of SH-RR. Our lower bound results involve constructing $K$ instances $\{Q^{(i)}\}^K_{i=1}$, where 
the resource consumption models are carefully crafted to (nearly) match the upper bound results of SH-RR.

\textbf{Deterministic consumption setting.} Each instance involves the set $[K]$ of $K$ arms and $L$ types of resources $[L]$.  Let $\{r_k\}^K_{k=1}$ be any sequence such that (\hypertarget{prop:a}{a}) $1/2 = r_1\ge r_2 \ge \cdots \ge r_K \geq 1/4$, and let $\{ \{d_{\ell, (k)}\}^{K}_{k=1} \}_{\ell\in [L]}$ be a fixed but arbitrary collection of $L$ sequences such that (\hypertarget{prop:b}{b}) $d_{\ell,(1)}\ge d_{\ell,(2)}\ge \cdots \ge d_{\ell,(K)}$ for all $\ell\in [L]$, and $d_{\ell, (k)}\in (0, 1]$ for all $\ell\in [L], k\in [K]$. 

In instance $Q^{(i)}$, pulling arm $k\in [K]$ generates a random reward $R_k\sim \text{Bern}(r^{(i)}_k)$, where
$$r^{(i)}_k = \bigg\{ \begin{array}{ll} r_k  & \;\text{if } k\ne i,\\
1 - r_k & \;\text{if } k= i,\end{array}.$$
\textcolor{blue}{Pulling arm $k\in [K]$
consumes 
\begin{equation}\label{eq:consumption_model}
d_{\ell,k} =\Bigg\{\begin{array}{ll}d_{\ell,(2)}&\;\text{if }k=1,\\ d_{\ell,(1)}&\;\text{if }k=2,\\d_{\ell,(k)}&\;\text{if }k\in \{3, \ldots, K\}\end{array}
\end{equation}
units of resource $\ell$ for each $\ell\in [L]$ with certainty.} 

In instance $Q^{(i)}$, arm $i$ is the uniquely optimal arm. All instances $Q^{(1)}, \ldots, Q^{(K)}$ have identical resource consumption model, since the consumption amounts (\ref{eq:consumption_model}) do not depend on the instance index $i$. This ensures that no strategy can extract information about the reward from an arm's consumption. In addition, the consumption amounts in (\ref{eq:consumption_model}) are designed to ensure that (a) instance $Q^{(1)}$ is the hardest among $\{Q^{(i)}\}^K_{i=K}$ in the sense that $H^\text{det}_{2, \ell}(Q^{(1)}) = \max_{i\in [K]}H^\text{det}_{2, \ell}(Q^{(i)})$ for every $\ell\in [L]$, (b) The ordering $d_{\ell,1} \leq d_{\ell,2} \geq d_{\ell,3}\geq \ldots, \geq d_{\ell,K}$ makes $Q^{(1)}$ a hard instance in the sense that it cost the most to distinguish the second best arm (arm 2) from the best arm. More generally, for each resource $\ell$, the consumption amounts are designed such that a sub-optimal arm is more costly to pull when its mean reward is closer to the optimum. Our construction leads to the following lower bound on the performance of any strategy:
\begin{theorem}
    \label{theorem:lower-bound-fixed-consumption-multiple-resource}
    Consider deterministic consumption instances $Q^{(1)}, \ldots, Q^{(K)}$ constructed as above, with $\{r_k\}_{k\in [K]}, \{d_{\ell, (k)}\}_{\ell, k}$ being fixed but arbitrary sequences of parameters that satisfy properties (\hyperlink{prop:a}{a}, \hyperlink{prop:b}{b}) respectively. 
    When $C_1, \ldots, C_L$ are sufficiently large, for any strategy there exists an instance $Q^{(i)}$ (where $i\in [K]$) such that 
    \begin{align*}
        \Pr_{i}(\psi\ne i)\ge &\frac{1}{6}\exp\left(-122\cdot \gamma^{\text{det}}(Q^{(i)})\right),
    \end{align*}
    where $\Pr_{i}(\cdot)$ is the probability measure over the trajectory $\{(A(t), O(t))\}^\tau_{t=1}$ under which the arms are chosen according to the strategy and the outcomes are modeled by $Q^{(i)}$, and $\gamma^{\text{det}}(Q)$ is as defined in Theorem \ref{theorem:upper-bound-of-failure-det}.
\end{theorem}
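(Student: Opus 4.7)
The plan is proof by contradiction via a change-of-measure argument. Suppose that for every $i\in [K]$, $\epsilon_i:=\Pr_i(\psi\ne i) < \tfrac{1}{6}\exp(-122\,\gamma^{\text{det}}(Q^{(i)}))$. The consumption model in (\ref{eq:consumption_model}) was designed so that $H^{\text{det}}_{2,\ell}(Q^{(1)})\ge H^{\text{det}}_{2,\ell}(Q^{(i)})$ for every $\ell$ and every $i$ (property (a) in the surrounding text), and hence $\gamma^{\text{det}}(Q^{(i)})\ge \gamma^{\text{det}}(Q^{(1)})$. The assumption thus reduces to the uniform bound $\log(1/\epsilon_i) > M := 122\,\gamma^{\text{det}}(Q^{(1)}) + \log 6$ for every $i\in [K]$; this uniform bound is what I aim to contradict.

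First I would apply the standard KL change-of-measure identity to $Q^{(1)}$ vs $Q^{(i)}$ for each $i\ge 2$. Since consumption is deterministic and identical across instances, and rewards differ only at arm $i$ ($\mathrm{Bern}(r_i)$ versus $\mathrm{Bern}(1-r_i)$), the KL between the trajectory laws equals $\mathbb{E}_1[N_i(\tau)]\cdot \mathrm{kl}(r_i,1-r_i)$, where $N_i(\tau)$ counts pulls of arm $i$. Applied to the event $\{\psi=i\}$ via the data-processing inequality, together with $\Pr_1(\psi=i)\le \epsilon_1 < 1/6$ and $\Pr_i(\psi=i) > 5/6$ (both consequences of the contradiction hypothesis), monotonicity of the binary kl in its first argument yields $\mathrm{kl}(\Pr_1(\psi=i),\Pr_i(\psi=i))\ge \tfrac{5}{6}\log(1/\epsilon_i) - c_0$ for an absolute constant $c_0$. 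Combined with $\mathrm{kl}(r_i,1-r_i)\le 16\,\Delta_i^2$ on $r_i\in [1/4,1/2]$ (immediate from $\log(1+x)\le x$), this gives
\begin{equation*}
\mathbb{E}_1[N_i(\tau)]\ \ge\ \frac{(5/6)\log(1/\epsilon_i) - c_0}{16\,\Delta_i^2},\qquad i\ge 2.
\end{equation*}

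I then couple these lower bounds with the deterministic resource constraints $\sum_{i=1}^K \mathbb{E}_1[N_i(\tau)]\,d_{\ell,i}\le C_\ell$. For each $\ell\in [L]$, let $k_\ell^*\in\{2,\ldots,K\}$ attain the maximum in the definition of $H^{\text{det}}_{2,\ell}(Q^{(1)})$. Restricting to $i\in\{2,\ldots,k_\ell^*\}$ and using $\Delta_i\le \Delta_{k_\ell^*}$ together with $\log(1/\epsilon_i) > M$ produces $C_\ell\ge \frac{(5/6)M - c_0}{16\,\Delta_{k_\ell^*}^2}\sum_{i=2}^{k_\ell^*} d_{\ell,i}$. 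The swap in (\ref{eq:consumption_model}) then yields $\sum_{i=2}^{k_\ell^*} d_{\ell,i} = \sum_{j=1}^{k_\ell^*} d_{\ell,(j)} - d_{\ell,(2)}$; since $d_{\ell,(2)} \le \tfrac{1}{2}(d_{\ell,(1)} + d_{\ell,(2)}) \le \tfrac{1}{2}\sum_{j=1}^{k_\ell^*} d_{\ell,(j)}$, at least half of the sorted weight survives, producing $C_\ell\ge \frac{(5/6)M - c_0}{32}\,H^{\text{det}}_{2,\ell}(Q^{(1)})$. Picking $\ell$ to be the minimiser in $\gamma^{\text{det}}(Q^{(1)})$ yields an upper bound $M\le \alpha\,\gamma^{\text{det}}(Q^{(1)}) + O(1)$ with an absolute constant $\alpha$ strictly less than $122$; once the $C_\ell$'s are large enough for $\gamma^{\text{det}}(Q^{(1)})$ to dominate the $O(1)$ slack, this contradicts $M > 122\,\gamma^{\text{det}}(Q^{(1)}) + \log 6$ and gives the result.

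The main obstacle will be constant-tracking to land the leading coefficient strictly below $122$: the factor $5/6$ from the binary kl lower bound, the factor $16$ bounding $\mathrm{kl}(r_i,1-r_i)/\Delta_i^2$, and the factor $1/2$ lost by discarding $d_{\ell,(2)}$ from the weighted sum compound multiplicatively and must be propagated carefully. The swap in (\ref{eq:consumption_model}) is precisely the device that keeps the final loss to a factor of $1/2$ rather than $1/K$, since only the second-largest sorted weight (not the largest) is discarded. A minor verification is that the change-of-measure identity applies at the stopping time $\tau$: because $\tau$ is measurable with respect to the common consumption path and the reward observations, and only arm $i$'s reward distribution differs between $Q^{(1)}$ and $Q^{(i)}$, the identity is standard.
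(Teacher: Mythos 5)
Your proposal is correct, and it takes a genuinely different route from the paper. The paper follows the Carpentier--Locatelli template: a trajectory-level likelihood-ratio identity $\Pr_i(\mathcal{E})=\mathbb{E}_1[\mathds{1}\{\mathcal{E}\}\exp(-T_i\widehat{\text{KL}}_{i,T_i})]$ applied to a good event $\{\psi=1\}\cap\{T_i\le 6t_i\}\cap\xi$, where $\xi$ is a concentration event for the empirical KL; this produces the additive $-2\sqrt{T\log(12KT)}$ slack in the exponent (hence the ``$C_\ell$ sufficiently large'' hypothesis and the constant $122=120+2$) and the explicit $\tfrac16$ prefactor, and the hard index $i$ is then located by a pigeonhole argument that detours through the sum-type complexity $H^{\text{det}}_{\ell,1}$ and the inequality $H^{\text{det}}_{\ell,2}\le H^{\text{det}}_{\ell,1}$. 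You instead run a contradiction using the expected-log-likelihood-ratio inequality of Kaufmann--Capp\'e--Garivier plus data processing, which lower-bounds $\mathbb{E}_1[N_i(\tau)]$ for \emph{every} suboptimal arm simultaneously and feeds all of these into the expected resource constraint at once, targeting $H^{\text{det}}_{2,\ell}$ directly by truncating the sum at the maximizing index $k^*_\ell$ and paying a factor $2$ for the discarded $d_{\ell,(2)}$. Your route avoids the concentration event and the $\sqrt{T\log(KT)}$ term entirely, yields a leading constant of roughly $\tfrac{6}{5}\cdot 32\approx 38$ (comfortably under $122$), and needs the largeness of $C_\ell$ only vestigially; I verified the constants ($\mathrm{kl}(p,q)\ge (1-p)\log\frac{1}{1-q}-\log 2$, $\mathrm{kl}(r,1-r)=(1-2r)\log\frac{1-r}{r}\le 16\Delta^2$ on $[1/4,1/2]$, and the half-mass bound $d_{\ell,(2)}\le\frac12\sum_{j=1}^{k^*_\ell}d_{\ell,(j)}$) and they all check out. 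What the paper's heavier machinery buys is the explicit identification of the failing instance and the $\tfrac16$ prefactor as a direct computation rather than a negation, and its good-event apparatus is reused almost verbatim in the stochastic-consumption lower bound (Theorem \ref{theorem:lower-bound-sto-consumption-multiple-resource}), where a plain expected-KL argument is harder to combine with the high-probability control of the random stopping time. One degenerate case that neither you nor the paper addresses is $r_2=r_1=\tfrac12$ (permitted by property (\hyperlink{prop:a}{a})), where $\gamma^{\text{det}}=0$ and the per-arm KL vanishes; both arguments implicitly assume $\Delta_2>0$, so this is not a defect specific to your approach.
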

Theorem \ref{theorem:lower-bound-fixed-consumption-multiple-resource} is proved in Appendix \ref{pf:thm_low_det}. Theorems \ref{theorem:upper-bound-of-failure-det}, \ref{theorem:lower-bound-fixed-consumption-multiple-resource} demonstrate the \textbf{near-optimality of SH-RR}, and the fundamental importance of the quantity $\gamma^{\text{det}}(Q)$ for the BAIwRC problem with deterministic consumption. Indeed, both the BAI failure probability upper bound (of SH-RR) in Theorem \ref{theorem:upper-bound-of-failure-det} and the BAI failure probability lower bound in Theorem \ref{theorem:lower-bound-fixed-consumption-multiple-resource} decay to zero exponentially, with rates linear in $\gamma^{\text{det}}(Q)$. More precisely, the bounds in Theorems \ref{theorem:upper-bound-of-failure-det}, \ref{theorem:lower-bound-fixed-consumption-multiple-resource} imply
\begin{align}
&\underset{\text{strategy}}{\sup} ~\underset{\substack{\text{det inst $Q$:}\\ \gamma^{\text{det}}(Q) \geq \kappa^{\text{det}}}}{\inf}\left\{\frac{- \log\left( \Pr(\text{fail BAI})\right) }{\gamma^{\text{det}}(Q)} \right\} \in\\
&\left[\frac{1}{16 \log_2 K}, 123\right]\label{eq:sup_inf_det}, 
\end{align}
where $\kappa^{\text{det}} = 32 (\log(2K))^{2}$. The supremum is over all feasible strategy, and the infimum is over all instances $Q$ where $\gamma^{\text{det}}(Q) \geq \kappa^{\text{det}}$, i.e. instances with sufficiently large capacities $C_1, \ldots, C_L$. In the special case of fixed-budget BAI, \citep{audibert2010best,CarpentierL16} imply that the right hand side in (\ref{eq:sup_inf_det}) can be $[\frac{1}{8\log_2 K}, \frac{400}{\log K}]$. Pinning down the correct dependence on $\log K$ in (\ref{eq:sup_inf_det}) is an interesting open question.

\textbf{Stochastic consumption setting.} We construct instances $\{Q^{(i)}\}^K_{i=1}$ in a similar way to the case in \text{deterministic consumption setting,} except replacing the consumption model (colored in \textcolor{blue}{blue}) with the following: Pulling arm $k\in [K]$ consumes $D^{(i)}_{\ell, k} \sim \text{Bern}(d_{\ell, k})$ units of resource $\ell$, where $d_{\ell, k}$ is defined in (\ref{eq:consumption_model}). In addition, the reward $R_k$ and $D_{\ell, 1 }, \ldots D_{\ell, K}$ are jointly independent. We have the following lower bound result:
\begin{theorem}
    \label{theorem:lower-bound-sto-consumption-multiple-resource}
   Consider a fixed but arbitrary function $g:[0, +\infty) \rightarrow [0, +\infty)$ that is increasing and $\lim\limits_{d\rightarrow 0^+}\frac{1}{g(d)\log\frac{1}{d}}=+\infty$, $g(0)=0$, as well as any fixed $\{r_k\}_{k=1}^K \subset (0, 1)$, $\frac{1}{2}=r_1 > r_2\geq \cdots \geq r_K=\frac{1}{4}$, any fixed $\{d^0_{\ell, (k)}\}_{k=1, \ell=1}^{K, L} \subset \mathbb{R}$, $d^{0}_{\ell, (1)} \geq d^{0}_{\ell, (2)} \geq \cdots\geq d^{0}_{\ell, (K)}$, and any fixed $i\in\{2,\cdots, K\}$. We can identify $\bar{c} \in (0, 1)$, such that for any $c\in (0, \bar{c})$  and large enough $\{C_{\ell}\}_{\ell=1}^L$, by taking $d_{\ell, (j)}= cd^{0}_{\ell, (j)}, \forall j\in [K], \forall \ell\in[L]$, we can construct corresponding instances $Q^{(j)}$: (1) pulling arm $k\in [K]$ generates a random reward $R_k\sim \mathcal{N}(r^{(j)}_k, 1)$, where $r^{(j)}_k = \bigg\{ \begin{array}{ll} r_k  & \;\text{if } k\ne j,\\1 - r_k & \;\text{if } k= j,\end{array}$, (2) pulling arm $k\in [K]$ consumes $D_{\ell}\sim\text{Bern}(d_{\ell, k})$, $d_{\ell, k} =\Bigg\{\begin{array}{ll}d_{\ell, (2)}&\;\text{if }k=1,\\ d_{\ell,(1)}&\;\text{if }k=2,\\d_{\ell,(k)}&\;\text{if }k\in \{3, \ldots, K\}\end{array}$ for $\ell\in [L]$. The following performance lower bound holds for any strategy:
    \begin{align*}
        \max_{j\in \{1, i\}}\Pr_{Q^{(j)}}(\psi\neq j) \geq \exp\left(-2 \tilde{\gamma}^{\text{sto}}(Q^{(j)})\right),
    \end{align*}
    where $\tilde{\gamma}^{\text{sto}}(Q^{(j)}) = \min_{\ell\in [L]}\frac{C_{\ell}}{\tilde{H}_{2, \ell}^{\text{sto}}}$, and $\tilde{H}_{2, \ell}^{\text{sto}}=\max\limits_{k\in\{2,3,\cdots, K\}} \frac{\sum_{j=1}^k g(d_{\ell, (j)})}{\Delta_k^2}$.
\end{theorem}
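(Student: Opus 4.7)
My plan is a change-of-measure argument between the instances $Q^{(1)}$ and $Q^{(i)}$, which share identical Bernoulli consumption distributions and differ only in arm $i$'s reward (from $\mathcal{N}(r_i,1)$ to $\mathcal{N}(1-r_i,1)$). Applying the Bretagnolle-Huber inequality to the event $\{\psi = 1\}$ and using $\{\psi = 1\}\subseteq\{\psi\neq i\}$ for $i\neq 1$ gives
\begin{align*}
\max_{j\in\{1,i\}}\Pr_{Q^{(j)}}(\psi\neq j)\;\geq\;\tfrac{1}{4}\exp\!\bigl(-\mathrm{KL}(P^{(1)}_\tau,P^{(i)}_\tau)\bigr),
\end{align*}
where $P^{(j)}_\tau$ is the law of the trajectory up to the stopping time $\tau$ under $Q^{(j)}$.

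Since only arm $i$'s reward distribution differs between the two instances, a Wald-style chain rule at $\tau$ yields
\begin{align*}
\mathrm{KL}(P^{(1)}_\tau,P^{(i)}_\tau)\;=\;\mathbb{E}_{Q^{(1)}}[N_i(\tau)]\cdot\mathrm{KL}\!\bigl(\mathcal{N}(r_i,1),\,\mathcal{N}(1-r_i,1)\bigr)\;=\;2\Delta_i^2\,\mathbb{E}_{Q^{(1)}}[N_i(\tau)],
\end{align*}
with $\Delta_i = 1/2 - r_i$ and $N_i(\tau)$ counting arm-$i$ pulls. Reaching the target $\exp(-2\tilde{\gamma}^{\text{sto}}(Q^{(j)}))$ thus amounts to controlling $\mathbb{E}_{Q^{(1)}}[N_i(\tau)]$ by roughly $C_\ell/\sum_{j=1}^i g(d_{\ell,(j)})$ for some $\ell\in[L]$, so that the exponent $2\Delta_i^2\,\mathbb{E}[N_i(\tau)]$ matches $2\tilde{\gamma}^{\text{sto}}$.

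The main obstacle is that this sharpened bound cannot be obtained from the almost-sure consumption constraint alone: taking expectations of $\sum_t D_\ell(t)\leq C_\ell$ only gives $\sum_k d_{\ell,k}\,\mathbb{E}[N_k(\tau)]\leq C_\ell$ and hence $\mathbb{E}[N_i(\tau)]\leq C_\ell/d_{\ell,i}$, which is tight for strategies that spend their entire budget on arm $i$ and insufficient whenever $g(d)>d$ for small $d$ (as allowed by the hypothesis $g(d)\log(1/d)\to 0$). I would circumvent this by taking the scaling constant $c$ small so that every $d_{\ell,k}=c\,d^0_{\ell,k}$ is small, and by invoking $g(d)\log(1/d)\to 0$ to translate smallness of $d$ into smallness of $g(d)\log(1/d)$. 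The argument then appears to proceed by dichotomy: either the strategy's expected arm-$i$ pull count is already controlled by $C_\ell/\sum_j g(d_{\ell,(j)})$, in which case Bretagnolle-Huber closes the proof, or the strategy allocates a disproportionate share of its budget to arm $i$, in which case a more refined large-deviation analysis of the Bernoulli consumption process -- exploiting the rarity of nonzero consumption outcomes for small $d$ together with $g(d)\log(1/d)\to 0$ -- still delivers the target exponent. Packaging this large-deviation estimate into a strategy-independent expectation bound while handling the adaptive stopping time $\tau$ is the technical heart of the proof.

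Finally, combining the two branches of the dichotomy and absorbing the multiplicative $\tfrac{1}{4}$ into the exponent (permissible because $\tilde{\gamma}^{\text{sto}}(Q^{(j)})\to\infty$ as $\min_\ell C_\ell\to\infty$) delivers the claimed lower bound $\max_{j\in\{1,i\}}\Pr_{Q^{(j)}}(\psi\neq j)\geq\exp(-2\tilde{\gamma}^{\text{sto}}(Q^{(j)}))$. Conceptually, the result certifies that the condition $g(d)\log(1/d)\to 0$ cannot be relaxed: no strategy improves on the effective-consumption rate $f(d)=2/\log(1/d)$ of Theorem~\ref{theorem:upper-bound-of-failure} by more than constant factors.
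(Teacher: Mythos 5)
Your setup is right — the two instances, the fact that they differ only in arm $i$'s reward, and a change-of-measure at the stopping time — but the resolution you propose for the central obstacle cannot work, and the obstacle is precisely where the paper's proof lives. Bretagnolle--Huber reduces everything to $\mathbb{E}_{Q^{(1)}}[N_i(\tau)]$, and you then aim to ``package'' a large-deviation estimate ``into a strategy-independent expectation bound'' of order $C_\ell/\sum_{j}g(d_{\ell,(j)})$. No such bound exists: the strategy that pulls only arm $i$ until a resource is exhausted has $\mathbb{E}_{Q^{(1)}}[N_i(\tau)]=\Theta(\min_\ell C_\ell/d_{\ell,i})$, and for admissible $g$ with $g(d)\gg d$ (e.g.\ $g(d)=(\log\tfrac1d)^{-2}$, which satisfies $g(d)\log\tfrac1d\to 0$) this exceeds $C_\ell/\sum_j g(d_{\ell,(j)})$ by an unbounded factor as $c\to 0$. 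No refinement of the upper tail of the consumption process changes that expectation, so both branches of your dichotomy collapse into the one you cannot close.

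The paper escapes this by abandoning the expectation route entirely. It performs the change of measure restricted to the event $\{T_i\le \bar T_i\}\cap\xi_i$ with $\bar T_i\asymp \min_\ell C_\ell/(\tilde H^{\text{sto}}_{1,\ell}\Delta_i^2)$, so that $T_i\widehat{\mathrm{KL}}_{i,T_i}$ is controlled \emph{pointwise} on the event rather than in expectation. The price is that $\Pr_1(T_i\le\bar T_i)$ is itself exponentially small; the key technical ingredient is therefore a \emph{reverse} Chernoff bound (Lemma~\ref{lemma:KL-Divergence-Chernoff-ln}, from Csisz\'ar's method of types) giving a \emph{lower} bound on the probability that $\bar T_i$ Bernoulli$(d)$ draws already exhaust the budget $C_{\ell_0}$, namely $\Pr_1(T_i\le\bar T_i)\ge \exp(-\Theta(C_{\ell_0}\log\tfrac{1}{d_{\ell_0,(i)}}))$ (Lemma~\ref{lemma:Prob-Bound-of-Stopping-Time}). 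This extra exponential cost is then absorbed into the target exponent exactly because the hypothesis $g(d)\log\tfrac1d\to 0$ makes $C\log\tfrac1d$ negligible against $C/\tilde H^{\text{sto}}_{1,\ell}$ for small $c$; this is also why the theorem scales all consumptions by $c<\bar c$. Your proposal contains neither the rare-event restriction nor the anti-concentration lemma that quantifies its probability, and these are the heart of the argument, so as written the proof has a genuine gap. (Your closing interpretation — that the $1/\log\tfrac1d$ effective consumption cannot be improved — is consistent with the paper's discussion, but it is a consequence of the theorem, not a substitute for the missing step.)
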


In Theorem \ref{theorem:lower-bound-sto-consumption-multiple-resource}, which is proved in Appendix \ref{pf:low_sto}, we establish a lower bound for stochastic consumption in multiple resource scenarios. This theorem, alongside Theorem \ref{theorem:upper-bound-of-failure}, illustrates the near-optimality of the SH-RR approach in stochastic cases, similar to the conclusion in deterministic settings.

In Theorem \ref{theorem:upper-bound-of-failure}, we introduce a novel complexity measure, $H_{2, \ell}^{\text{sto}}(Q)$. This measure is notable for incorporating a term $\frac{1}{\log\frac{1}{d}}$, which exceeds $d$ when $d$ is small. This results in a larger and possibly weaker upper bound compared to the deterministic case and is also different from traditional BAI literature. A pertinent question arises: Is it feasible to refine this term from $\frac{1}{\log\frac{1}{d}}$ to $d$?

Theorem \ref{theorem:lower-bound-sto-consumption-multiple-resource} directly addresses this question, clarifying that such a refinement should not be expected to hold for any given sets $\{r_k\}_{k=1}^K$ and $\{d_{\ell, k}\}_{k=1, \ell=1}^{K, L}$. This result decisively indicates that the term $\frac{1}{\log\frac{1}{d}}$ in the definition of $H_{2, \ell}^{\text{sto}}(Q)$ is irreplaceable with $\frac{1}{(\log\frac{1}{d})^{1+\epsilon}}$ for any $\varepsilon > 0$, and certainly not with $d$ itself. This pivotal finding emphasizes a crucial aspect: stochastic consumption scenarios are inherently more complex than deterministic ones, especially in cases where the mean consumptions are extremely low. 

It is crucial to highlight the difference in Theorem \ref{theorem:lower-bound-sto-consumption-multiple-resource}: it asserts $\max_{j\in{1, i}}$ in its conclusion, diverging from the conventional form of $\max_{j\in [K]}$. Additionally, the ratio $\frac{\tilde{H}_{2, \ell}^{\text{sto}}(Q)}{H_{2, \ell}^{\text{sto}}(Q)}$ can approach to 0, given the function $g$ and sufficiently small $\{d_{\ell, (k)}\}_{k=1, \ell=1}^{K, L}$. This gap that might stem from how the pulling times of arm $i$ are approximated. The current derivation, based on the assumption that all resources are allocated to arm $i$, somehow replace the step 2 in appendix \ref{pf:thm_low_det}. This suggests that there is room for improvement in the approximation. A tighter approximation in the exponential term is to be explored.

\section{Numerical Experiments}
We conducted a performance evaluation of the SH-RR method on both synthetic and real-world problem sets. Our evaluation included a comparison of SH-RR against four established baseline strategies: Anytime-LUCB (AT-LUCB) \citep{jun_anytime_2016}, Upper Confidence Bounds (UCB) \citep{BubeckMS09}, Uniform Sampling, and Sequential Halving \citep{KarninKS13} augmented with the doubling trick. Unlike fixed confidence and fixed budget strategies, these baseline methods are \emph{anytime} algorithms, which recommend an arm as the best arm after each arm pull, continuously, until a specified resource constraint is violated. The evaluation was carried out until a resource constraint was breached, at which point the last recommended arm was returned as the identified arm. Fixed confidence and fixed budget strategies were deemed inapplicable to the BAIwRC problem as they necessitate an upper bound on the BAI failure probability and an upper limit on the number of arm pulls in their respective settings. Further details regarding the experimental set-ups are elaborated in appendix \ref{sec:Details-on-the-Numerical-Experiment-Set-ups}.

\begin{figure}
    \centering
    \begin{subfigure}
      \centering
      \includegraphics[width=0.5\textwidth]{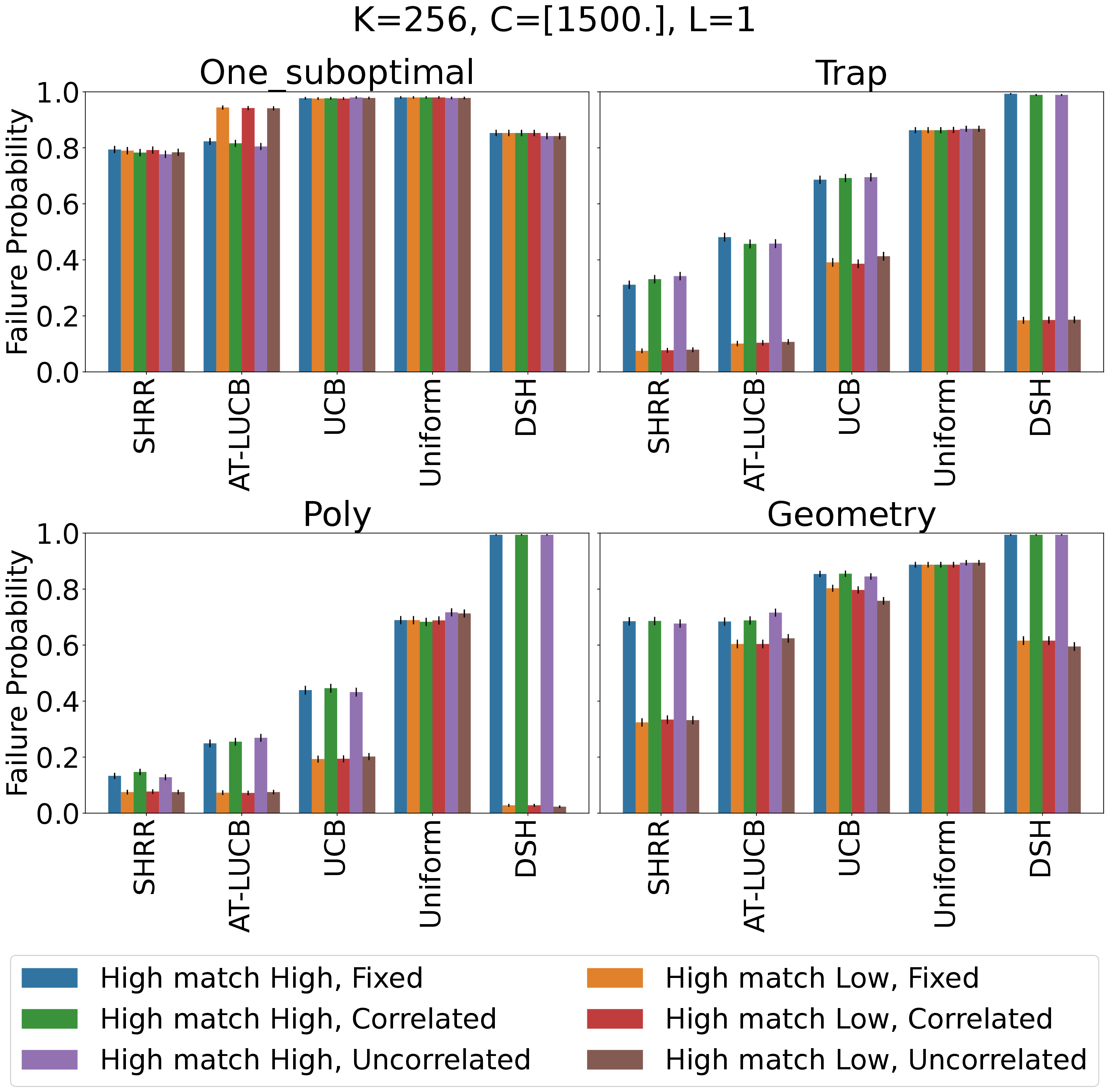}
    \end{subfigure}%
    \begin{subfigure}
      \centering
      \includegraphics[width=0.5\textwidth]{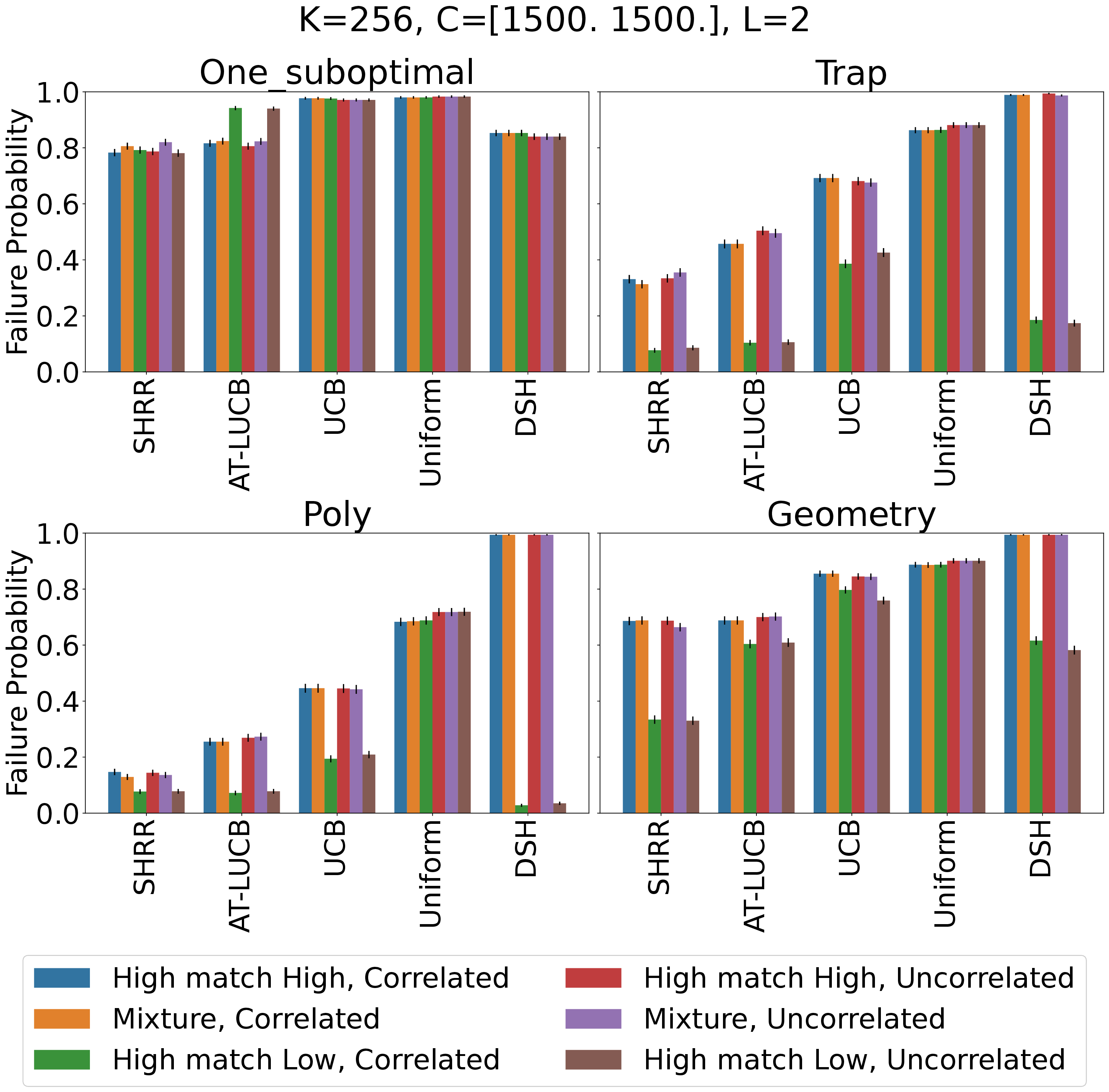}
    \end{subfigure}
    \caption{Comparison of SH-RR and anytime baselines in different setups}
    \label{fig:numeric-result}
\end{figure}

\textbf{Synthesis problems.}
We investigated the performance of our algorithm across various synthetic settings, each with distinct reward and consumption dynamics. (1) \emph{High match High} (\textsf{HmH}) where higher mean rewards correspond to higher mean consumption, (2) \emph{High match Low} (\textsf{HmL}) where they correspond to lower mean consumption, and (3) \emph{Mixture} (\textsf{M}) where each arm consumes less of one resource while consuming more of another, applicable when $L=2$. Additionally, resource consumption variability was categorized into deterministic, correlated (random and correlated with rewards), and uncorrelated (random but independent of rewards), with the deterministic setting omitted for $L=2$ due to similar results to $L=1$. Reward variability across arms was explored through four settings: One Group of Sub-optimal, Trap, Polynomial, and Geometric, analogous to \cite{KarninKS13}. The various combinations of these settings are illustrated in Figure \ref{fig:numeric-result}, with more detailed descriptions provided in Appendices \ref{sec:Details-on-the-Numerical-Experiment-Set-ups-single} and \ref{sec:Details-on-the-Numerical-Experiment-Set-ups-multiple}.

Figure \ref{fig:numeric-result} presents the failure probability of the different strategies in different setups, under {\color{black} $K=256$, $L=1, 2$ with an initial budget of 1500 for each resource}. Each strategy was executed over 1000 independent trials, with the failure probability quantified as {\color{black} $(\text{\# trials that fails BAI}) / 1000$}. Our analysis anticipated a higher difficulty level for the \textsf{HmH} instances, a notion substantiated by the experimental outcomes. The bottom panel illustrates comparable performances on \textsf{M} and \textsf{HmH} setups, suggesting the scarcer resource could predominantly influence the performance. This observed behavior aligns with the utilization of the 
min operator in the definitions of $\gamma^{\text{det}}$ and $\gamma^{\text{sto}}$.

Our proposed algorithm SH-RR is competitive compared to these state-of-the-art benchmarks. SH-RR achieves the best performance by a considerable margin for \textsf{HmL}, while still achieve at least a matching performance compared to the baselines for \textsf{HmH}. SH-RR favors arms with relatively high empirical reward, thus when those arms consume less resources, SH-RR can achieve a higher probability of BAI. For confidence bound based algorithms such as AT-LUCB and UCB, resource-consumption-heavy sub-optimal arms are repeatedly pulled, leading to resource wastage and a higher failure probability. These empirical results demonstrate the necessity of SH-RR algorithm for BAIwRC in order to achieve competitive performance in a variety of settings.  

\textbf{Real-world problems.} 
We implemented different machine learning models, each adorned with various hyperparameter combinations, as distinct arms. The overarching goal is to employ diverse BAI algorithms to unravel the most efficacious model and hyperparameter ensemble for tackling supervised learning tasks. There is a single constraint on running time for each BAI experiment. To meld simplicity with time-efficiency, we orchestrated implementations of four quintessential, yet straightforward machine learning models: K-Nearest Neighbour, Logistic Regression, Adaboost, and Random Forest. Each model is explored with eight unique hyperparameter configurations. We considered 5 classification tasks, including (1) Classify labels 3 and 8 in part of the MNIST dataset (MNIST 3\&8). (2) Optical recognition of handwritten digits data set (Handwritten). (3) Classify labels -1 and 1 in the MADELON dataset (MADELON). (4) Classify labels -1 and 1 in the Arcene dataset (Arcene). (5) Classify labels on weight conditions in the Obesity dataset (Obesity). See appendix \ref{sec:details-realworld} for details on the set-up.

We designated the arm with the lowest empirical mean cross-entropy, derived from a combination of machine learning models and hyperparameters, as the best arm. Our BAI experiments were conducted across 100 independent trials. During each arm pull in a BAI experiment round—i.e., selecting a machine learning model with a specific hyperparameter combination—we partitioned the datasets randomly into training and testing subsets, maintaining a testing fraction of 0.3. The training subset was utilized to train the machine learning models, and the cross-entropy computed on the testing subset served as the realized reward. 

The results, showcased in Table \ref{tab: Performance-of-BAI-Real-World} and \ref{tab: Performance-of-BAI-Real-World-cont}, delineate the failure probability in identifying the optimal machine learning model and hyperparameter configuration for each BAI algorithm. Amongst the tested algorithms, SH-RR emerged as the superior performer across all experiments. This superior performance can be attributed to two primary factors: (1) classifiers with lower time consumption, such as KNN and Random Forest, yielded lower cross-entropy, mirroring the \textsf{HmL} setting; and (2) the scant randomness in realized Cross-Entropy ensured that after each half-elimination in SH-RR, the best arm was retained, underscoring the algorithm's efficacy.

\begin{table}[h]
\begin{center}
    \caption{Failure Probability of different BAI strategies on Real-life datasets}
    \label{tab: Performance-of-BAI-Real-World}
    \centering
    \begin{tabular}{ cccccc } 
      \toprule
      Algorithm & MNIST 3\&8 & Handwritten
      \\
      \midrule
      SHRR & \textbf{0} & \textbf{0.12}\\
      ATLUCB & 0.21 & 0.23\\  
      UCB & 0.21 & 0.34\\
      Uniform & 0.21  & 0.25\\ 
      DSH & 0.14 &  0.20\\
      \bottomrule
    \end{tabular}
    \end{center}
\end{table}

\begin{table}[h]
\begin{center}
    \caption{Failure Probability of different BAI strategies on Real-life datasets, cont.}
    \label{tab: Performance-of-BAI-Real-World-cont}
    \centering
    \begin{tabular}{ cccccc } 
      \toprule
      Algorithm & Arcene & Obesity  & MADELON
      \\
      \midrule
      SHRR & \textbf{0.38} & \textbf{0.31}  & \textbf{0}\\
      ATLUCB & 0.6 & 0.43  & 0.42\\  
      UCB & 0.71 & 0.43  & 0.30 \\
      Uniform & 0.81 & 0.56   & 0.29\\ 
      DSH & 0.67 & 0.54  & 0.12\\
      \bottomrule
    \end{tabular}
    \end{center}
\end{table}

\section*{Acknowledgement}
The authors are very thankful to Kwang-Sung Jun for his detailed explanations on the AT-LUCB policy and the provision of codes. In addition, the authors would like to thank the reviewing team for the constructive suggestions to strengthen the results. The research is partially funded by a Singapore Ministry of Education AcRF Tier 2 Grant (Project ID: MOE-000238-00, Award Number: MOE-T2EP20121-0012). 
\bibliography{main-reference}

\section*{Checklist}

 \begin{enumerate}

 \item For all models and algorithms presented, check if you include:
 \begin{enumerate}
   \item A clear description of the mathematical setting, assumptions, algorithm, and/or model. [Yes/No/Not Applicable]
    
   \textbf{Yes. Please check the section \ref{sec:model-problem_formulation} and \ref{sec:SH-RR_Algorithm}.}
   \item An analysis of the properties and complexity (time, space, sample size) of any algorithm. [Yes/No/Not Applicable]

   \textbf{Yes. Please check the section \ref{sec:main}.}
   \item (Optional) Anonymized source code, with specification of all dependencies, including external libraries. [Yes/No/Not Applicable]

   \textbf{Yes. Please check the supplementary file.}
 \end{enumerate}

 \item For any theoretical claim, check if you include:
 \begin{enumerate}
   \item Statements of the full set of assumptions of all theoretical results. [Yes/No/Not Applicable]

   \textbf{Yes. Please check the section \ref{sec:main} and \ref{sec:main-Lower-Bound}.}
   \item Complete proofs of all theoretical results. [Yes/No/Not Applicable]

   \textbf{Yes. Please check the appendix.}
   \item Clear explanations of any assumptions. [Yes/No/Not Applicable] 

   \textbf{Yes.}
 \end{enumerate}

 \item For all figures and tables that present empirical results, check if you include:
 \begin{enumerate}
   \item The code, data, and instructions needed to reproduce the main experimental results (either in the supplemental material or as a URL). [Yes/No/Not Applicable]

   \textbf{Yes. Please check the appendix.}
   \item All the training details (e.g., data splits, hyperparameters, how they were chosen). [Yes/No/Not Applicable]
    \item A clear definition of the specific measure or statistics and error bars (e.g., with respect to the random seed after running experiments multiple times). [Yes/No/Not Applicable]

    \textbf{Yes. Please check the appendix.}
    
    \item A description of the computing infrastructure used. (e.g., type of GPUs, internal cluster, or cloud provider). [Yes/No/Not Applicable]
    
    \textbf{Yes. Please check the appendix.}
 \end{enumerate}

 \item If you are using existing assets (e.g., code, data, models) or curating/releasing new assets, check if you include:
 \begin{enumerate}
   \item Citations of the creator If your work uses existing assets. [Yes/No/Not Applicable]

   \textbf{Yes.}
   \item The license information of the assets, if applicable. [Yes/No/Not Applicable]

   \textbf{Yes.}
   \item New assets either in the supplemental material or as a URL, if applicable. [Yes/No/Not Applicable]

   \textbf{Not Applicable}.
   \item Information about consent from data providers/curators. [Yes/No/Not Applicable]

   \textbf{Not Applicable}.
   \item Discussion of sensible content if applicable, e.g., personally identifiable information or offensive content. [Yes/No/Not Applicable]

   \textbf{Not Applicable}.
 \end{enumerate}

 \item If you used crowdsourcing or conducted research with human subjects, check if you include:
 \begin{enumerate}
   \item The full text of instructions given to participants and screenshots. [Yes/No/Not Applicable]

   \textbf{Not Applicable}.
   \item Descriptions of potential participant risks, with links to Institutional Review Board (IRB) approvals if applicable. [Yes/No/Not Applicable]

   \textbf{Not Applicable}.
   \item The estimated hourly wage paid to participants and the total amount spent on participant compensation. [Yes/No/Not Applicable]
   \textbf{Not Applicable}.
 \end{enumerate}
\end{enumerate}

\newpage
\appendix

\onecolumn
\section{Auxiliary Results}
\begin{lemma}[Chernoff]
\label{lemma:Chernoff}
Let $\{X_n\}_{n=1}^N$ be i.i.d random variables, where $X_n\in [0,1]$ almost surely, with common mean $\mathbb{E}[X_1]=\mu \in [0,1]$. For any $\mu_+\in (\mu, 1]$, it holds that
\begin{align*}
    \mathbb{P}\left[\frac{1}{N}\sum_{n=1}^NX_n\ge \mu_+\right]\le \exp\left(-N \cdot \text{KL}(\mu_+, \mu)\right),
\end{align*}
where $\text{KL}(p,q)=p\log\frac{p}{q}+(1-p)\log\frac{1-p}{1-q}$ for $p,q\in [0,1]$. In addition, for any $\epsilon\in(0, 2e-1)$, it holds that $$\mathbb{P}\left[\frac{1}{N}\sum_{n=1}^N X_n\ge (1+\epsilon)\mu\right]\le \exp\left(-\frac{N\mu \epsilon^2}{4}\right).$$
\end{lemma}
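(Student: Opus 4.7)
The plan is to prove both parts of the lemma via the classical Cram\'er--Chernoff moment-generating-function method, with the multiplicative bound (second inequality) following from the KL-divergence bound (first inequality) combined with an elementary scalar inequality.

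For the first inequality I would start from Markov applied to $\exp(\lambda N \bar X)$ for an arbitrary $\lambda > 0$, giving $\Pr[\bar X \ge \mu_+] \le \exp(-\lambda N \mu_+) \prod_{n=1}^{N} \mathbb{E}[\exp(\lambda X_n)]$ by independence. Since $X_n \in [0,1]$ almost surely and $x \mapsto e^{\lambda x}$ is convex, the chord bound $e^{\lambda x} \le (1-x) + x e^{\lambda}$ yields $\mathbb{E}[\exp(\lambda X_n)] \le 1 - \mu + \mu e^{\lambda}$. Setting the derivative of the resulting exponent $-\lambda \mu_+ + \log(1 - \mu + \mu e^{\lambda})$ to zero in $\lambda$ gives the optimizer $e^{\lambda^{\ast}} = \mu_+(1-\mu)/(\mu(1-\mu_+))$, which is well-defined because $\mu_+ > \mu$; plugging $\lambda^{\ast}$ back and grouping the logarithms produces exactly $-N \cdot \text{KL}(\mu_+, \mu)$, proving the first inequality.

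For the second inequality I would specialize $\mu_+ = (1+\epsilon)\mu$ (tacitly assuming $(1+\epsilon)\mu \le 1$, since otherwise the left-hand probability is zero and there is nothing to prove) and invoke the coarser Poisson-style MGF bound $\mathbb{E}[\exp(\lambda X_n)] \le \exp(\mu(e^{\lambda} - 1))$, which follows from $\log(1+x) \le x$ applied to the chord bound. Choosing $\lambda = \log(1+\epsilon)$ converts the exponent into $-N\mu \cdot h(\epsilon)$ where $h(\epsilon) := (1+\epsilon)\log(1+\epsilon) - \epsilon$. The lemma then reduces to the deterministic scalar inequality $h(\epsilon) \ge \epsilon^2/4$ on $\epsilon \in (0, 2e-1)$.

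The main obstacle is verifying this scalar inequality, which is tight near the right endpoint and does not succumb to a one-line Taylor estimate. I would study the gap function $\phi(\epsilon) := h(\epsilon) - \epsilon^2/4$, using $\phi(0) = \phi'(0) = 0$ together with $\phi''(\epsilon) = (1+\epsilon)^{-1} - 1/2$: the second derivative is positive on $[0,1)$ and negative on $(1, \infty)$, so $\phi'$ first rises to a positive maximum at $\epsilon = 1$ before monotonically decreasing. The argument then closes by tracking $\phi$ through its sign changes and checking the value at $\epsilon = 2e-1$, which reduces to an algebraic verification using $\log(2e) = 1 + \log 2$. Apart from this careful scalar bookkeeping, the proof is entirely standard Cram\'er--Chernoff.
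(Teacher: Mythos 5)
Your proof of the first inequality is the standard Cram\'er--Chernoff argument and is correct. Note for context that the paper does not actually prove this lemma; it only cites Exercise 10.3 of Lattimore and Szepesv\'ari (2020), so the comparison here is against the standard derivation rather than an in-paper proof.

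The second part has a genuine gap: the scalar inequality $h(\epsilon) := (1+\epsilon)\log(1+\epsilon) - \epsilon \ge \epsilon^2/4$, to which you reduce the whole claim, is \emph{false} on the upper end of the interval $(0, 2e-1)$. Carrying out the endpoint check you describe, with $\log(2e) = 1+\log 2$, gives $h(2e-1) = 2e\log(2e) - (2e-1) = 2e\log 2 + 1 \approx 4.768$, whereas $(2e-1)^2/4 \approx 4.921$; so $\phi(2e-1) = h(2e-1) - (2e-1)^2/4 \approx -0.15 < 0$. Your structural analysis of $\phi$ is right ($\phi(0)=\phi'(0)=0$, $\phi''>0$ on $[0,1)$, $\phi''<0$ on $(1,\infty)$, so $\phi$ rises and then eventually decreases), but the conclusion you need --- that $\phi$ stays nonnegative all the way to $2e-1$ --- does not hold: $\phi$ changes sign near $\epsilon \approx 4.1$ (for instance $h(4) = 5\log 5 - 4 \approx 4.047 > 4$, while $h(4.2) \approx 4.373 < 4.41 = (4.2)^2/4$), and $2e-1\approx 4.437$ lies beyond that crossing. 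Moreover, since $h(\epsilon)$ is essentially the exact large-deviation exponent in the small-$\mu$ Bernoulli (Poisson) limit, this is not a defect of your particular method: the bound with constant $1/4$ cannot hold on all of $(0,2e-1)$, and the valid range must be trimmed (to roughly $\epsilon \le 4$, or the more common $\epsilon \le 1$ with constant $1/3$). The saving grace is that the paper only ever invokes this second inequality with $\epsilon = 2$ (in Case 1 of the proof of Lemma~\ref{lemma:bound_count}), where $h(2) = 3\log 3 - 2 \approx 1.296 \ge 1 = 2^2/4$, so both your argument and the paper's downstream use are sound in the regime that actually matters.
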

Lemma \ref{lemma:Chernoff} can be extracted from Exercise 10.3 in \cite{LattimoreS2020}.

\section{Proofs}
\subsection{Proof of Claim \ref{claim:feasibility}}\label{app:claim_feasibility}
\begin{proof}[Proof of Claim \ref{claim:feasibility}]
    The total type-$\ell$ resource consumption is
    \begin{align*}
        &\sum^{\lceil \log_2 K\rceil -1}_{q = 0} I^{(q)}_\ell\nonumber\\
         =  &\sum^{\lceil \log_2 K\rceil -1}_{q = 0} \left[ \frac{C_\ell}{\lceil\log_2 K\rceil} + ( \textsf{Ration}^{(q)}_\ell- \textsf{Ration}^{(q+1)}_\ell  )\right]\\
        = & C_\ell + ( \textsf{Ration}^{(0)}_\ell- \textsf{Ration}^{(\lceil\log_2 K\rceil)}_\ell  ).
    \end{align*}
    We complete the proof by showing that  $\textsf{Ration}^{(0)}_\ell = \frac{C_\ell}{\lceil\log_2 K\rceil}\leq \textsf{Ration}^{(q)}_\ell$ with certainty for every $q$. Indeed, with certainty we have $I^{(q-1)}_\ell \leq \textsf{Ration}^{(q-1)}_\ell$ for every $q \geq 1$. The \textbf{while} loop maintains that $I^{(q - 1)}\leq \textsf{Ration}^{(q-1)}_\ell - 1$, which ensures that $I^{(q - 1)}\leq \textsf{Ration}^{(q-1)}_\ell$ when the \textbf{while} loop ends, and consequently $\frac{C_\ell}{\lceil\log_2 K\rceil}\leq \textsf{Ration}^{(q)}_\ell$ by Line \ref{alg:ration}. Altogether, the claim is shown.
\end{proof}

\subsection{Proof of Theorem \ref{theorem:upper-bound-of-failure-det}}\label{pf:thm_upp_det}

Denote $T^{(q)}$ as the pulling times at the $q^{th}$ phase, $S^{(q)}$ as the surviving set at the $q^{th}$ phase. The size of $S^{(q)}$ is always $\lceil\frac{K}{2^q}\rceil$. Assume the pulling times of each arm in each phase are the same (the difference is at most 1). $T^{(q)}$ is a random number. Conditioned on $S^{(q)}$, $T^{(q)} = \min_{\ell}  \frac{C_{\ell}}{\lceil\log_2 K\rceil \sum_{k\in S^{(q)}} d_{k,\ell}}$. Thus we can assert $\mathbb{P}\left(T^{(q)}\ge \min_{\ell} \frac{C_{\ell}}{\lceil\log_2 K\rceil \sum_{k=1}^{\lceil \frac{K}{2^{q}} \rceil} d_{\ell,(k)}}\right)=1 $. Define $\bar{T}^{(q)}= T^{(1)}+T^{(2)}+\cdots+T^{(q)}$, then we can assert $\mathbb{P}\left(\bar{T}^{(q)}\ge \min_{\ell} \frac{C_{\ell}}{\lceil\log_2 K\rceil \sum_{k=1}^{\lceil \frac{K}{2^{q}} \rceil} d_{\ell,(k)}}\right)=1$.

Denote set $E_q:=\{i: \hat{r}_{i, \bar{T}_q}> \hat{r}_{1, \bar{T}_q}\}$, and define the bad event 
\begin{equation}\label{eq:bad_det}
B^{(q)}=\{ |E_q| \ge \lceil\frac{K}{2^{q+1}}\rceil\}
\end{equation}

We assert that, for any phase $q$
\begin{align}
    \mathbb{P}(B^{(q)})\le K \exp\left(-\min_{\ell} \frac{C_{\ell}}{4\lceil\log_2 K\rceil H_{2, \ell}} \right).\label{eq:upper_main_assert_det}
\end{align}
Once the main assertion (\ref{eq:upper_main_assert_det}) is shown, the Theorem can be proved by a union bound over all phases:
\begin{align*}
\Pr(\psi\neq 1)
\leq & \mathbb{P}(\cup_{q=1}^{\lceil\log_2 K\rceil} B^{(q)})\\
\leq & \sum_{q=1}^{\lceil\log_2 K\rceil} \mathbb{P}(B^{(q)})\\
\leq & \lceil\log_2 K\rceil K \exp\left(-\min_{\ell} \frac{C_{\ell}}{4\lceil\log_2 K\rceil H^{\text{det}}_{2, \ell}} \right).
\end{align*}
In what follows, we establish the main claim (\ref{eq:upper_main_assert_det}). 

For any $q$, we have
\begin{align*}
    &\mathbb{P}(B^{(q)})\\
    \le & \mathbb{P}\left(\exists k \ge \lceil\frac{K}{2^{q+1}}\rceil, \hat{r}_{k, \bar{T}_q} > \hat{r}_{1, \bar{T}_q}\right)\\
    \le & \mathbb{P}\left(\exists k \ge \lceil\frac{K}{2^{q+1}}\rceil, \exists N \geq \min_{\ell} \frac{C_{\ell}}{\lceil\log_2 K\rceil \sum_{k=1}^{\lceil \frac{K}{2^{q}} \rceil} d_{\ell,(k)}}, \hat{r}_{k, N} > \hat{r}_{1, N}\right)\\
    \le & \sum_{k= \lceil\frac{K}{2^{q+1}}\rceil}^K \mathbb{P}\left(\exists N \geq \min_{\ell} \frac{C_{\ell}}{\lceil\log_2 K\rceil \sum_{k=1}^{\lceil \frac{K}{2^{q}} \rceil} d_{\ell,(k)}}, \hat{r}_{k, N} > \hat{r}_{1, N}\right).
\end{align*}
Denote $N_0:= \min_{\ell} \frac{C_{\ell}}{\lceil\log_2 K\rceil \sum_{k=1}^{\lceil \frac{K}{2^{q}} \rceil} d_{\ell,(k)}}$. For any $k$, let $\{R_{k,n},R_{1,n}\}_{n=1}^{\infty}$ be i.i.d samples of rewards under arm k, arm 1. Define $G_n=R_{k,n}-R_{1,n}+\Delta_k$, then $\mathbb{E}G_n=0$. And $\hat{r}_{k, N}=\frac{1}{N} \sum_{n=1}^N R_{k,n}$, $\hat{r}_{1, N}=\frac{1}{N} \sum_{n=1}^N R_{1,n}$. $\hat{r}_{k, N}>\hat{r}_{1, N} \Rightarrow \frac{\sum_{n=1}^N G_n}{N}>\Delta_k$. Take $\lambda = \Delta_k$,
\begin{align*}
    &\mathbb{P}\left(\exists N \geq N_0, \frac{\sum_{n=1}^N G_n}{N}>\Delta_k\right)\\
    =&\mathbb{P}\left(\exists N \geq N_0, \exp(\lambda \sum_{n=1}^N G_n)>\exp(N\lambda \Delta_k)\right)\\
    =&\mathbb{P}\left(\sup_{N\ge N_0} \frac{\exp(\lambda \sum_{n=1}^N G_n)}{\exp(N\lambda \Delta_k)} > 1\right).
\end{align*}
Since $\mathbb{E}\left[\frac{\exp(\lambda \sum_{n=1}^{N+1} G_n)}{\exp((N+1)\lambda \Delta_k)} | G_1, G_2,\cdots, G_{N}\right] \le \frac{\exp(\lambda \sum_{n=1}^{N} G_n)}{\exp(N\lambda \Delta_k)} \frac{\exp(\frac{\lambda^2}{2})}{\exp(\lambda \Delta_k)}\le \frac{\exp(\lambda \sum_{n=1}^{N} G_n)}{\exp(N\lambda \Delta_k)} $, by Doob's optional stopping theorem, see Theorem 3.9 in \cite{LattimoreS2020} for example.
\begin{align*}
    & \mathbb{P}\left(\sup_{N\ge N_0} \frac{\exp(\lambda \sum_{n=1}^N G_n)}{\exp(N\lambda \Delta_k)} > 1\right)\\
    \le & \mathbb{E}\frac{\exp(\lambda \sum_{n=1}^{N_0} G_n)}{\exp(N_0\lambda \Delta_k)}\\
    \le & \frac{\exp(\frac{N_0\lambda^2}{2})}{\exp(N_0\lambda \Delta_k)}\\
    = & \exp(-\frac{N_0\Delta_k^2}{2})\\
    = & \exp\left(-\min_{\ell} \frac{C_{\ell}}{\lceil\log_2 K\rceil \sum_{k=1}^{\lceil \frac{K}{2^{q}} \rceil} d_{\ell,(k)}} \frac{(r_1-r_k)^2}{2}\right)\\
    \le & \exp\left(-\min_{\ell} \frac{C_{\ell}}{\lceil\log_2 K\rceil \sum_{k=1}^{\lceil \frac{K}{2^{q}} \rceil} d_{\ell,(k)}} \frac{(r_1-r_k)^2}{2}\right).
\end{align*}
Thus
\begin{align*}
    \mathbb{P}(B^{(q)})
    \le K \exp\left(-\min_{\ell} \frac{C_{\ell}}{4\lceil\log_2 K\rceil H^{\text{det}}_{2, \ell}} \right).
\end{align*}
Altogether, the Theorem is proved. $\hfill \qed$

\subsection{Proof of Theorem \ref{theorem:upper-bound-of-failure}}\label{pf:thm_upp_sto}
Before the proof, we need a lemma to bound the pulling times of arms.
\begin{lemma}\label{lemma:bound_count}
Let $\{X_n\}_{n=1}^{\infty}$ be i.i.d random variable, $\mathbb{P}\left(X_n\in [0,1]\right)=1$, and denote $\mathbb{E}X_i =d \in [0,1]$. For any positive integer $N$, it holds that
\begin{equation*}
    \mathbb{P}\left(\frac{1}{N}\sum_{n=1}^N X_n > f(d)\right)\le \exp\left(-\frac{N}{3}\right),
\end{equation*}
where the function $f$ is defined in (\ref{eq:f}).
\end{lemma}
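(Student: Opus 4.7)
The plan is to split according to the two pieces of $f$. For $d\in[e^{-2},1]$ one has $f(d)=e^2 d\geq 1$, and since $X_n\in[0,1]$ almost surely, $\bar{X}_N\leq 1\leq f(d)$, so $\Pr(\bar{X}_N>f(d))=0$ and the bound is trivially satisfied.

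The main work is the case $d\in(0,e^{-2})$, where $f(d)=2/\log(1/d)\in(0,1)$. One easily checks $f(d)>d$ (since $d\log(1/d)\leq 1/e<2$ on $(0,1)$), so Lemma \ref{lemma:Chernoff} applies with $\mu_+=f(d)$ and yields $\Pr(\bar{X}_N>f(d))\leq \exp(-N\,\mathrm{KL}(f(d),d))$. The problem then reduces to proving $\mathrm{KL}(f(d),d)\geq 1/3$. To bound the Bernoulli KL from below, I would apply the elementary inequality $\log x\geq 1-1/x$ to the second term in its definition, obtaining the clean estimate $\mathrm{KL}(p,q)\geq p\log(p/q)-p+q$.

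Substituting $u=\log(1/d)>2$ so that $d=e^{-u}$ and $f(d)=2/u$, a short algebraic simplification (using $\log(u/2)+1=\log(eu/2)$) reduces the lower bound to $\mathrm{KL}(f(d),d)\geq 2-(2/u)\log(eu/2)+e^{-u}$. The key analytic step is then to show that $h(u):=(2/u)\log(eu/2)$ is non-increasing on $[2,\infty)$; a direct derivative calculation gives $h'(u)=-(2/u^2)\log(u/2)\leq 0$ for $u\geq 2$, hence $h(u)\leq h(2)=1$. Combining, $\mathrm{KL}(f(d),d)\geq 1+e^{-u}>1\geq 1/3$, so the exponent is in fact much better than what is claimed---reflecting that the piecewise definition of $f$ in (\ref{eq:f}) is precisely calibrated to make the calculation go through with slack. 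The main obstacle is thus choosing the right KL lower bound and executing the monotonicity argument for $h$; both are routine once set up, and the generous constant $1/3$ absorbs any looseness in the chain of inequalities.
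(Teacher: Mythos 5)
Your proof is correct, and in both cases it departs from the paper's own argument. On $[e^{-2},1]$ the paper does not use your observation that $f(d)=e^2d\ge 1$; it instead notes $f(d)>3d$, applies the multiplicative Chernoff bound of Lemma \ref{lemma:Chernoff} with $\epsilon=2$ to obtain $\exp(-Nd)$, and then asserts $\exp(-Nd)\le\exp(-N/3)$ from the case assumption $d\ge e^{-2}$ --- a step that, as written, actually requires $d\ge 1/3>e^{-2}$. Your version (the sample mean of $[0,1]$-valued variables cannot exceed $1\le f(d)$, so the event is null) is both simpler and immune to that issue. On $(0,e^{-2})$ both proofs invoke the Kullback--Leibler form of Lemma \ref{lemma:Chernoff} with $\mu_+=f(d)>d$, but the paper establishes only $\mathrm{KL}(f(d),d)\ge 1/2$, by expanding the divergence in full, bounding the cross term via $\log(1+x)\le x$, and checking elementary inequalities such as $\tfrac12\log\tfrac1d\ge\log\log\tfrac1d$; you instead use the standard estimate $\mathrm{KL}(p,q)\ge p\log(p/q)-p+q$ and reduce everything to the monotonicity of $h(u)=\tfrac{2}{u}\log\tfrac{eu}{2}$ on $[2,\infty)$, where $h'(u)=-\tfrac{2}{u^2}\log\tfrac{u}{2}\le 0$ and $h(2)=1$ give $\mathrm{KL}(f(d),d)\ge 2-h(u)+e^{-u}\ge 1+e^{-u}>1$. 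I verified the substitution $u=\log\tfrac1d$ and the identity $p\log(p/q)-p+q=2-h(u)+e^{-u}$; they are correct, so your route yields the stronger exponent $\exp(-N)$ in this case, and both approaches comfortably dominate the stated $\exp(-N/3)$.
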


We start by defining $\bar{T}^{(q)} = T^{(1)} + \ldots T^{(q)}$ and set $E_q:=\{i: \hat{r}_{i, \bar{T}^{(q)}}> \hat{r}_{1, \bar{T}^{(q)}}\}$. The bad event is 
\begin{equation}
B^{(q)}=\{ |E_q| \ge \lceil\frac{K}{2^{q+1}}\rceil\}.
\end{equation}
We assert that, for any phase $q$, it holds with certainty that
\begin{align}
    &\mathbb{P}\left(B^{(q)}\right)\leq  2LK \exp\left(-\frac{1}{12}\min_{\ell \in[L]} \{ \frac{C_{\ell}}{\lceil\log_2 K\rceil H_{2, \ell}^{\text{sto}}}\}\right).\label{eq:upper_main_assert}
\end{align}
Remark $\{\hat{k}=1\}\supset \cap_{q=1}^{\log_2 K} \neg B^{(q)}$. Once (\ref{eq:upper_main_assert}) is shown, the Theorem can be proved by a union bound over all phases:
\begin{align*}
    &\mathbb{P}(\hat{k}\neq 1)\\
    \le &\mathbb{P}(\cup_{q=1}^{\log_2 K} B^{(q)})\\
    \le &\sum_{q=1}^{\log_2 K} \mathbb{P}(B^{(q)})\\
    \le &2LK (\log_2 K) \exp\left(-\frac{1}{12}\min_{\ell \in[L]} \{ \frac{C_{\ell}}{\lceil\log_2 K\rceil H_{2, \ell}^{\text{sto}}}\}\right)
\end{align*}

In what follows, we establish the main claim (\ref{eq:upper_main_assert}). For our analysis, we define
$\beta^{(q)}_\ell:=\frac{C_\ell}{ \lceil \log_2 K\rceil \cdot \sum_{k=1}^{\lceil \frac{K}{2^q} \rceil} f(d_{\ell, (k)})}\text{, and }\bar{\beta}^{(q)} = \min_{\ell\in [L]}\{\beta^{(q)}_\ell\}.$, then we can split $\mathbb{P}\left(B^{(q)}\right)$ into two parts.
\begin{align*}
    &\mathbb{P}(B^{(q)})\\
    \le & \mathbb{P}\left(\exists k \ge \lceil\frac{K}{2^{q+1}}\rceil, \hat{r}_{k, \tilde{T}^{(q)}} > \hat{r}_{1, \tilde{T}^{(q)}}\right)\\
    \le &\underbrace{ \mathbb{P}\left(\exists k \ge \lceil\frac{K}{2^{q+1}}\rceil, \hat{r}_{k, \bar{T}^{(q)}} > \hat{r}_{1, \bar{T}^{(q)}}, \bar{T}^{(q)} \ge \bar{\beta}^{(q)} \right) }_{(\P)}\\
    &\qquad \qquad \qquad \qquad+ \underbrace{ \mathbb{P}\left(\bar{T}^{(q)} < \bar{\beta}^{(q)}\right) }_{(\ddagger)}
\end{align*}
To facilitate our discussions, we denote $\{\tilde{D}^{(q)}_{\ell, k}(n)\}^\infty_{n=1}$ as i.i.d. samples of the random consumption of resource $\ell$ by pulling arm $k$. For the term $(\ddagger)$, we have
\begin{align}
& \mathbb{P}\left(\bar{T}^{(q)} < \bar{\beta}^{(q)} \mid \tilde{S}^{(q)}  \right) \\
\leq &\mathbb{E}\left[\mathds{1}\left(T^{(q)}< \bar{\beta}^{(q)}
 \right)\mid \tilde{S}^{(q)}  \right]\nonumber\\
=&\mathbb{E}\Biggr[\mathds{1}\Biggr(\sum_{n=1}^{\lceil\bar{\beta}^{(q)}\rceil}\sum_{k\in \tilde{S}^{(q)}}\tilde{D}^{(q)}_{\ell, k}(n)>\textsf{Ration}_\ell^{(q)}\text{ for some $\ell\in[L]$}\Biggr)\mid \tilde{S}^{(q)} \Biggr]\nonumber\\
\leq &\mathbb{E}\Biggr[\mathds{1}\Biggr(\sum_{n=1}^{\lceil\bar{\beta}^{(q)}\rceil}\sum_{k\in \tilde{S}^{(q)} }\tilde{D}^{(q)}_{\ell, k}(n)> \frac{C_\ell}{\lceil \log_2 K\rceil}\text{ for some $\ell\in[L]$} \Biggr)\mid \tilde{S}^{(q)}\Biggr]\label{eq:by_ration}\\
\le & \sum_{\ell\in [L]}\sum_{k\in \tilde{S}^{(q)} }\mathbb{E}\Biggr[\mathds{1}\left(\sum_{n=1}^{\lceil \bar{\beta}^{(q)}\rceil} \tilde{D}^{(q)}_{k}(n)>\frac{C_\ell f(d_{\ell, k})}{\lceil \log_2 K\rceil \cdot \sum_{k'\in \tilde{S}^{(q)} }f(d_{\ell, k'})}\right)\mid \tilde{S}^{(q)} \Biggr]\label{eq:pigeon}\\
\le & \sum_{\ell\in [L]}\sum_{k\in \tilde{S}^{(q)}}\mathbb{E}\Biggr[\mathds{1}\left(\sum_{n=1}^{\lceil \beta^{(q)}_\ell\rceil }\tilde{D}^{(q)}_{k}(n)>\beta^{(q)}_\ell f(d_{\ell, k})\right)\mid \tilde{S}^{(q)}  \Biggr]\label{eq:def_beta}\\
\le & \sum_{\ell\in [L]}\sum_{k\in\tilde{S}^{(q)}} \exp\left(-\frac{\beta^{(q)}_\ell }{3}\right)\label{eq:by_lemma_4_1}\\
\leq & L \cdot |\tilde{S}^{(q)}|\cdot \exp\left(-\frac{\bar{\beta}^{(q)}}{3}\right). \label{eq:for_ddagger}
\end{align}
Step (\ref{eq:by_ration}) is by the invariance $\textsf{Ration}_\ell^{(q)} \geq \frac{C_\ell}{\lceil \log_2 K\rceil}$ maintained by the \textbf{while} loop of SH-RR. Step (\ref{eq:pigeon}) is by the pigeonhole principle. Step (\ref{eq:def_beta}) is by the definition of $\beta^{(q)}_\ell$. Step (\ref{eq:by_lemma_4_1}) is by applying Lemma \ref{lemma:bound_count}. 

Next, we analyze the term $(\P)$, we denote $$(\dagger)^{(q)}_k=\mathbb{P}\left(\hat{r}_{1, \bar{T}^{(q)}}^{(q)}<\hat{r}_{k, \bar{T}^{(q)}}^{(q)},\bar{T}^{(q)}>\bar{\beta}^{(q)}\right).$$ 
We remark that $(\P) \leq  \sum_{k= \lceil\frac{K}{2^{q+1}}\rceil}^K (\dagger)^{(q)}.$ Let $\{R^{(q)}_{k}(n)\}^\infty_{n=1},\{R^{(q)}_{1,n}\}^\infty_{n=1}$ be i.i.d. samples of the rewards under arm $k$ and arm 1 respectively. For each $n$, we define $W(n)=R^{(q)}_{k}(n)-R^{(q)}_{1}(n)+\Delta_k$, where we recall that $\Delta_k = r_1 - r_k$. Clearly, $\mathbb{E}[W(n)]=0$, and $W(n)$ are i.i.d. 1-sub-Gaussian. For any $\lambda> 0$, we have
\begin{align}
    &(\dagger)^{(q)}_k \nonumber\\
    \leq &\mathbb{P}\left(\exists N \geq \bar{\beta}^{(q)}, \frac{\sum_{n=1}^N G_n}{N}>\Delta_k\right)\nonumber\\
    =&\mathbb{P}\left(\sup_{N\ge \bar{\beta}^{(q)}} \frac{\exp(\lambda \sum_{n=1}^N G_n)}{\exp(N\lambda \Delta_k)} > 1\right) \nonumber\\
    \le& \mathbb{E}\left[\frac{\exp\left(\lambda\sum_{n=1}^{\lceil \bar{\beta}^{(q)}\rceil } W(n)\right)}{\exp\left(\lambda \lceil \bar{\beta}^{(q)}\rceil \Delta_k\right)}  \right]\label{eq:doob}\\
    \leq &\frac{\exp\left(\frac{\lambda^2 \lceil \bar{\beta}^{(q)}\rceil}{2}\right)}{\exp\left(\lambda \lceil \bar{\beta}^{(q)}\rceil \Delta_k\right)}\label{eq:rew-subgau}.
\end{align}
Step (\ref{eq:doob}) is by the maximal inequality for (super)-martingale, which is a Corollary of the Doob's optional stopping Theorem, see Theorem 3.9 in \cite{LattimoreS2020} for example. Step (\ref{eq:rew-subgau}) is by the fact that $G(n)$ is 1-sub-Gaussian. Finally, applying $\lambda = \Delta_k$, (\ref{eq:rew-subgau}) leads us to
$
(\dagger)^{(q)}_k \leq \exp(- \bar{\beta}^{(q)}\Delta_k^2 / 2), 
$, meaning
\begin{align}
(\P) &\leq K \exp\left(-\min_{\ell\in [L]}\{\beta^{(q)}_\ell\} \frac{(r_1-r_{ \lceil\frac{K}{2^{q+1}}\rceil })^2}{2}\right)\label{eq:final_P}.
\end{align}
Step (\ref{eq:final_P}) is by the assumption that $\Delta_k$ is not decreasing. Altogether, combining the upper bounds (\ref{eq:by_lemma_4_1},\ref{eq:final_P}) to $(\ddagger) , (\P)$ respectively, leads us to the proof of (\ref{eq:upper_main_assert}).
\begin{align*}
    &\mathbb{P}(B^{(q)})\\
    \le & \sum_{k= \lceil\frac{K}{2^{q+1}}\rceil}^K \mathbb{P}\left(\hat{r}_{k, \bar{T}_q} > \hat{r}_{1, \bar{T}_q}, \mathds{1}(\bar{T}_q \ge \bar{\beta}^{(q)})\right) + L \cdot K \cdot \exp\left(-\frac{1}{3} \min_{\ell\in [L]}\{\beta^{(q)}_\ell\}\right)\\
    \leq & K \exp\left(-\min_{\ell\in [L]}\{\beta^{(q)}_\ell\} \frac{(r_1-r_{\lceil\frac{K}{2^{q+1}}\rceil})^2}{2}\right) + L \cdot K \cdot \exp\left(-\frac{1}{3} \min_{\ell\in [L]}\{\beta^{(q)}_\ell\}\right)\\
    \leq & 2LK \exp\left(-\frac{1}{3}\min_{\ell \in[L]} \{ \frac{C_{\ell}}{4\lceil\log_2 K\rceil H_{2, \ell}^{\text{sto}}}\}\right)\\
    = & 2LK \exp\left(-\frac{1}{12}\min_{\ell \in[L]} \{ \frac{C_{\ell}}{\lceil\log_2 K\rceil H_{2, \ell}^{\text{sto}}}\}\right).
\end{align*}

\subsection{Proof of Lemma \ref{lemma:bound_count}}\label{pf:lemma_bound_count}
The proof involves the consideration of two cases: $d\in(e^{-2},1]$ and $d\in(0, e^{-2}]$. 

\textbf{Case 1: $d\in(e^{-2},1]$.} In this case, we have $f(d)  > 3d$. Consequently,
\begin{align}
& \mathbb{P}\left(\frac{1}{N}\sum_{n=1}^N X_n > f(d)\right)\nonumber\\
    \le &\mathbb{P}\left(\frac{1}{N}\sum_{n=1}^N X_n > 3d\right)\le \exp\left(-\frac{2^2 N d}{4}\right)\label{eq:bound_count_11}\\
    =&\exp(-Nd)\le \exp\left(-\frac{N}{3}\right).\label{eq:bound_count_12}
\end{align}
Step (\ref{eq:bound_count_11}) is by the Chernoff inequality (see Lemma \ref{lemma:Chernoff}), and step (\ref{eq:bound_count_12}) is by the case assumption that $d\geq e^{-2}$. Altogether, \textbf{Case 1} is shown.

\textbf{Case 2: $d\in (0, e^{-2})$.} In this case, note that we still have $f(d) = 2(\log(1/d))^{-1} \geq 2d > d$. We assert that $\text{KL}(f(d), d) \geq 1/2$. Given the assertion, applying Lemma \ref{lemma:Chernoff} gives
\begin{align*}
& \mathbb{P}\left(\frac{1}{N}\sum_{n=1}^N X_n > f(d)\right)\\
   \le &\exp\left(-N \cdot \text{KL}(f(d), d)\right)\\
    \le &\exp\left(-\frac{N}{2}\right)\le \exp\left(-\frac{N}{3}\right),
\end{align*}
which establishes the desired inequality. In the remaining, we show the assertion, which is equivalent to the assertion 
\begin{equation}\label{eq:bound_count_case_2}
\left(\frac{1}{2}\log\frac{1}{d} \right)\cdot \text{KL}(f(d),d)\le \frac{1}{4}\log\frac{1}{d}. 
\end{equation}
To demonstrate (\ref{eq:bound_count_case_2}), we start with the left hand side of (\ref{eq:bound_count_case_2}):
\begin{align}
&\left(\frac{1}{2}\log\frac{1}{d}\right)\cdot  \text{KL}\left(\frac{2}{\log \frac{1}{d}},d\right)\nonumber\\
=&\log\left(\frac{2}{d\log \frac{1}{d}}\right) + \frac{1}{2}\log\frac{1}{d}\left(1-\frac{2}{\log\frac{1}{d}}\right)\log \frac{1-\frac{2}{\log\frac{1}{d}}}{1-d}\nonumber\\
=& \log 2 +\log \frac{1}{d} - \log\log\frac{1}{d}\nonumber\\
&- \underbrace{\left(\frac{1}{2}\log \frac{1}{d}-1\right)\cdot \log\left(1+\frac{\frac{2}{\log\frac{1}{d}}-d}{1-\frac{2}{\log\frac{1}{d}}}\right)}_{(\dagger)}.\label{eq:bound_count_step_0}
\end{align}
We argue that $(\dagger) \leq 1$. Indeed, 
\begin{align}
(\dagger) \le & \left(\frac{1}{2}\log\frac{1}{d}-1\right) \cdot \frac{1}{1-\frac{2}{\log\frac{1}{d}}}\cdot \left(\frac{2}{\log\frac{1}{d}}-d\right)\label{eq:bound_count_step_1}\\
=&\left(\frac{1}{2}\log\frac{1}{d}\right) \cdot \left(\frac{2}{\log\frac{1}{d}}-d\right)\nonumber\\
=&1-\frac{d}{2}\log\frac{1}{d}\le 1. \label{eq:bound_count_step_2} 
\end{align}
Step (\ref{eq:bound_count_step_1}) is by the fact that $\log (1+x)\le x$ for all $ x>-1$. Step (\ref{eq:bound_count_step_2}) is by the case assumption that $d\in (0, e^{-2})$. Next, we apply the bound $(\dagger)\leq 1$ to (\ref{eq:bound_count_step_0}), which yields
\begin{align}
&\left[\frac{1}{2}\log\frac{1}{d}\right]\cdot  \text{KL}\left(\frac{2}{\log \frac{1}{d}},d\right)\nonumber\\
\ge & \log 2 +\log \frac{1}{d} - \log\log\frac{1}{d}-1\nonumber\\
\ge & \log \frac{1}{d} - \log\log\frac{1}{d} - 0.5\nonumber\\
\ge & \frac{1}{4}\log\frac{1}{d}.\label{eq:bound_count_step_3}
\end{align}
Step (\ref{eq:bound_count_step_3}) follows from the fact that $\frac{1}{4}\log\frac{1}{d}\ge0.5$ and $\frac{1}{2}\log\frac{1}{d}\ge\log\log\frac{1}{d}$ hold for any $d\in(0, e^{-2})$. Altogether, \textbf{Case 2} is shown and the Lemma is proved.  $\hfill \qed$

\subsection{Proof of Theorem \ref{theorem:lower-bound-fixed-consumption-multiple-resource}}\label{pf:thm_low_det}
To facilitate our discussion, we denote $\mathbb{E}_i[\cdot]$ as the expectation operator corresponding to the probability measure $\Pr_i$. 
Theorem \ref{theorem:lower-bound-fixed-consumption-multiple-resource} is proved in the following two steps. 

\textbf{Step 1.} We show that, under the assumption $\Pr_1(\psi\neq 1) <1/2$, for every $i\in \{2, \ldots, K\}$ it holds that
\begin{equation}\label{eq:det_first}
   \Pr_i(\psi\neq i) \geq \frac{1}{6}\exp\left(-60t_i\left(\frac{1}{2}-r_i\right)^2 -2\sqrt{T\log(12KT)}\right),
\end{equation}
where 
\begin{equation}\label{eq:tiTi_det}
t_i = \mathbb{E}_1[T_i], \quad T_i = \sum^\tau_{t=1} \mathbf{1}(A(t) = i)
\end{equation}
is the number of times pulling arm $i$, and 
\begin{equation}\label{eq:T_det}
T = \min_{\ell\in [L]}\left\{\lfloor\frac{C_{\ell}}{d_{{\ell},(K)}}\rfloor\right\}
\end{equation}
is an upper bound to the number of arm pulls by any policy that satisfies the resource constraints with certainty. Note that if the assumption $\Pr_1(\psi\neq 1) <1/2$ is violated, the conclusion in Theorem \ref{theorem:lower-bound-fixed-consumption-multiple-resource} immediately holds for $Q^{(1)}$. 

\textbf{Step 2.} We show that there exists $i\in \{2, \ldots K\}$ such that 
\begin{align}
t_i (1/2 - r_i)^2 & \leq \text{min}_{\ell\in [L]} \left\{\frac{2 C_\ell}{H^\text{det}_{\ell, 2}(Q_1)}\right\}\label{eq:step_2_det_main}\\
&  \leq \text{min}_{\ell\in [L]} \left\{\frac{2 C_\ell}{H^\text{det}_{\ell, 2}(Q_i)}\right\}.\nonumber
\end{align}
This step crucially hinges on the how the consumption model is set in (\ref{eq:consumption_model}). Finally, Theorem \ref{theorem:lower-bound-fixed-consumption-multiple-resource} follows by taking $C_1, \ldots, C_L$
so large that
$$
\text{min}_{i\in [K],\ell\in [L]} \left\{\frac{2 C_\ell}{H^\text{det}_{\ell, 2}(Q_i)}\right\} \geq \sqrt{T\log(12KT)}.
$$
Such $C_1, \ldots, C_L$ exist. For example, we can take $C_1 = \ldots = C_L = C$, then the left hand side of the above condition grows linearly with $C$, while the right hand side only grows linearly with $\sqrt{C\log C}$. Altogether, the Theorem is shown, and it remains to establish \textbf{Steps 1, 2}.

\textbf{Establishing on Step 1.} To establish (\ref{eq:det_first}), we follow the approach in \citep{CarpentierL16} and consider the event  $$\mathcal{E}_i=\{\psi=1\}\cap\{T_i\le 6t_i\}\cap \{\xi\}$$
for $i\in [K]$. The quantities $T_i, t_i$ are as defined in (\ref{eq:tiTi_det}), and $\xi$ is an event concerning an empirical estimate on a certain KL divergence term. To define $\xi$, it requires some set up. Denote $\nu^{(i)}_k$ as the outcome distribution of arm $k$ in instance $Q^{(i)}$ (recall that the outcome consists of the reward $R_k$ and the consumption $D_{1, k}, \ldots D_{\ell, k}$, where $R_k$ has different distributions under different $Q^{(i)}$, while the distribution of $D_{1, k}, \ldots D_{\ell, k}$ is invariant across $Q^{(1)}, \ldots, Q^{(K)}$). Define 
\begin{align}
\text{KL}_i& =\text{KL}(\nu^{(i)}_i, \nu^{(1)}_i)\nonumber\\
&=\text{KL}(\text{Bern}(r_k), \text{Bern}(1- r_k)) \label{eq:by_model_i_def}\\
&= \text{KL}( \text{Bern}(1- r_k), \text{Bern}(r_k))\\
&=(1-2r_k)\log\left(\frac{1-r_k}{r_k}\right),\nonumber
\end{align}
where (\ref{eq:by_model_i_def}) is by the fact that the outcomes $\nu^{(i)}_i, \nu^{(1)}_i$ are identical in the resource consumption but only different in reward. In addition, for each $i\in [K], t\in [T_i], $ we define
$$\widehat{\text{KL}}_{i,t}=\frac{1}{t}\sum_{s=1}^t(1 - 2\tilde{R}_{i}(s))\log\frac{1-r_k}{r_k},$$
where $\tilde{R}_i(1), \ldots, \tilde{R}_i(T_i)$ are arm $i$ rewards received during the $T_i$ pulls of arm $i$ in the online dynamics (recall the definition of $T_i$ in (\ref{eq:tiTi_det})). Note that $T_i\leq T$ with certainty. Finally, define confidence radius
$$\textsf{rad}(t) = (\sqrt{2}\log 3)\cdot \sqrt{\frac{\log 12K T}{t}},$$
and the event $\xi$ is defined as 
\begin{equation}\label{eq:xi}
\xi=\left\{\forall i\in [K],t \in [T_i], |\widehat{\text{KL}}_{i,t}|-\text{KL}_i\le 
\textsf{rad}(t)\right\}.
\end{equation}
The event $\xi$ is also considered in \citep{CarpentierL16} when $T$ is part of the problem input instead of a set parameter in (\ref{eq:T_det}), and the following result from \citep{CarpentierL16} still carries over:
\begin{lemma}[Lemma 4 in \cite{CarpentierL16}]
\label{lemma:Lower-bound-of-good-case-Multiple-Resource}
    $\mathbb{P}_{\mathcal{G}^i}(\xi)\ge \frac{5}{6}$ holds for all $i\in [K]$.
\end{lemma}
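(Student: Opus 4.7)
The plan is to establish the uniform concentration event $\xi$ via a single application of Hoeffding's inequality to each arm's reward stream, followed by a union bound. The key observation is that, under $\mathbb{P}_{\mathcal{G}^i}$, the realized rewards $\tilde{R}_j(1), \tilde{R}_j(2), \ldots$ of each arm $j\in[K]$ can be coupled to a pre-sampled i.i.d. sequence of Bernoulli draws (independent of the policy's adaptive choices), with mean $r_j^{(i)}=r_j$ if $j\ne i$ and $r_j^{(i)}=1-r_j$ if $j=i$. This makes Hoeffding applicable to partial averages indexed by a deterministic $t$, irrespective of the random pull counts $T_j$.

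First, I would reduce $|\widehat{\mathrm{KL}}_{j,t}|-\mathrm{KL}_j$ to a deviation from the mean. A direct computation gives $\mathbb{E}_{\mathcal{G}^i}\!\left[(1-2\tilde{R}_j(s))\log\tfrac{1-r_j}{r_j}\right]=(1-2r_j^{(i)})\log\tfrac{1-r_j}{r_j}$, which equals $\mathrm{KL}_j$ if $j\ne i$ and $-\mathrm{KL}_i$ if $j=i$. In both cases $\bigl|\mathbb{E}_{\mathcal{G}^i}[\widehat{\mathrm{KL}}_{j,t}]\bigr|=\mathrm{KL}_j$, so by the reverse triangle inequality
$$|\widehat{\mathrm{KL}}_{j,t}|-\mathrm{KL}_j \;\le\; \bigl|\widehat{\mathrm{KL}}_{j,t}-\mathbb{E}_{\mathcal{G}^i}[\widehat{\mathrm{KL}}_{j,t}]\bigr|.$$

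Second, since property (a) gives $r_j\in[1/4,1/2]$, each summand $(1-2\tilde{R}_j(s))\log\tfrac{1-r_j}{r_j}$ takes values in $\{\pm\log\tfrac{1-r_j}{r_j}\}\subseteq[-\log 3,\log 3]$, a range of length $2\log 3$. Hoeffding's inequality then yields, for any fixed $t\geq 1$ and $u>0$,
$$\mathbb{P}_{\mathcal{G}^i}\!\left(\bigl|\widehat{\mathrm{KL}}_{j,t}-\mathbb{E}_{\mathcal{G}^i}[\widehat{\mathrm{KL}}_{j,t}]\bigr|\ge u\right)\;\le\;2\exp\!\left(-\frac{tu^2}{2(\log 3)^2}\right).$$
Plugging in $u=\mathsf{rad}(t)=\sqrt{2}\log 3\cdot\sqrt{\log(12KT)/t}$ makes the exponent exactly $-\log(12KT)$, so the right-hand side equals $2/(12KT)=1/(6KT)$.

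Third, since $T$ defined in (\ref{eq:T_det}) upper bounds the number of pulls of any arm under any feasible policy, $T_j\le T$ with certainty, and I can replace the random index set $[T_j]$ by the deterministic $[T]$ and union-bound over $(j,t)\in[K]\times[T]$:
$$\mathbb{P}_{\mathcal{G}^i}(\neg\xi)\;\le\;\sum_{j\in[K]}\sum_{t=1}^{T}\frac{1}{6KT}\;=\;\frac{1}{6},$$
which yields $\mathbb{P}_{\mathcal{G}^i}(\xi)\ge 5/6$ as claimed. The main subtlety is the already-flagged coupling step: one must justify that a Hoeffding-type tail bound for the first $t$ samples is available even though the policy decides adaptively when to pull arm $j$; this is standard (the rewards can be realized as an independent i.i.d. table that is merely \emph{revealed} upon pulling), so no further technical work is required beyond the routine calculations above.
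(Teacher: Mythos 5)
Your proof is correct and takes essentially the approach behind the paper's treatment: the paper does not reprove this lemma but imports it as Lemma 4 of \cite{CarpentierL16}, whose argument is exactly your Hoeffding-plus-union-bound computation (each increment of $\widehat{\text{KL}}_{j,t}$ lies in $[-\log 3,\log 3]$ since $r_j\in[1/4,1/2]$, and $\textsf{rad}(t)$ is calibrated so that each of the $KT$ deviation events fails with probability at most $1/(6KT)$, with the pre-sampled reward table handling adaptivity). You also correctly supply the one adaptation the paper only asserts in passing, namely that in the deterministic consumption setting $T=\min_{\ell\in[L]}\lfloor C_\ell/d_{\ell,(K)}\rfloor$ bounds the total number of pulls of any feasible policy with certainty, so every $T_j\le T$ and the union bound over the deterministic index set $[K]\times[T]$ covers the random index sets $[T_j]$ in the definition of $\xi$.
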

For every $i\in \{2, \ldots, K\}$, we have
\begin{align}
    & \Pr_i(\psi \neq i)\nonumber\\
    \geq &\mathbb{P}_{i}(\mathcal{E}_i)\nonumber\\
    =&\mathbb{E}_{1}\left(\mathds{1}\{\mathcal{E}_i\}\exp(-T_i\widehat{\text{KL}}_{i,T_i})\right)\label{eq:by_change_of_measure}\\
    \ge&\mathbb{E}_{1}\left(\mathds{1}\{\mathcal{E}_i\}\exp(-T_i KL_i -2\sqrt{T_i\log(12KT)})\right)\label{eq:by_event_xi}\\
    \ge& \exp\left(-6t_i \text{KL}_i -2\sqrt{T \log(12KT)}\right)\cdot\Pr_1(\mathcal{E}_i)\label{eq:by_tiTi}\\
    \ge& \left[ \frac{2}{3} - \Pr_1(\psi \neq 1)\right]\cdot \exp\left(-6t_i \text{KL}_i -2\sqrt{T \log(12KT)}\right)\label{eq:by_bounding}\\
    \ge& \left[ \frac{2}{3} - \Pr_1(\psi \neq 1)\right]\cdot\exp\left(-60t_i \left(\frac{1}{2} - r_i\right)^2 -2\sqrt{T \log(12KT)}\right)\label{eq:by_KL_bound}.
\end{align}
The above calculations establishes \textbf{Step 1}, and we conclude the discussion on \textbf{Step 1} by justifying steps (\ref{eq:by_change_of_measure}-\ref{eq:by_bounding}). 
    
Step (\ref{eq:by_change_of_measure}) is by a change-of-measure identity frequently used in the MAB literature. For example, it is established in equation (6) in \cite{audibert2010best} and Lemma 18 in \cite{KaufmanCG16}. The identity is described as follows. Let $\tau$ be a stopping time with respect to $\{\sigma(H(t))\}^\infty_{t=1}$, where we recall that $H(t)$ is the historical observation up to the end of time step $t$. For any event ${\cal E}\in \sigma(H(\tau))$ and any instance index $i\in \{2, \ldots K\}$, it holds that
\begin{align*}
    \mathbb{P}_{\mathcal{G}^i}(\mathcal{E})=\mathbb{E}_{\mathcal{G}^1}\left[\mathds{1}\{\mathcal{E}\}\exp(-T_i\widehat{\text{KL}}_{i,T_i})\right].
\end{align*}
Consequently, step (\ref{eq:by_change_of_measure}) holds by the fact that the choice of arm $\psi$ only depends on the observed trajectory $\sigma(H(\tau))$, and evidently both $T_i$ and $\xi$ are both $\sigma(H(\tau))$-measurable. Step (\ref{eq:by_event_xi}) is by the event $\xi$. Step (\ref{eq:by_tiTi}) is by the event that $T_i\leq 6t_i$. 

Step (\ref{eq:by_bounding}) is by the following calculations:
\begin{align}
    &\Pr_{1}(\mathcal{E}_i)  = 1-\Pr_{1}(\neg \mathcal{E}_i)\nonumber\\
    \ge&1-\mathbb{P}_{\mathcal{G}^1}(\neg\{\psi=1\})-\mathbb{P}_{\mathcal{G}^1}(\neg\{\xi\})-\mathbb{P}_{\mathcal{G}^1}(\neg\{T_k\le 6t_k\})\nonumber\\
    \ge& \frac{2}{3} - \Pr_1(\psi\neq 1).\label{eq:by_events}
\end{align}
Step (\ref{eq:by_events}) follows from Lemma \ref{lemma:KL-Divergence-Chernoff-ln} which shows $\Pr_1(\xi) \geq 5/6$, and the Markov inequality that shows that for any $i\in \{2, \ldots, K\}$:
\begin{align*}
    \Pr_{i}(T_i\ge 6t_i)\le \frac{1}{6}.
\end{align*}
Finally, step (\ref{eq:by_KL_bound}) is by the fact that $r_i\in [\frac{1}{4}, \frac{1}{2}]$ for all $i\in [K]$, leading to $0\le \text{KL}_i\le 10(1-r_i)^2$ for all $i\in [K]$.

\textbf{Establishing Step 2.} To proceed with \textbf{Step 2}, we first define a complexity term $H^{\text{det}}_{1, \ell}(Q)$, which is similar to $H^{\text{det}}_{2, \ell}(Q)$ but the former aids our analysis. 
For a deterministic consumption instance $Q$ (whose arms are not necessarily ordered as $r_1\geq r_2\geq \ldots, r_K$), we denote $\{r_{(k)}\}^K_{k=1}$ as a permutation of $\{r_k\}^K_{k=1}$ such that $r_{(1)} > r_{(2)}\geq   \ldots \geq  r_{(K)}$. For example, when $Q= Q^{(i)}$, we can have $r_{(1)} = r^{(i)}_i$, $r_{(j+1)} = r^{(i)}_j$ for $j\in \{1,\ldots, i-1\}$, and $r_{(j)} = r^{(i)}_j$ for $j\in \{i+1,\ldots, K\}$. Similarly, denote $\{d_{\ell,(k)}\}^K_{k=1}$ as a permutation of $\{d_{\ell,k}\}^K_{k=1}$ such that $d_{\ell,(1)}\geq d_{\ell,(2)}\geq \ldots \geq d_{\ell,(K)}$. 
Define $\Delta_{(1)} = \Delta_{(2)} = r_{(1)} - r_{(2)}$, and define $\Delta_{(k)} = r_{(1)} - r_{(k)}$ for $k\in \{3, \ldots, K\}$. Now, we are ready to define 
\begin{equation}\label{eq:H_1_det}
H^{\text{det}}_{\ell, 1}(Q) = \sum^K_{k=1} \frac{d_{\ell, (k)}}{\Delta_{(k)}^{2}}.
\end{equation}

In the special case of $L = 1$ and $d_{1, k} = 1$ for all $k \in [K]$, the quantity $H^{\text{det}}_{\ell, 1}(Q)$ is equal to the complexity term $H_1$ defined for BAI in the fixed confidence setting \citep{audibert2010best} (the term $H_1$ is relabeled as $H$ in subsequent research works \cite{KarninKS13, CarpentierL16}). Observe that for any deterministic consumption instance $Q$, we always have 
\begin{equation}\label{eq:observe_H0}
H^{\text{det}}_{\ell, 2}(Q) \leq H^{\text{det}}_{\ell, 1}(Q).
\end{equation}
In addition, we observe that for any $i\in \{2, \ldots, K\}$ and any $\ell\in [L]$, it holds that
\begin{align}
H^{\text{det}}_{\ell,1}(Q^{(1)})&\ge H^{\text{det}}_{\ell,1}(Q^{(i)}),\label{eq:H_observe1}\\
H^{\text{det}}_{\ell,2}(Q^{(1)})&\ge H^{\text{det}}_{\ell,2}(Q^{(i)}).\label{eq:H_observe2}
\end{align}

After defining $H^{\text{det}}_{\ell, 1}(Q)$, we are ready to proceed to establishing \textbf{Step 2}. Recall that $ T_i = \sum^\tau_{t=1} \mathbf{1}(A(t) = i)$ is the number of arm pulls on arm $i$. By the requirement of feasibility and the definition of $d_{\ell, k}$ in (\ref{eq:consumption_model}), we know that
\begin{align*}
    T_1d_{\ell,(2)}+ T_2d_{\ell,(1)} + \sum_{k=3}^K T_k d_{\ell,(k)}\le C_{\ell}
\end{align*}
holds for all $\ell\in [L]$. Taking expectation $\mathbb{E}_1$ and recalling the definition $t_i = \mathbb{E}_1[T_i]$ in (\ref{eq:tiTi_det}), we show that 
\begin{align*}
    t_1d_{\ell,(2)}+ t_2d_{\ell,(1)} + \sum_{k=3}^K t_k d_{\ell,(k)}\le C_{\ell}
\end{align*}
holds for all $\ell$. From our definition of $H^\text{det}_{\ell,1}(Q^{(1)})$, for every $\ell\in [L]$ we have
\begin{align*}
    &\frac{d_{\ell, 1}}{H^\text{det}_{\ell,1}(Q^{(1)})(\frac{1}{2}-r_2)^2}+\sum_{k=2}^K \frac{d_{\ell, (k)}}{H^\text{det}_{\ell,1}(Q^{(1)})(\frac{1}{2}-r_k)^2}=1,
\end{align*}
which implies that
\begin{align}
    &\frac{2 C_{\ell} d_{\ell,(1)}}{H^\text{det}_{\ell,1}(Q^{(1)})(\frac{1}{2}-r_2)^2} + \sum_{k=3}^K \frac{C_{\ell} d_{\ell,(k)}}{H^\text{det}_{\ell,1}(Q^{(1)})(\frac{1}{2}-r_k)^2}\nonumber\\
    \ge &t_1d_{\ell,(2)}+t_2d_{\ell,(1)}+t_3d_{\ell,(3)}+\cdots+t_Kd_{\ell,(K)}. \label{eq:crucial_step_2}
\end{align}

holds for any $\ell$. Inequality (\ref{eq:crucial_step_2}) implies that for any $\ell\in [L]$, it is either the case that 
$\frac{2 C_{\ell} \cdot d_{\ell,(1)}}{H^\text{det}_{\ell,1}(Q^{(1)})(\frac{1}{2}-r_2)^2} \geq t_2 d_{\ell, (1)},$
or there exists $k_\ell\in \{3, \ldots, K\}$ such that $\frac{C_{\ell} d_{\ell,(k_\ell)}}{H^\text{det}_{\ell,1}(Q^{(1)})(\frac{1}{2}-r_{k_\ell})^2} \ge t_{k_\ell}d_{\ell,(k_\ell)}$. Collectively, the implication is equivalent to saying that for all $\ell\in [L]$, there exists $k_\ell\in \{ 2, \ldots, K\}$ such that
\begin{align*}
    t_{k_\ell}\left(\frac{1}{2}-r_{k_\ell}\right)^2 \leq \frac{2C_{\ell}}{H^\text{det}_{\ell,1}(Q^{(1)})},
\end{align*}
or more succinctly there exists $i\in \{2, \ldots, K\}$ such that 
$$
 t_{i}\left(\frac{1}{2}-r_{i}\right)^2 \leq \min_{\ell\in [L]}\left\{\frac{2C_{\ell}}{H^\text{det}_{\ell,1}(Q^{(1)})}\right\}.
$$
Finally, \textbf{Step 2} is established by the observations (\ref{eq:observe_H0}, \ref{eq:H_observe1}, \ref{eq:H_observe2}). 

\subsection{Proof of Theorem \ref{theorem:lower-bound-sto-consumption-multiple-resource}}\label{pf:low_sto}
Denote $\tau$ as the total number of arm pulls. Before proving theorem theorem \ref{theorem:lower-bound-sto-consumption-multiple-resource} we need to firstly procide a high probability upper bound to $\tau$, with the lemma 2 in \cite{csiszar1998method}.
\begin{lemma}[\cite{csiszar1998method}]
\label{lemma:KL-Divergence-Chernoff-ln}
Denote $\{D_t\}^\infty_{t=1}$ be i.i.d. random variables distributed as $\text{Bern}(d)$, where $d\in (0, 1)$. Let $C$ be a positive real number. Define random variable $\rho=\min\{T: \sum_{t=1}^T D_t \ge C\}$. For any integer $t'\in (C, C / d)$, it holds that
\begin{align*}
    \Pr(\rho \le t')\ge \frac{\exp(-t'\log 2\cdot \text{KL}(C / t', d))}{t'+1},
\end{align*}
where we denote $\text{KL}(p, q) = p\log (p/q) + (1-p)\log((1-p)/(1-q))$.
\end{lemma}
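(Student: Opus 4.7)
The plan is to recast the stopping-time statement as a binomial upper-tail statement, then obtain a lower bound on that tail via the classical method of types. First, since each $D_t \in \{0,1\}$, the partial sum $S_n := \sum_{t=1}^n D_t$ is non-decreasing in $n$. Consequently, the first time $S_n$ crosses $C$ is at most $t'$ if and only if $S_{t'}$ has already reached $C$, i.e.\ $\{\rho \le t'\} = \{S_{t'} \ge C\}$. Thus $\Pr(\rho \le t') = \Pr(S_{t'} \ge C)$. Note that the range assumption $t' \in (C, C/d)$ puts us exactly in the upper-tail regime: $t'd < C$, so $C$ exceeds the mean of $S_{t'}$, while $C < t'$ guarantees $\lceil C \rceil \le t'$ so the event has positive probability.

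Second, I would isolate a single type. Let $k^\star := \lceil C \rceil$, so $k^\star \in \{\lceil C \rceil, \ldots, t'\}$ and $k^\star/t' \in (d, 1]$. Since $S_{t'}$ is integer-valued,
\[
\Pr(S_{t'} \ge C) \ge \Pr(S_{t'} = k^\star) = \binom{t'}{k^\star} d^{k^\star}(1-d)^{t'-k^\star}.
\]
Next, invoke the standard method-of-types lower bound on the binomial coefficient: under $\mathrm{Bin}(t', \alpha)$ with $\alpha = k^\star/t'$, the point $k^\star$ is the mode, so its mass $\binom{t'}{k^\star}\alpha^{k^\star}(1-\alpha)^{t'-k^\star}$ is at least $1/(t'+1)$ (there are only $t'+1$ possible values). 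Solving for $\binom{t'}{k^\star}$ and substituting gives
\[
\Pr(S_{t'} = k^\star) \ge \frac{1}{t'+1}\exp\bigl(-t'\,\mathrm{KL}(k^\star/t',\,d)\bigr),
\]
using that $\alpha^{-k^\star}(1-\alpha)^{-(t'-k^\star)} d^{k^\star}(1-d)^{t'-k^\star} = \exp(-t'\,\mathrm{KL}(\alpha,d))$ when $\alpha = k^\star/t'$.

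Third, replace $k^\star/t'$ by $C/t'$. Write $p := C/t'$ and $p^\star := k^\star/t'$, so $p^\star - p \le 1/t' < d/C$. In the regime $d < p < 1$ and $p^\star \in (p, p + 1/t']$, the plan is to establish the comparison
\[
t'\cdot \mathrm{KL}(p^\star,\, d) \;\le\; (\log 2)\, t'\cdot \mathrm{KL}(p,\, d),
\]
by using the explicit formula $\mathrm{KL}(q,d) = q \log(q/d) + (1-q)\log((1-q)/(1-d))$, together with the constraint $t' < C/d$ to absorb the $O(1/t')$ perturbation in the first argument into a multiplicative factor on the base exponent. Combining this with the previous display yields the claimed lower bound $\Pr(\rho \le t') \ge \exp(-t' \log 2 \cdot \mathrm{KL}(C/t', d))/(t'+1)$.

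The main obstacle is the KL rounding step: the raw method-of-types estimate lands on $\mathrm{KL}(\lceil C \rceil / t', d)$ rather than the advertised $\mathrm{KL}(C/t', d)$, and a naive Taylor expansion around $p = C/t'$ produces an additive error of order $(1/t')\cdot\log\frac{p(1-d)}{d(1-p)}$ that must be absorbed into a multiplicative $\log 2$ slack on the exponent. The constraint $t' < C/d$ (equivalently $p > d$ with $p$ bounded away from $d$ in a controlled manner) is precisely what makes this absorption feasible, so the argument will hinge on carefully exploiting that constraint rather than on the method-of-types step itself.
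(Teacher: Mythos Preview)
Your first two steps are correct and constitute exactly the standard method-of-types argument; indeed the paper does not supply its own proof of this lemma and simply cites \cite{csiszar1998method}. The translation $\{\rho \le t'\} = \{S_{t'} \ge \lceil C\rceil\}$ and the single-type lower bound
\[
\Pr(S_{t'}=k^\star)\;\ge\;\frac{1}{t'+1}\exp\bigl(-t'\,\mathrm{KL}(k^\star/t',d)\bigr),\qquad k^\star=\lceil C\rceil,
\]
are both valid.

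The problem is Step~3. Your target comparison $\mathrm{KL}(p^\star,d)\le(\log 2)\,\mathrm{KL}(p,d)$ is impossible in the stated regime: since $p^\star\ge p>d$ and $q\mapsto\mathrm{KL}(q,d)$ is strictly increasing on $[d,1]$ (its derivative $\log\frac{q(1-d)}{(1-q)d}$ is positive for $q>d$), one always has $\mathrm{KL}(p^\star,d)\ge\mathrm{KL}(p,d)>(\log 2)\,\mathrm{KL}(p,d)$, because $\log 2<1$. No ``absorption'' of the $1/t'$ perturbation can reverse this monotonicity. Concretely, take $d=0.1$, $C=6$, $t'=10$: then $\Pr(\rho\le 10)=\Pr(S_{10}\ge 6)\approx 1.47\times 10^{-4}$, whereas the displayed lower bound $\exp(-10\log 2\cdot\mathrm{KL}(0.6,0.1))/11\approx 5.0\times 10^{-4}$ exceeds it. The inequality with the $\log 2$ factor is simply false as written, so you cannot prove it.

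The $\log 2$ is almost certainly a transcription artifact: Csisz\'ar works in base-$2$ logarithms, so his bound $2^{-n D_2(P\|Q)}$ becomes $\exp(-n\,\mathrm{KL}_{\mathrm{nat}}(P\|Q))$ after the change of base, and no extra $\log 2$ should survive. Consistently, when the paper actually \emph{applies} this lemma later in the same appendix (in the proof of Lemma~\ref{lemma:Prob-Bound-of-Stopping-Time}), the $\log 2$ factor is silently dropped. The correct target is $\Pr(\rho\le t')\ge \exp(-t'\,\mathrm{KL}(C/t',d))/(t'+1)$ for integer $C$ (or with $\lceil C\rceil/t'$ in the first argument otherwise), and that is exactly what your Steps~1--2 already deliver; Step~3 is neither needed nor achievable.
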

We assert the following lemma.
\begin{lemma}
    \label{lemma:Prob-Bound-of-Stopping-Time}
    Denote $T_i$ as the pulling times of arm $i\in \{3, \cdots, K\}$,i.e $T_i=\sum_{s=1}^{\tau}\mathds{1}(A_s=i)$ and $\{d_{\ell_0, (k)}\}_{k=1}^K \subset (0, 1)^K$. For any $\ell_0 \in [K]$, if $\forall k, g(d_{\ell_0, (k)})<\frac{1}{\log\frac{1}{d_{\ell_0, (i)}}}$, $\frac{(r_1-r_i)^2}{(r_1-r_2)^2 \log\frac{1}{d_{\ell_0, (i)}}} + \sum_{k=3}^K \frac{(r_1-r_i)^2}{(r_1-r_k)^2\log\frac{1}{d_{\ell_0,(i)}}} < 1$ and $\log \frac{1}{1-d_{\ell_0, (i)}} < \frac{1}{2}$ all hold, we have
    \begin{align*}
        \Pr_1\left(T_i > \frac{C_{\ell_0}}{\frac{g(d_{\ell_0,(1)})}{(r_1-r_2)^2} + \sum_{k=3}^K \frac{g(d_{\ell_0,(k)})}{(r_1-r_k)^2}}\frac{1}{(r_1-r_i)^2}\right) \leq 1-\exp\left(-\frac{C_{\ell_0}}{\frac{1}{4\log\frac{1}{d_{\ell_0,(i)}}}}\right) 
    \end{align*}
\end{lemma}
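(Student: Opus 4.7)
The plan is to reduce the event $\{T_i > T^*\}$ (with $T^*$ the threshold in the statement) to a large-deviation event for the Bernoulli consumption stream of arm $i$ on resource $\ell_0$, and then invoke Lemma~\ref{lemma:KL-Divergence-Chernoff-ln}. Let $\{D^{(s)}_{\ell_0,i}\}_{s\ge 1}$ be i.i.d.\ $\text{Bern}(d_{\ell_0,(i)})$ samples representing arm $i$'s consumption on each of its pulls, and set $\rho := \min\{T : \sum_{s=1}^T D^{(s)}_{\ell_0,i} \ge C_{\ell_0}\}$. Feasibility on resource $\ell_0$ (summed over all arms) forces $\sum_{s=1}^{T_i} D^{(s)}_{\ell_0,i} \le C_{\ell_0}$ almost surely, so taking $n := \lfloor T^*\rfloor$ one has $\{T_i > T^*\}\subseteq \{\rho > n\}$ and hence $\Pr_1(T_i > T^*) \le 1 - \Pr(\rho \le n)$.

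Next I would verify that the three hypotheses -- uniform smallness $g(d_{\ell_0,(k)}) < 1/\log(1/d_{\ell_0,(i)})$ for every $k$, the normalization condition on the weighted gap sum, and $\log(1/(1-d_{\ell_0,(i)})) < 1/2$ -- are calibrated to place $n$ inside the interval $(C_{\ell_0}, C_{\ell_0}/d_{\ell_0,(i)})$ in which Lemma~\ref{lemma:KL-Divergence-Chernoff-ln} applies non-trivially. That lemma then delivers
$$
\Pr(\rho \le n) \ge \frac{\exp\!\bigl(-n\,(\log 2)\,\text{KL}(C_{\ell_0}/n,\, d_{\ell_0,(i)})\bigr)}{n + 1},
$$
and the remaining task reduces to an upper bound on $n\,\text{KL}(C_{\ell_0}/n,\, d_{\ell_0,(i)}) + \log(n+1)$ in terms of the exponent appearing on the right-hand side of the lemma.

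The KL estimation is the technical heart of the argument. I would expand $\text{KL}(p,q) = p\log(p/q) + (1-p)\log((1-p)/(1-q))$ with $p = C_{\ell_0}/n$ and $q = d_{\ell_0,(i)}$, use the hypothesis $\log(1/(1-d_{\ell_0,(i)})) < 1/2$ to tame the $(1-p)\log((1-p)/(1-q))$ summand, and use the normalization hypothesis to upper-bound $p/q$ by $\log(1/d_{\ell_0,(i)})$ times a dimensionless constant. The main obstacle is precisely this bookkeeping: the explicit formula for $T^*$ is a cumbersome nested ratio involving $g$ and the reward gaps, so all three preconditions must be invoked tightly to produce the clean constant in the stated exponent. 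I expect the $1/\log(1/d)$ term appearing in the definition of $\tilde{H}^{\text{sto}}_{2,\ell}$ to emerge naturally as the canonical scale that makes the bound sharp in the small-$d$ regime that drives the whole lower-bound argument of Theorem~\ref{theorem:lower-bound-sto-consumption-multiple-resource}.
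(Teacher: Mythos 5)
Your reduction is the same one the paper uses: feasibility on resource $\ell_0$ caps arm $i$'s own cumulative $\text{Bern}(d_{\ell_0,(i)})$ consumption at $C_{\ell_0}$, so the event $\{T_i>\bar T_i\}$ is contained in the event that the hitting time $\rho$ exceeds the pull threshold, and Lemma \ref{lemma:KL-Divergence-Chernoff-ln} plus a KL estimate finish the job; you also correctly identify where each of the three hypotheses enters. The one step of your plan that does not go through as written is the claim that the hypotheses ``place $n=\lfloor T^*\rfloor$ inside $(C_{\ell_0},C_{\ell_0}/d_{\ell_0,(i)})$'': the first hypothesis only \emph{upper}-bounds $g(d_{\ell_0,(k)})$, which secures the lower end $n>C_{\ell_0}$ but says nothing about the upper end --- for instance with $g(d)=d^2$ and small consumptions one gets $T^*>C_{\ell_0}/d_{\ell_0,(i)}$, and Lemma \ref{lemma:KL-Divergence-Chernoff-ln} cannot be invoked at $n$ at all. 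The paper's fix, which you should adopt, is to first use monotonicity of the partial sums to pass from $\lceil\bar T_i\rceil$ pulls down to the smaller count $t'=\frac{C_{\ell_0}}{h}\log\frac{1}{d_{\ell_0,(i)}}$, where $h=(r_1-r_i)^2\bigl(\frac{1}{(r_1-r_2)^2}+\sum_{k=3}^K\frac{1}{(r_1-r_k)^2}\bigr)\ge 1$; the first hypothesis guarantees $t'<\bar T_i$, the second gives $h<\log\frac{1}{d_{\ell_0,(i)}}$ and hence $t'>C_{\ell_0}$, and $h/\log\frac{1}{d_{\ell_0,(i)}}\ge 1/\log\frac{1}{d_{\ell_0,(i)}}>d_{\ell_0,(i)}$ gives the other end of the admissible interval. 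Working at $t'$ also makes your ``technical heart'' clean: one only needs $\text{KL}\bigl(h/\log\tfrac{1}{d_{\ell_0,(i)}},\,d_{\ell_0,(i)}\bigr)\le h+\log\frac{1}{1-d_{\ell_0,(i)}}\le h+\frac{1}{2}$, which together with $t'+1\le e^{t'}$ and $h\ge1$ yields the exponent $4C_{\ell_0}\log\frac{1}{d_{\ell_0,(i)}}$ in the statement.
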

\textbf{Remarks:} We can derive a similar conclusion For the case that $i=2$, with assumptions $\forall k, g(d_{\ell_0, (k)})<\frac{1}{\log\frac{1}{d_{\ell_0, (1)}}}$, $\frac{(r_1-r_i)^2}{(r_1-r_2)^2 \log\frac{1}{d_{\ell_0, (1)}}} + \sum_{k=3}^K \frac{(r_1-r_i)^2}{(r_1-r_k)^2\log\frac{1}{d_{\ell_0,(1)}}} < 1$ and $\log \frac{1}{1-d_{\ell_0, (1)}} < \frac{1}{2}$. The details are omitted here.
\begin{proof}
    For simplicity, denote $\bar{T}_i:=\frac{C_{\ell_0}}{\frac{g(d_{\ell_0,(1)})}{(r_1-r_2)^2} + \sum_{k=3}^K \frac{g(d_{\ell_0,(k)})}{(r_1-r_k)^2}}\frac{1}{(r_1-r_i)^2}$, $h = \left(\frac{1}{(r_1-r_2)^2} + \sum_{k=3}^K \frac{1}{(r_1-r_k)^2}\right) (r_1-r_i)^2$. Easy to see $h\geq 1$.
    By simple calculation, we have
    \begin{align*}
        \Pr_1\left(T_i \geq  \bar{T}_i\right)
        = & \Pr_1\left(T_i \geq  \bar{T}_i, \sum_{s=1}^{T_i}D_{i, \ell_0, s} < C_{\ell_0}, \sum_{s=1}^{\bar{T}_i}D_{i,  \ell_0, s} < C_{\ell_0}\right)\\
        \leq & \Pr_1\left(\sum_{s=1}^{\bar{T}_i}D_{i, \ell_0, s} < C_1\right)\\
        = & 1-\Pr_1\left(\sum_{s=1}^{\bar{T}_i}D_{i, \ell_0, s} \geq C_1\right)\\
        \leq & 1-\Pr_1\left(\sum_{s=1}^{\frac{C_{\ell_0}}{h}\log\frac{1}{d_{\ell_0, (i)}}}D_{i, \ell_0, s} \geq C_{\ell_0}\right).
    \end{align*}
    where $\{D_{i, \ell_0, s}\}_{s=1}^{+\infty}\stackrel{i.i.d}{\sim} Bern(d_{\ell_0,(i)})$.
    The last inequality is from the fact that $g(d_{\ell_0, k})<\frac{1}{\log\frac{1}{d_{\ell_0, (i)}}}$ holds for all $k$, further $\bar{T}_i > \frac{C_{\ell_0}}{h}\log\frac{1}{d_{\ell_0, (i)}}$. 
    
    As $\frac{h}{\log\frac{1}{d_{\ell_0, (i)}}} > \frac{1}{\log \frac{1}{d_{\ell_0, (i)}}} > d_{\ell_0, (i)}$, we can apply lemma \ref{lemma:KL-Divergence-Chernoff-ln}.
    \begin{align}
        & \Pr_1\left(\sum_{s=1}^{\frac{C_{\ell_0}}{h}\log\frac{1}{d_{\ell_0, (i)}}}D_{i, \ell_0,  s} \geq C_1\right)\\
        \geq & \frac{\exp\left(-\frac{C_{\ell_0}}{h}\log\frac{1}{d_{\ell_0, (i)}}\text{KL}\left(\frac{h}{\log\frac{1}{d_{\ell_0, (i)}}}, d_{\ell_0,(i)}\right)\right)}{\frac{C_{\ell_0}}{h}\log\frac{1}{d_{\ell_0, (i)}} +1}\label{eqn:Apply-Lemma-KL}\\
        \geq & \frac{\exp\left(-(\frac{C_{\ell_0}}{h}\log\frac{1}{d_{\ell_0, (i)}})h -\frac{C_{\ell_0}}{h}\log\frac{1}{d_{\ell_0, (i)}}\log\frac{1}{1-d_{\ell_0,(i)}}\right)}{\frac{C_{\ell_0}}{h}\log\frac{1}{d_{\ell_0, (i)}} +1}\label{eqn:KL-Inequality}\\
        \geq & \exp\left(-\frac{C_{\ell_0}}{\frac{1}{\log\frac{1}{d_{\ell_0, (i)}}}} -\frac{C_{\ell_0}}{\frac{1}{\log\frac{1}{d_{\ell_0, (i)}}}}\frac{1}{2} - \frac{C_{\ell_0}}{\frac{1}{\log\frac{1}{d_{\ell_0, (i)}}}}\right)\label{eqn:h>=1-and-e^x>=x+1}\\
        \geq & \exp\left(-\frac{C_{\ell_0}}{\frac{1}{4\log\frac{1}{d_{\ell_0, (i)}}}}\right).
    \end{align}
    Step (\ref{eqn:Apply-Lemma-KL}) is by lemma \ref{lemma:KL-Divergence-Chernoff-ln}. Step (\ref{eqn:KL-Inequality}) is by the following fact
    \begin{align*}
        \text{KL}(\frac{M}{\log\frac{1}{d}}, d) = & \frac{M}{\log\frac{1}{d}} \log\frac{\frac{M}{\log\frac{1}{d}}}{d} + \left(1-\frac{M}{\log\frac{1}{d}}\right)\log\frac{1-\frac{M}{\log\frac{1}{d}}}{1-d}\\
        = & \frac{M}{\log\frac{1}{d}}\log\frac{M}{\log\frac{1}{d}} + M + \left(1-\frac{M}{\log\frac{1}{d}}\right) \log(1-\frac{M}{\log\frac{1}{d}}) + \left(1-\frac{M}{\log\frac{1}{d}}\right) \log\frac{1}{1-d}\\
        \leq & 0 + M + 0 + \left(1-\frac{M}{\log\frac{1}{d}}\right) \log\frac{1}{1-d}\\
        = & M+\log\frac{1}{1-d}
    \end{align*}
    holds for any $M, d$ such that $\frac{M}{\log\frac{1}{d}}, d\in (0, 1)$. Step (\ref{eqn:h>=1-and-e^x>=x+1}) is due to $h\geq 1$, $\log \frac{1}{1-d_{\ell_0, (1)}} < \frac{1}{2}$ and inequality $e^x\geq x+1$.
\end{proof}

Now we are ready to prove theorem \ref{theorem:lower-bound-sto-consumption-multiple-resource}. We firstly introduced some notations. Define $\tilde{H}^{\text{sto}}_{1, \ell}: = \frac{g(d_{\ell,(1)})}{(r_1-r_2)^2} + \sum_{k=3}^K \frac{g(d_{\ell,(k)})}{(r_1-r_k)^2}$, $H^{\text{sto}}_{1, \ell} = \frac{g(d_{\ell,(1)})}{(r_1-r_2)^2} + \sum_{k=2}^K \frac{g(d_{\ell,(k)})}{(r_1-r_k)^2}$. Easy to see 
$2\tilde{H}^{\text{sto}}_{1, \ell}\geq H^{\text{sto}}_{1, \ell} > \tilde{H}_{2, \ell}^{\text{sto}}$. 
This implies once we prove
\begin{align*}
    \max_{j\in \{1, i\}}\Pr_{Q^{(j)}, alg}(failure) \geq \exp\left(-\min_{\ell\in [L]}\frac{C_{\ell}}{\tilde{H}^{\text{sto}}_{1, \ell}}\right),
\end{align*}
for small enough $\bar{c}$ and large enough $\{C_{\ell}\}_{\ell=1}^L$, we prove theorem \ref{theorem:lower-bound-sto-consumption-multiple-resource}. The constructions of $\bar{c}$ and $\{C_{\ell}\}_{\ell=1}^L$ are as follows. As $\lim\limits_{d\rightarrow 0^+}\frac{1}{g(d)\log\frac{1}{d}}=+\infty$ is equivalent to $\lim_{d\rightarrow 0^+} g(d)\log\frac{1}{d} = 0$, we can further conclude $\lim_{c\rightarrow 0^+} g(cd^0_{\ell, (i)})\log\frac{1}{cd^0_{\ell, (j)}} = \lim_{c\rightarrow 0^+} g(cd^0_{\ell, (i)})\log\frac{1}{cd^0_{\ell, (i)}} +g(cd^0_{\ell, (i)})\log\frac{d_{\ell,(i)}^0}{d_{\ell,(j)}^0} = 0, \forall i,j\in [K], \forall \ell\in [L]$, thus we can find a $\bar{c}$, such that when $0 < c<\bar{c}$,
\begin{itemize}
    \item $16g(d_{\ell, (j)}) = 16g(cd^0_{\ell, (j)}) < \frac{1}{\log\frac{1}{cd^0_{\ell, (i)}}} = \frac{1}{\log\frac{1}{d_{\ell, (i)}}}$ for all $j\in [K], \ell \in [L]$,
    \item $\log \frac{1}{1-d_{\ell, (j)}} < \frac{1}{2}$, $\log\frac{1}{d_{\ell, (j)}} > 1$ for all $j\in[K]$, $\ell \in [L]$,
    \item $64\left(\frac{g(d_{\ell, (1)})}{(r_1-r_2)^2} + \sum_{k=3}^K \frac{g(d_{\ell, (k)})}{(r_1-r_k)^2}\right)\log\frac{1}{d_{\ell, (i)}} < 2$, for all $\ell \in [L]$,
    \item $\frac{(r_1-r_i)^2}{(r_1-r_2)^2 \log\frac{1}{d_{\ell, (i)}}} + \sum_{k=3}^K \frac{(r_1-r_i)^2}{(r_1-r_k)^2\log\frac{1}{d_{\ell,(i)}}} < 1$, for all $\ell \in [L]$.
\end{itemize}
We also require $\{C_{\ell}\}_{\ell=1}^L$ are large enough, such that for the above given $\{d_{\ell, (k)}\}_{k=1, \ell=1}^{K,L}$, we have
\begin{itemize}
    \item $\frac{C_\ell}{16\tilde{H}^{\text{sto}}_{1, \ell}} > \sqrt{\frac{4}{16}\frac{C_1}{\tilde{H}^{\text{sto}}_{1, \ell}} \log \frac{1}{16}\frac{C_1}{\tilde{H}^{\text{sto}}_{1, \ell}} \frac{1}{(r_1-r_i)^2}}$ holds for all $\ell \in [L]$,
    \item $C_\ell\geq \log 64$ for all $\ell \in [L]$.
\end{itemize}


For these $c$, $\{d_{\ell, (k)}\}_{k=1, \ell=1}^{K,L}$, and $\{C_\ell\}$, we can start the analysis. Define $\bar{T}_i = \min_{\ell\in[L]}\frac{1}{16}\frac{C_\ell}{\tilde{H}^{\text{sto}}_{1, \ell}} \frac{1}{(r_1-r_i)^2}$, $\hat{KL}_{i, s} = \log\frac{f_i(R_{i, s})}{f_{i'}(R_{i, s})}$, where $f_i$ is the density function of $\mathcal{N}(r_i, 1)$, $f_{i'}$ is the density function of $\mathcal{N}(1-r_i, 1)$, and $R_{i, s}\sim \mathcal{N}(r_i, 1)$. Easy to see
\begin{align*}
    \log\frac{f_i(R_{i, s})}{f_{i'}(R_{i, s})}
    = & \log e^{-\frac{(R_{i, s}-r_i)^2 - (R_{i, s}-(1-r_i))^2}{2}}\\
    = & -\frac{(R_{i, s}-r_i- R_{i, s}+(1-r_i))(R_{i, s}-r_i+R_{i, s}-(1-r_i))}{2}\\
    = & -\frac{2(\frac{1}{2}-r_i)(2R_{i, s}-1)}{2}\\
    = & -\frac{2(r_1-r_i)(2R_{i, s}-1)}{2} \sim \mathcal{N}(2(r_i-r_1)^2, 4(r_1-r_i)^2).
\end{align*}
Define $\xi_i=\left\{t \in [\bar{T}_i], \widehat{\text{KL}}_{i,t}-2(r_i-r_1)^2\le 
2|r_1-r_i|\cdot \sqrt{\frac{\min_{\ell \in [L]}\frac{C_{\ell}}{\tilde{H}^{\text{sto}}_{1, \ell}}+\log \bar{T}_i}{t}}
\right\}$, easy to derive the following inequality by Chernoff and union bounds.
\begin{align*}
\Pr_1(\neg \xi_i)
\leq \sum_{t=1}^{\bar{T}_i} \exp\left(-\frac{4(r_1-r_a)^2\left(\frac{\min_{\ell \in [L]}\frac{C_{\ell}}{\tilde{H}^{\text{sto}}_{1, \ell}}+\log \bar{T}_i}{t}\right)}{4(r_1-r_i)^2} t\right)
= \exp\left(-\min_{\ell \in [L]}\frac{C_{\ell}}{\tilde{H}^{\text{sto}}_{1, \ell}}\right).
\end{align*}
Apply the transportaion equality just like section \ref{pf:thm_low_det}, we have
\begin{align}
    & \Pr_i(\psi \neq i)\nonumber\\
    \geq & \mathbb{E}_{1}\left(\mathds{1}\{\psi \neq i\}\mathds{1}\{T_i \leq \bar{T}_i\}\mathds{1}(\xi_i)\exp(-T_i\widehat{\text{KL}}_{i,T_i})\right)\\
    \geq & \mathbb{E}_{1}\left(\mathds{1}\{\psi \neq i\}\mathds{1}\{T_i \leq \bar{T}_i\}\mathds{1}(\xi_i)\exp\left( -\bar{T}_iKL_i - 2(r_1-r_i)\sqrt{\bar{T}_i\left(\min_{\ell \in [L]}\frac{C_{\ell}}{\tilde{H}^{\text{sto}}_{1, \ell}}+\log \bar{T}_i\right)}\right)\right)\\
    \stackrel{}{\geq} & \mathbb{E}_{1}\left(\mathds{1}\{\psi \neq i\}\mathds{1}\{T_i \leq \bar{T}_i\}\mathds{1}(\xi_i)\exp\left(-\bar{T}_iKL_i - \sqrt{4(r_1-r_i)^2\bar{T}_i\min_{\ell \in [L]}\frac{C_{\ell}}{\tilde{H}^{\text{sto}}_{1, \ell}}} - \sqrt{4(r_1-r_i)^2\bar{T}_i\log \bar{T}_i}\right)\right)\label{eqn:Inequality-Square-Root}\\
    = & \mathbb{E}_{1}\Bigg(\mathds{1}\{\psi \neq i\}\mathds{1}\{T_i \leq \bar{T}_i\}\mathds{1}(\xi_i)\nonumber\\
    &\exp\left(-\frac{2}{16}\min_{\ell \in [L]}\frac{C_\ell}{\tilde{H}^{\text{sto}}_{1, \ell}} - \frac{8}{16}\min_{\ell \in [L]}\frac{C_\ell}{\tilde{H}^{\text{sto}}_{1, \ell}} - \sqrt{\frac{4}{16}\min_{\ell \in [L]}\frac{C_\ell}{\tilde{H}^{\text{sto}}_{1, \ell}} \log \frac{1}{16}\min_{\ell \in [L]}\frac{C_\ell}{\tilde{H}^{\text{sto}}_{1, \ell}} \frac{1}{(r_1-r_i)^2}}\right)\Bigg)\\
    \stackrel{}{\geq} & \Pr_{1}\left((\psi \neq i) \text{ and } (T_i \leq \bar{T}_i) \text{ and }\xi_i\right)\exp\left(-\frac{11}{16}\min_{\ell \in [L]}\frac{C_\ell}{\tilde{H}^{\text{sto}}_{1, \ell}}\right).\label{eqn:C_ell-Large-Enough}
\end{align}
Step (\ref{eqn:Inequality-Square-Root}) is by the inequality $\sqrt{a+b} \leq \sqrt{a}+\sqrt{b}$ for $a, b\geq 0$. Step (\ref{eqn:C_ell-Large-Enough}) is by the requirement $\frac{C_\ell}{16\tilde{H}^{\text{sto}}_{1, \ell}} > \sqrt{\frac{4}{16}\frac{C_1}{\tilde{H}^{\text{sto}}_{1, \ell}} \log \frac{1}{16}\frac{C_1}{\tilde{H}^{\text{sto}}_{1, \ell}} \frac{1}{(r_1-r_i)^2}}$ holds for all $\ell \in [L]$.

We can further derive the lower bound of the probabilistic term,
\begin{align*}
    &\Pr_{1}\left((\psi \neq i) \text{ and } (T_i \leq \bar{T}_i) \text{ and }\xi_i\right)\\
    \geq&1- \Pr_1\left(\psi = i\right) - \Pr_1\left(T_i >  \bar{T}_i\right)-\Pr_1\left( \neg\xi_i\right)\\ 
    \stackrel{}{\geq}&1-\exp\left(-\min_{\ell \in [L]}\frac{C_\ell}{\tilde{H}^{\text{sto}}_{1, \ell}}\right)-\Pr_1\left(T_i > \bar{T}_i\right)-\exp\left(-\min_{\ell \in [L]}\frac{C_\ell}{\tilde{H}^{\text{sto}}_{1, \ell}}\right).
\end{align*}
The last inequality is by the assumption $\Pr_1\left(\psi = i\right)\leq \exp\left(-\min_{\ell \in [L]}\frac{C_\ell}{\tilde{H}^{\text{sto}}_{1, \ell}}\right)$. If this assumption doesn't hold, then we have completed the proof. Denote $\ell_0 = \arg\min_{\ell\in [L]}\frac{C_\ell}{\frac{g(d_{\ell, (1)})}{(r_1-r_2)^2}+\sum_{k=3}^K \frac{g(\ell, d_{(k)})}{(r_1-r_k)^2}}$, apply lemma \ref{lemma:Prob-Bound-of-Stopping-Time} to $\Pr_1\left(T_i > \bar{T}_i\right)$, we get
\begin{align*}
    & \Pr_{1}\left((\psi \neq i) \text{ and } (T_i \leq \bar{T}_i) \text{ and }\xi_i\right)\\
    \geq & 1-\exp\left(-\frac{C_{\ell_0}}{\tilde{H}^{\text{sto}}_{1, \ell_0}}\right)-\left(1-\exp\left(-\frac{C_{\ell_0}}{\frac{1}{4\log\frac{1}{d_{\ell_0, (i)}}}}\right)\right)-\exp\left(-\frac{C_{\ell_0}}{\tilde{H}^{\text{sto}}_{1, \ell_0}}\right)\\
    = & \exp\left(-\frac{C_{\ell_0}}{\frac{1}{4\log\frac{1}{d_{\ell_0, (i)}}}}\right) - \exp\left(-\frac{C_{\ell_0}}{\tilde{H}^{\text{sto}}_{1, \ell_0}}\right) - \exp\left(-\frac{C_{\ell_0}}{\tilde{H}^{\text{sto}}_{1, \ell_0}}\right).
\end{align*}
By the property of $c, d_{\ell_0, (i)}$, easy to see
\begin{align*}
    & 64\left(\frac{g(d_{\ell_0, (1)})}{(r_1-r_2)^2} + \sum_{k=3}^K \frac{g(d_{\ell_0, (k)})}{(r_1-r_k)^2}\right)\log\frac{1}{d_{\ell_0, (i)}} < 2, C_{\ell_0} > \log 64, \log\frac{1}{d_{\ell_0, (i)}} > 1\\
    \Rightarrow & \exp\left(-\frac{C_{\ell_0}}{\tilde{H}^{\text{sto}}_{1, \ell_0}}\right) \leq \frac{1}{4}\exp\left(-\frac{C_{\ell_0}}{\frac{1}{4\log\frac{1}{d_{\ell_0, (i)}}}}\right).
\end{align*}
Thus, we can conclude $\Pr_{1}\left((\psi \neq i) \text{ and } (T_i \leq \bar{T}_i) \text{ and }\xi_i\right)\geq \frac{1}{2}\exp\left(-\frac{C_{\ell_0}}{\frac{1}{4\log\frac{1}{d_{\ell_0, (i)}}}}\right)$, further 
\begin{align*}
    \Pr_i(\psi \neq i) \geq \frac{1}{2}\exp\left(-\frac{C_{\ell_0}}{\frac{1}{4\log\frac{1}{d_{\ell_0, (i)}}}}\right)\exp\left(-\frac{11}{16}\frac{C_{\ell_0}}{\tilde{H}^{\text{sto}}_{1, \ell_0}}\right).
\end{align*}
That implies
\begin{align*}
    & \frac{\Pr_i(\psi \neq i)}{\exp\left(-\frac{C_{\ell_0}}{\tilde{H}^{\text{sto}}_{1, \ell_0}}\right) }\\
    \geq & \exp\left(\frac{5C_{\ell_0}}{16\tilde{H}^{\text{sto}}_{1, \ell_0}} - \frac{C_{\ell_0}}{\frac{1}{4\log\frac{1}{d_{\ell_0,(i)}}}}\right)\\
    = & \exp\left(\frac{\left(5 - 64\left(\frac{g(d_{\ell_0,(1)})}{(r_1-r_2)^2} + \sum_{k=3}^K \frac{g(d_{\ell_0,(k)})}{(r_1-r_k)^2}\right)\log\frac{1}{d_{\ell_0,(i)}} \right)C_{\ell_0,}}{16\tilde{H}^{\text{sto}}_{1, \ell_0}}\right)\\
    \geq & \exp\left(\frac{C_{\ell_0}}{16\tilde{H}^{\text{sto}}_{1, \ell_0}}\right) > 1.
\end{align*}
The last second inequality is from the property $64\left(\frac{g(d_{\ell, (1)})}{(r_1-r_2)^2} + \sum_{k=3}^K \frac{g(d_{\ell, (k)})}{(r_1-r_k)^2}\right)\log\frac{1}{d_{\ell, (i)}} < 2$, for all $\ell \in [L]$. The overall inequality suggests
\begin{align*}
    \Pr_i(\psi \neq i) \geq \exp\left(-\frac{C_{\ell_0}}{\tilde{H}^{\text{sto}}_{1, \ell_0}}\right) = \exp\left(-\min_{\ell\in [L]}\frac{C_{\ell}}{\tilde{H}^{\text{sto}}_{1, \ell}}\right) \geq \exp\left(-\min_{\ell\in [L]}\frac{C_{\ell}}{\frac{1}{2}\left(\frac{g(d_{\ell, (1)})}{(r_1-r_2)^2}+\sum_{k=2}^K \frac{g(d_{\ell, (k)})}{(r_1-r_k)^2}\right)}\right),
\end{align*}
which is our target.

\subsection{Improvement of $H_{2, \ell}^{\text{det}}(Q)$ is Unachievable}
\label{sec:Improvement_of_H2_is_unachievable}
For a problem instance $Q$ with mean reward $\{r^Q_k\}_{k=1}^K, r^Q_1\geq r^Q_2\geq\cdots \geq r^Q_K$, mean consumption $\{d^Q_{\ell, k}\}_{k=1, \ell=1}^{K, L}$ and budget $\{C_{\ell}\}_{\ell=1}^L$, we define 
\begin{align}
    \tilde{H}_{1, \ell}^{det}(Q)=&\frac{d_{\ell, 1}}{(r^Q_1-r^Q_2)^2}+\sum_{k=2}^{K}\frac{d_{\ell, k}}{(r^Q_1-r^Q_k)^2}\label{eq:refined_H1_det}\\
    \tilde{H}_{2, \ell}^{det}(Q)=&\max_{2\leq k \leq K} \frac{\sum_{j=1}^k d_{\ell, j}}{(r^Q_1-r^Q_k)^2}\label{eq:refined_H2_det}.
\end{align}
Easy to see $\tilde{H}_{1, \ell}^{det}(Q)\leq H_{1,\ell}^{det}(Q)$, $\tilde{H}_{2, \ell}^{det}(Q)\leq H_{2,\ell}^{det}(Q)$. We want to know whether we can find an algorithm such that for any problem instance $Q$, we can achieve the following upper bound of the failure probability. 
\begin{align}
    & \Pr_Q(failure) \nonumber \\
    \le & poly(K) \exp\left(-\frac{O(1)}{\log_2 K}\min_{\ell\in [L]}\left\{\frac{C_{\ell}}{\tilde{H}_{2, \ell}^{det}(Q)}\right\}\right)\label{eq:refined_upper_det}.
\end{align}
\textbf{The answer is No.} And the analysis method we used is similar to appendix \ref{pf:thm_low_det}. We can construct a list of problem instance $Q^{(i)}$, and prove a lower bound that could be larger than the right side of (\ref{eq:refined_upper_det}), as $K$ and $\{C_{\ell}\}_{\ell=1}^L$ are large enough.

We focus on $L=1$. Assume there are $C$ units of the resource. Given $K$, let $d_1=\frac{1}{2^{K-2}}, d_k = \frac{1}{2^{K-k}}, k\ge 2$, $r_1=\frac{1}{2}, r_k=\frac{1}{2}-2^{\frac{k-K-4}{2}},k\geq2$. Easy to see $d_1= d_2\leq \cdots \leq d_K$, $\frac{1}{2}=r_1\geq r_2\geq \cdots \geq r_K=\frac{1}{4}$. Then we construct $K$ problem instances $\{Q^{(i)}\}_{i=1}^K$. For problem instance $Q^{(1)}$, the mean reward of $k^{th}$ arm is $r_k$, following the Bernoulli distribution. And the deterministic consumption of $k^{th}$ arm is $d_k$. For problem instance $Q^{(i)}, 2\leq i\leq K$, the mean reward of $k^{th}\ne i$ arm is $r_k$, the mean reward of $i^{th}$ arm is $1-r_i$. And the deterministic consumption of $k^{th}$ arm is $d_k$. For $i\in [K]$, the best arm of $Q^{(i)}$ is always the $i^{th}$ arm. Then we can calculate $\tilde{H}_{1,\ell=1}^{det}(Q^{(i)})$ and $\tilde{H}_{2,\ell=1}^{det}(Q^{(i)})$ for $i\in [K]$. Easy to derive
\begin{align*}
    &\tilde{H}_{1,\ell=1}^{det}(Q^{(1)})\\
    =&\frac{d_1}{(r_1-r_2)^2}+\sum_{k=2}^K \frac{d_k}{(r_1-r_k)^2}\\
    =&\frac{\frac{K}{2}}{2^{K-3}\frac{1}{2^{K+2}}}=16K.
\end{align*}
\begin{align}
    \tilde{H}_{2, \ell=1}^{det}(Q^{(1)})=\max_{k\ge 2} \frac{\sum_{t=1}^k d_{t}}{(r_1-r_k)^2}=32.
\end{align}
For $2\leq i\leq K$, 
\begin{align}
    &\tilde{H}_{1,\ell=1}^{det}(Q^{(i)})\nonumber\\
    =&\frac{d_i+d_1}{(1-r_i-r_1)^2}+\sum_{t=2}^{i-1} \frac{d_t}{(1-r_i-r_t)^2}+\nonumber\\
    &\sum_{t=i+1}^{K} \frac{d_t}{(1-r_i-r_t)^2}\nonumber\\
    =&\frac{\frac{1}{2^{K-i}}+\frac{1}{2^{K-2}}}{(2^{\frac{i-K-4}{2}})^2}+\sum_{t=2}^{i-1}\frac{\frac{1}{2^{K-t}}}{(2^{\frac{i-K-4}{2}}+2^{\frac{t-K-4}{2}})^2}+\nonumber\\
    &\sum_{t=i+1}^{K}\frac{\frac{1}{2^{K-t}}}{(2^{\frac{i-K-4}{2}}+2^{\frac{t-K-4}{2}})^2}\nonumber\\
    =&\frac{2^i+4}{2^{i-4}}+\sum_{t=2}^{i-1}\frac{2^t}{2^{i-4}+2^{\frac{i+t}{2}-3}+2^{t-4}}+\nonumber\\
    &\sum_{t=i+1}^{K}\frac{2^t}{2^{i-4}+2^{\frac{i+t}{2}-3}+2^{t-4}}.
\end{align}
\begin{align}
    &\tilde{H}_{2, \ell=1}^{det}(Q^{(i)})\nonumber\\
    =&\max\{\frac{d_i+d_1}{(1-r_i-r_1)^2}, \max_{2\leq t \leq i-1} \frac{d_i+\sum_{l=1}^{t}d_l}{(1-r_i-r_t)^2},\max_{i+1\leq t\leq K} \frac{\sum_{l=1}^{t}d_l}{(1-r_i-r_t)^2}\}\nonumber\\
    =&\max\{\frac{\frac{1}{2^{K-i}}+\frac{1}{2^{K-2}}}{(2^{\frac{i-K-4}{2}})^2}, \max_{2\leq t \leq i-1}\frac{\frac{1}{2^{K-i}}+\frac{1}{2^{K-t-1}}}{(2^{\frac{i-K-4}{2}}+2^{\frac{t-K-4}{2}})^2},\max_{i+1\leq t\leq K}\frac{\frac{1}{2^{K-t-1}}}{(2^{\frac{i-K-4}{2}}+2^{\frac{t-K-4}{2}})^2}\}\nonumber\\
    =&\max\{\frac{2^i+4}{2^{i-4}}, \max_{2\leq t \leq i-1}\frac{2^i+2^{t+1}}{2^{i-4}+2^{\frac{i+t}{2}-3}+2^{t-4}}, \max_{i+1\leq t\leq K}\frac{2^{t+1}}{2^{i-4}+2^{\frac{i+t}{2}-3}+2^{t-4}}\}.
\end{align}
With a simple calculation, for $2\leq i\leq K$, we have $\frac{2^i+4}{2^{i-4}}\leq 32$, $\frac{2^i+2^{t+1}}{2^{i-4}+2^{\frac{i+t}{2}-3}+2^{t-4}}\leq 32$ and $\frac{2^{t+1}}{2^{i-4}+2^{\frac{i+t}{2}-3}+2^{t-4}}\leq 32$. Thus we can conclude $\tilde{H}_{2, \ell=1}^{det}(Q^{(i)})\leq32=\tilde{H}_{2, \ell=1}^{det}(Q^{(1)})$. On the other hand, easy to check $\tilde{H}_{1,\ell=1}^{det}(Q^{(i)})\leq \tilde{H}_{1,\ell=1}^{det}(Q^{(1)})$ from the definition of $\tilde{H}_{1,\ell=1}^{det}$.

Following the step 1 in appendix \ref{pf:thm_low_det}, we can conclude
for every $i\in \{2, \ldots, K\}$ it holds that
\begin{align}
    \begin{split}
        & \Pr_i(\psi\neq i)\\
        \geq & \frac{1}{6}\exp\left(-60t_i\left(\frac{1}{2}-r_i\right)^2 -2\sqrt{T\log(12KT)}\right),
    \end{split}
\end{align}
where 
\begin{equation}
t_i = \mathbb{E}_1[T_i], \quad T_i = \sum^\tau_{t=1} \mathbf{1}(A(t) = i)
\end{equation}
is the number of times pulling arm $i$, and 
\begin{equation}
T = \lfloor\frac{C}{d_{1}}\rfloor
\end{equation}
is an upper bound to the number of arm pulls by any policy that satisfies the resource constraints with certainty.

Following the step 2 in appendix \ref{pf:thm_low_det}, recall $d_1=d_2=\frac{1}{2^{K-2}}$, we can derive
\begin{align*}
    \sum_{k=2}^K \frac{2d_k}{\tilde{H}_{1,\ell=1}^{det}(Q^{(1)}) (r_1-r_k)^2}\ge 1.
\end{align*}
Since $\sum_{k=1}^K t_k d_{k}\le C$, we can further conclude
\begin{align*}
    \sum_{k=2}^K \frac{2C d_k}{\tilde{H}_{1,\ell=1}^{det}(Q^{(1)}) (r_1-r_k)^2}\ge \sum_{k=1}^K t_k d_{k}
\end{align*}
which implies there exists $i\ge2$, such that $\frac{2C d_i}{\tilde{H}_{1,\ell=1}^{det}(Q^{(1)}) (r_1-r_i)^2}\ge t_i d_{i}$. For this $i$,
\begin{align*}
    &\mathbb{P}_{\mathcal{G}^i}(\hat{k}\ne i)\\
    \ge&\frac{1}{6}\exp\Bigg(-120\frac{C}{\tilde{H}_{1,\ell=1}^{det}(Q^{(1)})} -\\
    &\sqrt{2}\log3\sqrt{\lfloor\frac{C}{d_{(k)}}\rfloor\log(12\lfloor\frac{C}{d_{(k)}}\rfloor K)}\Bigg).
\end{align*}
When $C$ is large enough, we can assume $120\frac{C}{H_1(Q^{(1)})}>\sqrt{2}\log3\sqrt{\lfloor\frac{C}{d_{(k)}}\rfloor\log(12\lfloor\frac{C}{d_{1}}\rfloor K)}$, for any bandit strategy that returns the arm $\hat{k}$,
\begin{align*}
    \max_{2\le i\le K} \mathbb{P}_{\mathcal{G}^i}(\hat{k}\ne i)\ge &\frac{1}{6}\exp\left(-240\frac{C}{\tilde{H}_{1,\ell=1}^{det}(Q^{(1)})} \right)\\
    =&\frac{1}{6}\exp\left(-480\frac{C}{K \tilde{H}_{2,\ell=1}^{det}(Q^{(1)})} \right)\\
    \ge& \frac{1}{6}\exp\left(-480\frac{C}{K \tilde{H}_{2,\ell=1}^{det}(Q^{(i)})} \right).
\end{align*}
That means we should \textbf{never} expect to use the right side of \ref{eq:refined_upper_det} as a general upper bound.
 
\section{Details on the Numerical Experiment Set-ups}
\label{sec:Details-on-the-Numerical-Experiment-Set-ups}
In what follows, we provide details about how the numerical experiments are run. All the numerical experiments were run on the Kaggle servers. 
\subsection{Single Resource, i.e, $L=1$}
\label{sec:Details-on-the-Numerical-Experiment-Set-ups-single}
The details about Figure \ref{fig:numeric-result} is as follows. Firstly, the bars in the plot are more detailedly explained as follows: From left to right, the 1st blue column is matching high reward and high consumption, considering deterministic resource consumption. The 2nd orange column is matching high reward and low consumption, considering deterministic consumption. The 3rd green column is matching high reward and high consumption, considering correlated reward and consumption. The 4th red column is matching high reward and low consumption, considering correlated reward and consumption. The 5th purple column is matching high reward and high consumption, considering uncorrelated reward and consumption. The 6th brown column is matching high reward and low consumption, considering uncorrelated reward and consumption.

Next, we list down the detailed about the setup.
\begin{enumerate}
    \item One group of suboptimal arms, High match High
    
    $r_1=0.9$; $r_i=0.8, i=2,\cdots,256$; $d_{\ell=1, i}=0.9, i=1,\cdots,128$; $d_{\ell=1, i}=0.1, i=129,\cdots,256$
    
    \item One group of suboptimal arms, High match Low
    
    $r_1=0.9$; $r_i=0.8, i=2,\cdots,256$; $d_{\ell=1, i}=0.1, i=1,\cdots,128$; $d_{\ell=1, i}=0.9, i=129,\cdots,256$
    
    \item Trap, High match High
    
    $r_1=0.9$; $r_i=0.8, i=2,\cdots,32$; $r_i=0.1, i=33,\cdots,256$; $d_{\ell=1, i}=0.9, i=1,\cdots,128$; $d_{\ell=1, i}=0.1, i=129,\cdots,256$
    
    \item Trap, High match Low
    
    $r_1=0.9$; $r_i=0.8, i=2,\cdots,32$; $r_i=0.1, i=33,\cdots,256$; $d_{\ell=1, i}=0.1, i=1,\cdots,128$; $d_{\ell=1, i}=0.9, i=129,\cdots,256$
    
    \item Polynomial, High match High
    
    $r_1=0.9, r_i=0.9(1-\sqrt{\frac{i}{256}}), i\geq 2$. $d_{\ell=1, i}=0.9, i=1,\cdots,128$; $d_{\ell=1, i}=0.1, i=129,\cdots,256$
    
    \item Polynomial, High match Low
    
    $r_1=0.9, r_i=0.9(1-\sqrt{\frac{i}{256}}), i\geq 2$. $d_{\ell=1, i}=0.1, i=1,\cdots,128$; $d_{\ell=1, i}=0.9, i=129,\cdots,256$
    
    \item Geometric, High match High
    
    $r_1=0.9, r_{256}=0.1$, $\{r_i\}_{i=1}^{256}$ is geometric, $r_i = 0.9 * (\frac{1}{9})^{\frac{i-1}{255}}$. $d_{\ell=1, i}=0.9, i=1,\cdots,128$; $d_{\ell=1, i}=0.1, i=129,\cdots,256$
    
    \item Geometric, High match Low
    
    $r_1=0.9, r_{256}=0.1$, $\{r_i\}_{i=1}^{256}$ is geometric, $r_i = 0.9 * (\frac{1}{9})^{\frac{i-1}{255}}$. $d_{\ell=1, i}=0.1, i=1,\cdots,128$; $d_{\ell=1, i}=0.9, i=129,\cdots,256$
\end{enumerate}
There are three kinds of consumption.
\begin{enumerate}
    \item Deterministic Consumption. The consumption of each arm are deterministic.
    \item Uncorrelated Consumption. When we pull an arm, the consumption and reward follow Bernoulli Distribution and are independent.
    \item Correlated Consumption. When we pull the arm $i$, the consumption is $D_{\ell=1, i}=\mathds{1}(U\le d_{\ell=1,i})$, $D_{\ell=2, i}=\mathds{1}(U\le d_{\ell=2,i})$, $R=\mathds{1}(U\le r_i)$, where $U$ follows uniform distribution on $[0,1]$
\end{enumerate}

\subsection{Multiple Resources}
\label{sec:Details-on-the-Numerical-Experiment-Set-ups-multiple}
Similarly, in multiple resources cases, we still considered different setups of mean reward, consumption, and consumption setups. 
\begin{enumerate}
    \item One group of suboptimal arms, High match High
    
    $r_1=0.9$; $r_i=0.8, i=2,\cdots,256$; $d_{\ell=1, i}=0.9, i=1,\cdots,128$; $d_{\ell=1, i}=0.1, i=129,\cdots,256$; $d_{\ell=2, i}=0.9, i=1,\cdots,128$; $d_{\ell=2, i}=0.1, i=129,\cdots,256$
    
    \item One group of suboptimal arms, Mixture
    
    $r_1=0.9$; $r_i=0.8, i=2,\cdots,256$; $d_{\ell=1, i}=0.1, i=1,\cdots,128$; $d_{\ell=1, i}=0.9, i=129,\cdots,256$; $d_{\ell=2, i}=0.9, i=1,\cdots,128$; $d_{\ell=2, i}=0.1, i=129,\cdots,256$
    
    \item One group of suboptimal arms, High match Low
    
    $r_1=0.9$; $r_i=0.8, i=2,\cdots,256$; $d_{\ell=1, i}=0.1, i=1,\cdots,128$; $d_{\ell=1, i}=0.9, i=129,\cdots,256$; $d_{\ell=2, i}=0.1, i=1,\cdots,128$; $d_{\ell=2, i}=0.9, i=129,\cdots,256$
    
    \item Trap, High match High
    
    $r_1=0.9$; $r_i=0.8, i=2,\cdots,32$; $r_i=0.1, i=33,\cdots,256$; $d_{\ell=1, i}=0.9, i=1,\cdots,128$; $d_{\ell=1, i}=0.1, i=129,\cdots,256$; $d_{\ell=2, i}=0.9, i=1,\cdots,128$; $d_{\ell=2, i}=0.1, i=129,\cdots,256$;
    
    \item Trap, Mixture
    
    $r_1=0.9$; $r_i=0.8, i=2,\cdots,32$; $r_i=0.1, i=33,\cdots,256$; $d_{\ell=1, i}=0.1, i=1,\cdots,128$; $d_{\ell=1, i}=0.9, i=129,\cdots,256$; $d_{\ell=2, i}=0.9, i=1,\cdots,128$; $d_{\ell=2, i}=0.1, i=129,\cdots,256$;
    
    \item Trap, High match Low
    
    $r_1=0.9$; $r_i=0.8, i=2,\cdots,32$; $r_i=0.1, i=33,\cdots,256$; $d_{\ell=1, i}=0.1, i=1,\cdots,128$; $d_{\ell=1, i}=0.9, i=129,\cdots,256$; $d_{\ell=2, i}=0.1, i=1,\cdots,128$; $d_{\ell=2, i}=0.9, i=129,\cdots,256$
    
    \item Polynomial, High match High
    
    $r_1=0.9, r_i=0.9(1-\sqrt{\frac{i}{256}}), i\geq 2$. $d_{\ell=1, i}=0.9, i=1,\cdots,128$; $d_{\ell=1, i}=0.1, i=129,\cdots,256$; $d_{\ell=2, i}=0.9, i=1,\cdots,128$; $d_{\ell=2, i}=0.1, i=129,\cdots,256$
    
    \item Polynomial, Mixture
    
    $r_1=0.9, r_i=0.9(1-\sqrt{\frac{i}{256}}), i\geq 2$. $d_{\ell=1, i}=0.1, i=1,\cdots,128$; $d_{\ell=1, i}=0.9, i=129,\cdots,256$; $d_{\ell=2, i}=0.9, i=1,\cdots,128$; $d_{\ell=2, i}=0.1, i=129,\cdots,256$
    
    \item Polynomial, High match Low
    
    $r_1=0.9, r_i=0.9(1-\sqrt{\frac{i}{256}}), i\geq 2$. $d_{\ell=1, i}=0.1, i=1,\cdots,128$; $d_{\ell=1, i}=0.9, i=129,\cdots,256$; $d_{\ell=2, i}=0.1, i=1,\cdots,128$; $d_{\ell=2, i}=0.9, i=129,\cdots,256$
    
    \item Geometric, High match High
    
    $r_1=0.9, r_{256}=0.1$, $\{r_i\}_{i=1}^{256}$ is geometric, $r_i = 0.9 * (\frac{1}{9})^{\frac{i-1}{255}}$. $d_{\ell=1, i}=0.9, i=1,\cdots,128$; $d_{\ell=1, i}=0.1, i=129,\cdots,256$; $d_{\ell=2, i}=0.9, i=1,\cdots,128$; $d_{\ell=2, i}=0.1, i=129,\cdots,256$;
    
    \item Geometric, Mixture
    
    $r_1=0.9, r_{256}=0.1$, $\{r_i\}_{i=1}^{256}$ is geometric, $r_i = 0.9 * (\frac{1}{9})^{\frac{i-1}{255}}$. $d_{\ell=1, i}=0.1, i=1,\cdots,128$; $d_{\ell=1, i}=0.9, i=129,\cdots,256$; $d_{\ell=2, i}=0.9, i=1,\cdots,128$; $d_{\ell=2, i}=0.1, i=129,\cdots,256$;
    
    \item Geometric, High match Low
    
    $r_1=0.9, r_{256}=0.1$, $\{r_i\}_{i=1}^{256}$ is geometric, $r_i = 0.9 * (\frac{1}{9})^{\frac{i-1}{255}}$. $d_{\ell=1, i}=0.1, i=1,\cdots,128$; $d_{\ell=1, i}=0.9, i=129,\cdots,256$; $d_{\ell=2, i}=0.1, i=1,\cdots,128$; $d_{\ell=2, i}=0.9, i=129,\cdots,256$
\end{enumerate}
There are two kinds of consumption.
\begin{enumerate}
    \item Uncorrelated Consumption. When we pull an arm, the consumption and reward follow Bernoulli Distribution and are independent.
    \item Correlated Consumption. When we pull the arm $i$, the consumption is $D_{\ell=1, i}=\mathds{1}(U\le d_{\ell=1,i})$, $D_{\ell=2, i}=\mathds{1}(U\le d_{\ell=2,i})$, $R=\mathds{1}(U\le r_i)$, where $U$ follows uniform distribution on $[0,1]$
\end{enumerate}

\subsection{Detailed Setting of Real-World Dataset}
\label{sec:details-realworld}
We adopted K Nearest Neighbour, Logistic Regression, Random Forest, and Adaboost as our candidates for the classifiers. And we applied each combination of machine learning model and its hyper-parameter to each supervised learning task with 500 independent trials. We identified the combination with the lowest empirical mean cross-entropy as the best arm. 

Our BAI experiments were conducted across 100 independent trials. During each arm pull in a BAI experiment round—i.e., selecting a machine learning model with a specific hyperparameter combination—we partitioned the datasets randomly into training and testing subsets, maintaining a testing fraction of 0.3. The training subset was utilized to train the machine learning models, and the cross-entropy computed on the testing subset served as the realized reward. We flattened the 2-D image as a vector if the dataset is consists of images. All the experiments are deploied on the Kaggle Server with default CPU specifications. 

The details of the real-world datasets we used are as follows
\begin{itemize}
    \item To classify labels 3 and 8 in part of the MNIST Dataset. (MNIST 3\&8)
    
    Number of label 3: 1086, Number of label 8: 1017, Number of Atrributes: 784.

    Time budget: 60 seconds.

    Link of dataset: \url{https://www.kaggle.com/competitions/digit-recognizer}.
    \item Optical Recognition of Handwritten Digits Data Set. (Handwritten)

    Number of Instances: 3823, Number of Attributes: 64

    Time budget: 60 seconds.

    Link of dataset: \url{https://archive.ics.uci.edu/ml/datasets/optical+recognition+of+handwritten+digits}
    \item To classify labels -1 and 1 in the MADELON dataset. (MADELON)

    Number of Instances: 2000, Number of Attributes: 500.

    Time budget: 80 seconds.

    Link of dataset: \url{https://archive.ics.uci.edu/ml/datasets/Madelon}. 
    \item To classify labels -1 and 1 in the Arcene dataset. (Arcene)
    
    Number of Instances: 200, Number of Attributes:10000

    Time budget: 150 seconds.

    Link of dataset: \url{https://archive.ics.uci.edu/ml/datasets/Arcene} (Arcene)
    \item To classify labels of weight conditions in the Obesity dataset. (Obesity)

    Number of Instances: 2111, Number of Attributes: 16.

    Time budget: 20 seconds.
    
    Link of dataset:
    \url{https://archive.ics.uci.edu/dataset/544/estimation+of+obesity+levels+based+on+eating+habits+and+physical+condition}.
\end{itemize}

The machine learning models and candidate hyperparameters, aka arms, are as follows. We implemented all these models through the scikit-learn package in https://scikit-learn.org/stable/index.html
\begin{itemize}
    \item K Nearest Neighbour
    \begin{itemize}
        \item n\_neighbours = 5, 15, 25, 35, 45, 55, 65, 75
    \end{itemize}
    \item Logistic Regression
    \begin{itemize}
        \item Regularization = "l2" or None
        \item Intercept exists or not exists
        \item Inverse value of regularization coefficient= 1, 2
    \end{itemize}
    \item Random Forest
    \begin{itemize}
        \item Fix max\_depth = 5
        \item n\_estimators = 10, 20, 30, 50
        \item criterion = "gini" or "entropy"
    \end{itemize}
    \item Adaboost
    \begin{itemize}
        \item n\_estimators = 10, 20, 30, 40
        \item learning rate = 1.0, 0.1
    \end{itemize}
\end{itemize}

\end{document}